\def\kenn{KENN}
\def\Reals{\mathbb{R}}
\def\Naturals{\mathbb{N}}
\def\boost{TBF}
\def\bdelta{\bm\delta}
\def\clauses{\mathcal{K}}
\def\L{\mathcal{L}}
\def\C{\mathcal{C}}
\def\P{\mathcal{P}}
\def\I{\mathcal{I}}
\def\G{\mathcal{G}}
\DeclareMathOperator*{\argmax}{\arg\!max}
\DeclareMathOperator*{\softmax}{softmax}
\def\lit{t}
\def\blit{\mathbf{t}}
\def\bv{\mathbf{v}}
\def\bx{\mathbf{x}}
\def\by{\mathbf{y}}
\def\bz{\mathbf{z}}
\def\U{\mathbf{U}}
\def\B{\mathbf{B}}
\def\M{\mathbf{M}}
\newenvironment{TODO LIST}[1]{\color{red} #1:\begin{itemize}}{\end{itemize}}
\newcommand{\DONE}[1]{#1}
\DeclareFixedFont{\ttb}{T1}{txtt}{bx}{n}{9} 
\DeclareFixedFont{\ttm}{T1}{txtt}{m}{n}{9}  
\definecolor{deepblue}{rgb}{0,0,0.5}
\definecolor{deepred}{rgb}{0.6,0,0}
\definecolor{deepgreen}{rgb}{0,0.5,0}
\newcommand\pythonstyle{\lstset{
  language=Python,
  backgroundcolor=\color{white}, 
  basicstyle=\ttm,
  otherkeywords={self},            
  keywordstyle=\ttb\color{deepblue},
  emph={MyClass,__init__},          
  emphstyle=\ttb\color{deepred},    
  stringstyle=\color{deepgreen},
  commentstyle=\color{red},  
  frame=tb,                         
  showstringspaces=false            
}}
\title{Neural Networks Enhancement\\with Logical Knowledge}
\author{\name Alessandro Daniele \email daniele@fbk.eu \\
       \addr Fondazione Bruno Kessler\\ 
       Trento, Italy\\
       \AND
       \name Luciano Serafini \email serafini@fbk.eu \\
       \addr Fondazione Bruno Kessler\\ 
       Trento, Italy }
\begin{document}

\keywords{Neural-Symbolic Integration, Neural Networks, Fuzzy Logic, Relational Learning, Collective Classification}

\editor{}

\maketitle

\begin{abstract}
  
  In the recent past, there has been a growing interest in Neural-Symbolic Integration frameworks, i.e., hybrid systems that integrate connectionist and symbolic approaches to obtain the best of both worlds. In a previous work, we proposed KENN
  (Knowledge Enhanced Neural Networks), a Neural-Symbolic architecture that injects prior logical knowledge into a neural network by adding a new final layer which modifies the initial predictions accordingly to the knowledge. Among the advantages of this strategy, there is the inclusion of clause weights, learnable parameters that represent the strength of the clauses, meaning that the model can learn the impact of each clause on the final predictions. As a special case, if the training data contradicts a constraint, KENN learns to ignore it, making the system robust to the presence of wrong knowledge. In this paper, we propose an extension of KENN for relational data. To evaluate this new extension, we tested it with different learning configurations on Citeseer, a standard dataset for Collective Classification. The results show that KENN is capable of increasing the performances of the underlying neural network even in the presence relational data, outperforming other two notable methods that combine learning with logic.

\end{abstract}

\section{Introduction}
\label{sec:introduction}


In the last decade, Deep Learning approaches gained a lot of interest in the AI community, becoming the state of the art on many fields, such as Computer Vision~\citep{image_classification}, Machine Translation~\citep{machine_translation}, Speech Recognition~\citep{speech_recognition} and so forth. However, the main downside of such methods is that they are demanding in terms of training data.

On the other hand, human beings are capable of learning new concepts with few examples (Few Shot Learning) and even in some cases with zero experience (Zero Shot Learning). One of the main reasons is due to their ability to make use of previously acquired knowledge. In other words, the human brain is not just a randomly initialized model that learns from data, on the contrary, it often contains some sort of prior knowledge when approaching a new task.

Such knowledge could come for experience on different tasks. For instance, when learning to move around and avoid obstacles, humans can effectively learn the three-dimensional structure of the world by constructing an internal representation of it. When approaching a new task where visual data must be processed, they can make use of the previously learned representation, transferring the acquired knowledge into a new domain.

Another type of knowledge is the one provided by other human beings. For example, suppose we tell a kid: ``A unicorn is a horse with a horn in its forehead''. Although the kid has no previous experience of unicorns, he will be able to recognize them, for example when watching a movie. This ability to exploit provided knowledge is crucial for effectively learn new concepts when training data is scarce.

In this paper, we are going to focus on this second type of knowledge. While in the case of humans such knowledge is often in the form of natural language definitions, in the proposed method it is provided as a set of logical formulas.

This work extends our previous research on \kenn (\emph{Knowledge Enhanced Neural Network}), a method to inject knowledge into models for multi-label classification \citep{kenn}. While multi-label classification is an important topic in machine learning, the usage of knowledge is particularly relevant in the context of relational domains.
For this reason, we propose an updated version of \kenn\ which can deal with these kinds of domains.

Suppose we have a Neural Network for a classification task, which is called \emph{base NN}, that takes as input the feature vector $\bx \in \Reals^n$ and returns an output $\by \in [0,1]^m$ which contains the predictions for $m$ classes. A background knowledge $\clauses$ is also provided. $\clauses$ is defined as a set of clauses, i.e. disjunctions of literals, that represent constraints on the $m$ classes to be predicted. For instance, in an image classification task, the clause $\forall x \ (\neg Dog(x) \vee Animal(x))$, stating that dogs are animals, can be used by \kenn, in conjunction with the base NN, to predict the two labels $Dog$ and $Animal$. 

 Figure~\ref{fig:KENN_overview} shows a high-level overview of \kenn.

\begin{figure}[h]
	\centering
	\definecolor{predicates}{rgb}{0.9,0.2,0.2}
\definecolor{clauses}{rgb}{0.3,0.3,0.7}
\definecolor{inputs}{rgb}{0.1,0.6,0.2}
\definecolor{outputs}{rgb}{0.9,0.2,0.2}

\tikzstyle{block} = [rectangle, draw, fill=green!8, text centered, rounded corners, minimum height=2cm, minimum width=3cm, align=center, font=\fontsize{16}{0}\selectfont]
\tikzstyle{block2} = [rectangle, draw, fill=green!8, text centered, rounded corners, minimum height=2.4cm, minimum width=4cm, align=center, font=\fontsize{8}{0}\selectfont, node distance=1cm]
\scalebox{0.6}{
		\begin{tikzpicture}[scale=0.2, node distance=5cm, align=left]
		
		
		\node[
		draw=none,
		ultra thick,
		rounded corners,
		minimum height=2cm,
		minimum width=3cm,
		black!35,
		fill=black!5] (FI) at (-8,0){};
		\node[text centered, font=\fontsize{16}{0}\selectfont, align=center] at (-8, 0) {NN};

		\node[
		draw=none,
		ultra thick,
		rounded corners,
		minimum height=2cm,
		minimum width=3cm,
		inputs,
		fill=green!15] (LE) at (24,0){};
		\node[text centered, font=\fontsize{16}{0}\selectfont, align=center] at (24, 0) {KE};
		

			\node[font=\fontsize{22}{0}\selectfont] (x) at (-23,0) {$\bx$};
			
			\node[font=\fontsize{18}{0}\selectfont] (k) at (8,-12) {$\clauses$};
			
			\path[draw, ->] (x) -- (FI);
			
			\path[draw, ->] (FI) -- (LE)node[midway, fill=white, font=\fontsize{22}{0}\selectfont] {$\by$};
			
			\node[align=left, font=\fontsize{22}{0}\selectfont] (y) at (39,0) {$\by'$};
			\path[draw, ->, dashed] (k)  .. controls (18, -11) and (22, -8.5) ..  (LE);
			\path[draw, ->] (LE) -- (y);
		\end{tikzpicture}}

%
%
%
%
%
	\caption{KENN model: features are given as input to a neural network (NN) and predictions on predicates values are returned. Knowledge Enhancer modifies the predictions based on logical constraints ($\clauses$)}
	\label{fig:KENN_overview}
\end{figure}
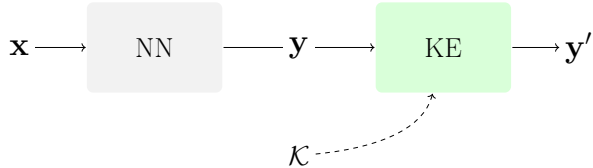

In KENN, the predictions $\by$ of the base NN are revised by a differentiable function, called \emph{Knowledge Enhancer} (KE), which updates $\by$ into $\by'$ to increase the truth value of each clause $c \in \clauses$. Since both base NN and KE are differentiable, the entire architecture is differentiable end-to-end, making it possible to apply back-propagation algorithm directly on the whole model. 

KE increases the satisfaction of each clause separately, obtaining $|\clauses|$ different vectors, each of which represents the changes to be applied on $\by$ to improve the satisfaction of a specific clause $c$. These changes are combined linearly to obtain the final change to be applied on the base NN's predictions.  
 More in details, for each clause $c\in\clauses$, the KE computes a soft differentiable approximation of a function called \emph{t-conorm boost function} (TBF). Intuitively, a TBF is a function \mbox{$\delta: [0,1]^k \to [0,1]^k$} that proposes the changes to be applied on a set of $k$ truth values to increase the value of the \mbox{t-conorm} applied on them: $\bot(\blit + \delta(\blit)) \geq \bot(\blit)$. Moreover, the KE contains additional learnable parameters that can be learned as well: for each clause $c \in \clauses$ there is an associated \emph{clause weight} $w_c$  which determines the strength of $c$, i.e., it defines the influence that the clause has on the final predictions.  Differently from other Neural-Symbolic integration approaches, clauses weights are not given, but they are learned. 
Moreover, by assigning zero to clause weights, \kenn\ can learn to ignore clauses in the Prior Knowledge that are not fully satisfied inside training data. In Section~\ref{sec:scatter} we will further analyze this ability of \kenn\ to learn the clause weights by inspecting their value after training.

In our previous work~\citep{kenn}, we tested \kenn\ on 
the Predicate Detection task of Visual Relationship Detection Dataset (VRD Dataset)~\citep{visual2} using a manually curated prior knowledge proposed by~\citep{IvanThesis}, outperforming previous state of the art results, with really good performances on the \emph{Zero Shot Learning} sub-task.
Moreover, \kenn\ outperformed Logic Tensor Networks, one of its major competitors, using the same knowledge.

However, notice that previous experimets have been applied only on multilabel classification problems with no relational data. Here we provide new experiments applied to test the efficacy of \kenn\ even in the context of relational domains. For this purpose, we applied \kenn\ on Citeseer~\citep{citeseer}, a standard dataset for Collective Classification~\citep{collective_classification}. The experiments on this dataset are particularly relevant since they provide some insight on the usage of \kenn\ on relational domains and they give us a comparison between \kenn\ and two other approaches: Semantic Based Regularization (SBR)~\citep{SBR} and Relational Neural Machines (RNM)~\citep{RNM}. The evaluation was conducted with two different learning paradigms of Collective Classification, namely Inductive and Transductive Learning. Moreover, the experiments were applied to different splits of the dataset to evaluate the three methods performances at the varying of the number of training samples.




\section{State of the Art}
\label{ch:rel_works}

Traditional supervised learning methods assume samples to be independent and identically distributed. However, in relational domains, the independence assumption is often violated since the relations encode the dependencies among different domain elements. 

For instance, we could be interested in finding out whether a person has smoking habits. In a classical machine learning approach, we could use only persons' features, such as age, geographical location, and so forth. However, if we know that his friends smoke, there could be a higher chance that he smokes as well. More generally, relational data can be described
as a graph composed of a set of nodes and edges, which represent the relations between nodes.

Exploiting relations between different entities has been the main focus of Statistical Relational Learning (SRL), a subfield of Machine Learning which aims at applying statistical methods in domains that exhibit both uncertainty and relational structure \citep{introduction_SRL}.





Central to SRL is the integration of logical knowledge in the learning framework. For instance, the previously mentioned friends-smoke relation could be provided by a human being in the form of a First Order Logic (FOL) rule:
$$
\forall x \ \forall y \ Smoker(x) \land Friends(x,y) \to Smoker(y)
$$
This kind of knowledge could act as a sort of supervision for the learning process. 
Additionally, the usage of knowledge could help to represent relational data inside a neural network, which is not a straightforward task since neural networks inputs are represented using tabular data (they can be vectors, matrices or in general tensors) \citep{NN_relational}.

Many previous works attempt to combine learning models with logical knowledge. Among them, there is Markov Logic Networks~\citep{MLN}, which uses weighted FOL rules as a template for building Markov Random Fields \citep{MRF}. The network represents grounded atoms as random variables and it defines a joint distribution over possible worlds, i.e. the possible assignments to all the grounded atoms. 
Such distribution is calculated as the product of a set of potential functions, each of which is derived from a logical formula in the knowledge base. 
In practice, worlds that satisfy more the knowledge (according to the weights of the rules) are associated with higher probability. A similar framework is \emph{Probabilistic Soft Logic} (PSL)~\citep{PSL} which uses a continuous relaxation of the variables to gain efficiency. 

Neither MLN, nor PSL can deal with real values features. To obviate at this limitation, an extension of MLN, called Hybrid Markov Logic Networks (HMLN), has been introduced in \citep{HMLN}. HMLN differs from MLN for its ability to represent continuous variables.

Generally speaking, SRL deals with the knowledge either by combining logic rules with probabilistic graphical models (as in the previously mentioned work) or by extending logic programming languages to handle uncertainty, like in the case of ProbLog \citep{problog}.


The recent achievements of Deep Learning methods lead to a renewed interest in another line of research, called Neural-Symbolic Integration, which focuses on combining neural network architectures with logical knowledge~\citep{neural_symbolic}.

Neural Networks are perfectly suited for pattern recognition, even in the presence of noisy data. They are particularly good at mapping low-level perceptions to more abstract concepts (for instance, going from images to classes). However, it is hard for a NN to reason with this high-level abstractions. Furthermore, NNs are demanding in terms of training data. On the other hand, pure logical approaches are not efficient at learning from low-level features and they struggle in the presence of noise. Nevertheless, they perform well on reasoning with highly abstract concepts and on learning from a small number of samples. Given these opposite strengths and weaknesses, it is not a surprise that a lot of interest has been drawn towards Neural-Symbolic systems. Indeed, the goal is to combine these two paradigms to obtain the best of the two worlds.

In this Section, we are going to introduce different works on Neural-Symbolic Integration, classifying them based on the type of tasks they can be applied to. According to this view, we can classify Neural-Symbolic Systems in three macro areas, corresponding on different objectives:
\begin{description}
    \item[Differentiable Reasoning (DR):] in this category the focus is on the development of differentiable approaches for reasoning, which can be used inside a neural network;
    \item[Inductive Logic Programming (ILP):] in this case, the goal is to learn Knowledge from the data, either from scratch or by refining an initial given Knowledge; 
    \item[Knowledge Guided Learning (KGL):] here the focus is on Learning and the Knowledge acts as additional supervision for the learning process. 
\end{description}

While some methods fall within multiple categories, the majority of them are focused only on one of the three aspects. We believe that a direct comparison between methods in different categories should be avoided since the tasks they can be used for are essentially different.
For this reason, in the next sections, we will introduce the various approaches dividing them accordingly with this categorization, with a major focus on the last class of methods (Knowledge Guided Learning), which is the topic of this paper.

\subsection{Differentiable Reasoning methods}
Deductive reasoning can be defined as the process of producing logical consequents from initial knowledge. DR methods aim at building architectures that integrate deductive reasoning with neural network models by defining inference rules in terms of differentiable functions.

Early proposals, like Knowledge-Based Artificial Neural Networks (KBANN), codify the logical knowledge into the weights of a neural network~\citep{KBANN}. 
In KBANN~\citep{KBANN}, background knowledge consists of a set of logical implications. Such implications are reorganized to obtain a hierarchical structure, where higher-level propositions can be inferred by lower-level ones. From this hierarchy, an initial Neural Network is created. Hidden layers are then added together with missing edges, which receive initial weights close to zero. 

A similar approach, called C-IL$^2$P is proposed in~\citep{CILP}. In this case, the underlying Neural Network is recursive. Both KBANN and C-IL$^2$P can refine the initial background knowledge based on the training samples. However, they are both restricted to propositional logic. This is a strong limitation, in particular in relational contexts since the relations can not be efficiently represented as propositions. To this end, CILP++ was proposed~\citep{CILP++}. CILP++ starts from a First Order Logic Knowledge base and, through the usage of propositionalization, generates a C-IL$^2$P network.

More recent approaches consist of embedding logical knowledge into real-valued tensors. A first attempt in this direction was proposed in \citep{low_dimensional}, where domain elements are represented as real-valued vectors and predicates as matrices. The method deals with simple FOL implications rules of the form $A \to B$, where $A$ and $B$ are atomic formulas. An example coming from the paper is the rule $\forall x \ \forall y \ profAt(x, y) \to worksFor(x, y)$.
The implication is represented as a $2 \cross 2 \cross 2$ tensor. Since the entire architecture is differentiable, one can learn the embeddings of predicates and terms from an incomplete knowledge using back-propagation in order to maximize the satisfaction of the initial background knowledge, and then use the learned embeddings to predict truth values for the unknown facts.

Another technique that relies on tensorization of the knowledge and maximization of the satisfiability is Logic Tensor Networks (LTN) \citep{LTN}. This time, the satisfaction of logical rules is calculated in a fuzzy logic semantic. Although LTN was mainly used in the KGL context (learning in the presence of constraints), it has been shown that it is indeed capable of tackling reasoning tasks \citep{LTN_reasoning}.
Another method that defines reasoning as a maximum satisfiability problem is SATNET \citep{satnet}, which is a differentiable solver for a smooth relaxation of MAXSAT.

A further approach involves defining continuous and differentiable relaxations for standard inference techniques used in logic programming \citep{garcez_nesy}. There are two main techniques derived from pure symbolic approaches that are used in the context of Neural-Symbolic Integration: forward and backward chaining.

In both cases, the knowledge is a logic program.
In forward chaining, the inference is carried out by matching known facts with the body of an implication rule to derive new facts (corresponding to the head of the rule). In backward chaining the order is inverted: starting from a goal (an atomic formula that has to be proved), the inference goes down in reverse by matching the goal with the head of a rule. The literals in the body of such rule become new goals and the process is applied again recursively.

Neural Logic Machines are based on forward chaining \citep{NLM}, while TensorLog \citep{tensorlog}, Neural Theorem Prover (NTP)~\citep{NTP1,NTP2}, and DeepProbLog~\citep{deepproblog} are based on backward chaining.



\subsection{Inductive Logic Programming methods}
Given a background knowledge $\clauses$ and a set of evidences $\mathcal{E}$, the goal of ILP is to find a hypothesis $\mathcal{H}$ such that $\clauses$ and $\mathcal{H}$ together entail $\mathcal{E}$. Traditional symbolic ILP methods are very efficient in learning from a small number of samples, in contrast with neural networks which are demanding in terms of training data. On the other hand, neural networks are robust to noisy data, as opposed to traditional ILP methods. Again, the two paradigms are complementary, and combining them could bring enormous advantages.

Some methods in the DR categories can also be applied in the ILP context. Among them, Neural Logic Programming (Neural LP) is a method that relies on tensorlog. Using an attention mechanism, Neural LP is able to softly select tensorlog operations \citep{tensorlog_ILP}, learning in this way new rules. Also NTP can be used in the ILP context: since the rules are represented through learnable embeddings, it is possible to add rules templates, i.e. rules with a predefined structure that contains learnable embeddings \citep{NTP2}. However, it has been shown that this kind of approach fails at learning non-trivial rules \citep{NTP_do_not}. The problem arises from the greedy choice made by NTP when selecting the proof (it selects the one with the highest score), which makes the final result strongly dependent on the initialization. To obviate this problem, in \citep{NTP_do_not} a new version of NTP is proposed. This new version has an increased ability to explore the space of proofs and, as a consequence, a reduced probability of finding a local minimum.

Other two methods that rely on rule templates are $\delta ILP$ \citep{dilp}, and \citep{rule_induction}. Different from NTP, they make use of forward chaining. Moreover, in \citep{dilp_reinforcment} it has been shown that $\delta ILP$ can be also applied to reinforcement learning tasks.



In \citep{NLRL}, Neural Logic Rule Layer (NLRL) is proposed. It consists of a NN layer that is capable, trough the usage of gates, of representing simple AND/OR rules. By stacking multiple layers, the model can represent any propositional logic rule.

\subsection{Knowldge Guided Learning methods}
\label{KGL}


In the case of KGL solutions, the focus is on augmenting the learning of a standard NN (typically in a Supervised Learning task) by providing some Prior Knowledge expressed in terms of logical formulas. 



There are mainly two approaches for learning in the presence of Prior Knowledge: the first consists of treating logical rules as constraints on the predictions of the Neural Network. The problem is reduced to maximize the satisfiability of the constraints and can be efficiently tackled by adding a regularization term in the Loss function which penalizes the unsatisfaction of the constraints. The second approach is to modify the neural network by injecting the knowledge into its structure.


\subsubsection{Regularization approaches}

Two notable examples of this approach are \emph{Logic Tensor Network} (LTN)~\citep{LTN} and \emph{Semantic Based Regularization} (SBR)~\citep{SBR}. Both methods maximize the satisfaction of the constraints, expressed as FOL formulas, under a fuzzy logic semantic. As mentioned above, LTN can be also classified as a DR approach. Although it was shown to be able to perform reasoning, the greater achievements of this model have been obtained in the KGL paradigm, in particular in the context of image interpretation \citep{IvanThesis,DonadelloSG17}. 
A similar strategy is used also by \emph{Semantic Loss Function}~\citep{semantic_loss}, but instead of relying on fuzzy logic, it optimizes the probability of the rules being true. Nevertheless, this approach is restricted to propositional logic. \citep{dl2} introduces DL2 which is also a method that includes logical knowledge through the Loss function. Nonetheless, it can be used only in the context of regression tasks, where the predicates correspond to comparison constraints (e.g. $=$, $\neq$, $\leq$) between terms that can be either outputs of the network or constants. \citep{adversarially} also proposes a method that regularizes the Loss, but they focus on a specific task of Natural Language Processing. Their approach differs from the others because it makes use of adversarial examples to calculate the regularization term: first, it generates samples that maximize the unsatisfaction of the constraints, then it optimizes the NN to increase their satisfaction. Finally, in \citep{semi_supervised} it is again proposed a regularization technique. This time, the task is Semi-Supervised Learning, and the regularization term is calculated from the unlabelled data.

In~\citep{harnessing}, a distillation mechanism is used to inject FOL rules: a student network is trained both on predicting the correct labels and emulating a teacher network; the teacher network is obtained by reducing the KL-Divergence with the student network and at the same time optimizing the satisfaction of the rules. While this approach seems different from the previous ones, we argue that it can be also considered as a method based on regularization, since the teacher network (which encodes the rules), is used to regularize the Loss applied on the student network.

Finally, in~\citep{old_dog}, a new technique is defined which adds the knowledge at the level of the labels by modifying them after each training step. The approach is quite general, but it does not provide any guarantees on the convergence. Again, even if this method is different from the previous ones, the effect on learning should be similar: in the previous cases, the Loss is combined with a regularization term to find a balance between fitting the correct labels and satisfying the knowledge; here, this is done indirectly by modifying the labels to increase the satisfaction of the knowledge.

\subsubsection{Model based approaches}

\kenn, the proposed approach in this paper, works pretty differently from the methods seen so far. The main difference lies in the way logic formulas are used: in \kenn\ they become part of the classifier instead of being enforced during training. Previously mentioned approaches based on regularization force the constraints satisfaction during training making the assumption that the knowledge is in general correct. Instead, here we assume there is a relationship between clauses and correct results, but this relationship is not known.
The logical constraints are seen as a prior belief rather than prior knowledge. More in detail, \kenn\ has internal learnable parameters associated with the logic formulas. 
This makes \kenn\ suitable for scenarios where the given knowledge contains errors or when rules are softly satisfied in the real world but it is not known in advance the extent on which they are correct. The differences between \kenn\ and methods that inject logical knowledge through the Loss function will be further analyzed in Section~\ref{sec:rel_work_2}.

Another approach that directly encodes the knowledge into the structure of the model is provided by Li and Srikumar who recently proposed a method that codifies the logical constraints directly into the neural network model~\citep{augmenting}. However, they restrict the rules to implications with exactly one consequent and they do not provide the possibility to learn clause weights, which in their system are added as hyper-parameters. 

Going in the same direction, \citep{RNM} proposed Relational Neural Networks (RNM). RNM can be also inserted in the set of approaches that add the logic directly into the model and, as the best of our knowledge, it is the only method other than \kenn\ which is capable of integrating logical knowledge with a neural network while learning the clause weights. However, it requires to solve an optimization problem at inference time and during each training step. The main advantage of \kenn\ over RNM is scalability, since \kenn\ does not need to perform optimization. A more detailed comparison between \kenn\ and RNM in Section~\ref{sec:rel_work_2}.

\newcounter{barcounter}

\newenvironment{mygraph}[3]{
    \setcounter{barcounter}{0}

    \def\posx{#1}
    \def\posy{#2}

    \foreach \i in {0.1,0.2,0.3,0.4,0.5,0.6,0.7,0.8,0.9,1.0}
    {
        \draw[ultra thin, fill=gray!0] (#1-0.1,#2+4*\i) -- (#1+6,#2+4*\i);
        \node[draw=none] at (#1-0.6,#2+4*\i) {\i};
    }

    \newcommand{\mybar}[2]{
        \node[
            draw=none,
            fit={(#1+0.2+\value{barcounter}*1.2,#2) (#1+\value{barcounter}*1.2+1.2,#2+##1*4)},
            inner sep=0,
            outer sep=0,
            fill=gray!35] (#3B\arabic{barcounter}) {};

        \node[rotate=45,left] at (#1+\value{barcounter}*1.2+0.9,#2-0.1) {##2};

        \stepcounter{barcounter}
    }

    \newcommand{\mybard}[3]{
        \node[draw=none,fit={(#1+0.2+\value{barcounter}*1.2,#2+##1*4) (#1+\value{barcounter}*1.2+1.2,#2+##1*4+##2*4)}, inner sep=0, outer sep=0, draw=none, fill=green!20] (#3D\arabic{barcounter}) {};

        \mybar{##1}{##3}

    }

    \newcommand{\mylastbar}[2]{
        \node[draw=none,fit={(#1+0.8+\value{barcounter}*1.2,#2) (#1+\value{barcounter}*1.2+1.8,#2+##1*4)}, inner sep=0, outer sep=0, draw=none, fill=gray!35] (#3Last) {};

        \node[rotate=45,left] at (#1+\value{barcounter}*1.2+1.4,#2-0.1) {##2};
    }

    \newcommand{\mylastbard}[3]{
        \node[draw=none,fit={(#1+0.8+\value{barcounter}*1.2,#2+##1*4) (#1+\value{barcounter}*1.2+1.8,#2+##1*4+##2*4)}, inner sep=0, outer sep=0, draw=none, fill=green!20] (#3LastD) {};

        \mylastbar{##1}{##3}
    }
}
{   
    \draw (\posx,\posy) -- (\posx+6,\posy);
    \draw[->] (\posx,\posy) -- (\posx,\posy+4.5);  
}

\section{Prior Knowledge}
\label{sec:prior_knowledge}
We define the Prior Knowledge in terms of formulas of a function-free first order language $\L$. Its signature is defined with a set of constants $\C \triangleq \{a_1, a_2, ... a_l \}$ 
and a set of predicates $\P \triangleq \{P_1, P_2 ... P_q \}$.

Unary predicates can be used to express properties of singular objects. For instance, to represents that a person $a \in \C$ is a smoker, we can use $Smoker(a)$, where $Smoker \in \P$ is a predicate with arity one. Predicates with higher arity can express relations among multiple objects in the domain, e.g. $Friends(a,b)$ states that person $a$ is a friend of $b$.

\noindent The Prior Knowledge is defined as a set of clauses: $\clauses \triangleq \{ c_1, c_2, ... c_r \} $. A clause is a disjunction of literals, each of which is a possibly negated atom:
	$$
	c \triangleq \bigvee\limits_{i=1}^{k} l_i
	$$
\noindent where $k$ is the number of literals in $c$ and $l_i$ is the $i^{th}$ literal. We assume that there are no repeated literals.

As an example, the clause $\lnot Smoker(x) \lor \lnot Friends(x,y) \lor Smoker(y)$ states that if a person $x$ is a smoker and he is a friend of another person $y$, then $y$ is also a smoker.

Notice that the previous clause has no constants and the two variables $x$ and $y$ in it are assumed to be universally quantified. This is because we are interested in representing general knowledge, and in the following, we will always apply this assumption implicitly to each clause in our knowledge.


We define the grounding of a clause $c$, denoted by $c[x/a, y/b ...]$, as the clause obtained by substituting all of its free variables with the corresponding constant symbol. For instance, if we take into consideration two specific persons $a$ and $b$, then
$$
(\lnot Smoker(x) \lor \lnot Friends(x,y) \lor Smoker(y))[ x/a,y/b ]$$
is equivalent to
$$
\lnot Smoker(a) \lor \lnot Friends(a,b) \lor Smoker(b)
$$

We will denote with $\G(c, \C)$ the set of all the groundings of a clause $c$ and with $\G(\clauses, \C)$ the set of all the grounded clauses.


\subsection{Semantic of $\L$}
\label{sec:semantic}


The semantic of $\L$ is defined in terms of an intepretation $\I$, which is a function that maps terms and grounded atoms to real values. More in details, constant symbols $o \in \C$ of $\L$ are interpreted as real-valued vectors and atoms as values in $[0,1]$. 

Intuitively, the vectors associated to constants represent the features of the corresponding objects in the real world, and they are given as input, while the truth values of atomic formulas are provided by a neural network. Since we are dealing with neural network predictions, which returns continuos values, we can not make use of classic logic. For this reason we rely on fuzzy logic, where the truth values of atoms are mapped to values in the range $[0,1]$. 

Formally, given a list of constants $o_i \in \C$, and a predicate $P \in \P$ with ariety $n$, an intepretation $\I$ is a function that satisfies the following properties: 

\begin{itemize}
	\item $\I(o_i) = \bx_i$, with $\bx_i \in \Reals^l$
	\item $\I(P) = \Reals^{nl} \to [0,1]$, where $\I(P)$ is a function which takes as input the interpretation (features) $\I(o_1), \I(o_2) \dots \I(o_n)$ of $n$ constant symbols and returns the truth value of $P(o_1,o_2 \dots o_n)$.
\end{itemize}







\subsection{T-conorm functions}
\label{sec:fuzzy}

The previous definition of interpretation provides the semantic for terms and atoms. We need to specify how to interpret clauses, i.e., define the interpretation of negation and disjunction operators.

In fuzzy logic, a standard way to calculate the truth value of a negated atom is the Lukasiewicz negation: $\I(\lnot A) = 1 - \I(A)$.
The truth value of a disjunction of literals is computed by a t-conorm function, which maps the truth values (expressed in the range $[0,1]$) of two literals to the truth value of their disjunction.

\begin{definition}
	A t-conorm $\bot: [0,1] \times [0,1] \to [0,1]$ is a binary function which satisfies the following properties:
	\begin{enumerate}
		\item $\bot(a,b) = \bot(b,a)$
		\item $\bot(a,b) \leq \bot(c,d)  \ \ if \ a \leq c \ and \ b \leq d $
		\item $\bot(a, \bot(b,c)) = \bot(\bot(a,b),c$)
		\item $\bot(a,0)=a$
	\end{enumerate}
	
\end{definition}

\noindent In the following, we represent a t-conorm as a unary function over vectors ($\blit = \left< \lit_1, \lit_2 ... \lit_n \right>$):
$$
\bot(\blit) = \bot(\lit_1, \bot(\lit_2, \bot(\lit_3 ... \bot(\lit_{n-1},\lit_n))))
$$

Given the interpretation of the constant symbols, the base NN 
returns the interpetation for the atomic formulas.  The KE should take as input the intepretation provided by the base NN
and return a new interpretation with increased satisfaction of the contraints, i.e., with higher truth values of the grounded clauses, which in turns corresponds to increase the t-conorm value.

We have now all the elements to intepret all the sentences in $\L$: the intepretation $\I$ defined in Section~\ref{sec:semantic} is implicitly extended to the clauses by the following rules:
\begin{itemize}
	\item $\I(\lnot A) = 1 - \I(A)$
	\item $\I(c) = \bot(\I(l_1),\I(l_2),\dots,\I(l_k))$ 
\end{itemize}
where $A$ is an atomic formula and $c=\bigvee_{i=1}^k l_i$ a clause.

\def\X{\mathcal{X}}
\def\INN{\I_{\mathrm{NN}}}
{
  Suppose that we have a dataset with features $\X=\{\bx_i\}$.
    Let NN be a neural network that takes in input $\X$ and produces in output
    a vector $\by\in[0,1]^m$ that contains an element $y_A$ for every ground atom $A$
    of the language.  The base NN, together with the input $\X$, defines an interpretation
    $\INN$, where $\INN(A) = y_A$. W.l.o.g we can assume that $\bm y$ contains also
    an element for every negative literal $\neg A$, denoted by $y_{\neg A}$, whose value is  $1-y_A$.
    For every clause $c=\bigvee_{i=1}^k l_i$, if $\by_c$ denotes the vector
    $(y_{l_1},\dots,y_{l_k})$, then the truth value predicted by NN for a clause $c$ is $\bot(\by_c)$.
    The intuition at the base of KENN is to add a layer on top of NN that modifies $\by$
    into $\by'$ in order to improve the truth value of each single clause $c\in\clauses$. 
    However such a change should be minimal, i.e, we have to improve the value of the clauses keeping minimal the
    change on the predictions expressed in term of the norm $||\by-\by'||$.}

\section{Clause Enhancement}
\label{sec:kenn}

In this section, we will focus on the enhancment of a single clause: given the truth values $\blit$ of the clause's literals, we want to produce a new assignment for $\blit$ that increase the t-conorm value. In order to achieve this goal, we will define a new class of functions, called \emph{t-conorm boost functions}. Later, we will extend the results of this section with the enhancement of the entire knowledge.  

\subsection{t-conorm boost functions}
\label{TBFS} 

\begin{definition}
	A function $\delta: [0,1]^n \to [0,1]^n$  is a \emph{t-conorm boost function} (\boost) if:
	$$\forall n \in \Naturals \ \ \forall \blit \in [0,1]^n \ \ 0 \leq \lit_i + \delta(\blit)_i \leq 1$$
\end{definition}
\noindent Let $\Delta$ denote the set of all \boost s. 

\begin{proposition}
For every t-conorm $\bot$ and every \boost\ $\delta$,
	$
	\bot(\blit) \leq \bot(\blit + \delta(\blit))
	$
	\begin{proof}
		By definition of \boost, 
		\mbox{$\forall i \in [1,n]$, $\lit_i \leq \lit_i + \delta(\blit)_i$};
		the conclusion directly follows from Property 2. of t-conorms.
	  \end{proof}
\end{proposition}

\begin{figure}
		\centering
		\begin{tikzpicture}[thick,scale =1,every node/.style={scale=1}]


    \begin{mygraph}{0}{0}{G1}
        \mybard{0.6}{0.1}{$\lnot Smoker(a)$}
        \mybard{0.1}{0.2}{$\lnot Friends(a,b)$}
        \mybard{0.3}{0.5}{$Smoker(b)$}
        \mylastbard{0.6}{0.2}{$\bot(c(a,b))$}

    \end{mygraph}

    \begin{mygraph}{8}{0}{G2}
        \mybard{0.6}{0.2}{$\lnot Smoker(a)$}
        \mybar{0.1}{$\lnot Friends(a,b)$}
        \mybar{0.3}{$Smoker(b)$}
        \mylastbard{0.6}{0.2}{$\bot(c(a,b))$}

    \end{mygraph}

\end{tikzpicture}










		\caption{Effect of \boost s on base NN predictions for the grounded clause $c(a,b): \lnot Smoker(a) \lor \lnot Friends(a,b) \lor Smoker(b)$, where predicate $Friends$ is true when the two persons are friends. The gray areas represent the original predictions of the base NN while the green ones the changes applied by the \boost . The last bar represents the truth value of $c$ under the G\"{o}del t-conorm, which is increased in both cases. Indeed, both graphs shows a \boost. The improvements on the left are not minimal, since it is possible to reach the same improvement with smaller changes. On the contrary, the \boost \ on the right is minimal for the G\"{o}del t-conorm.}
		\label{fig:truth_chart}
\end{figure}
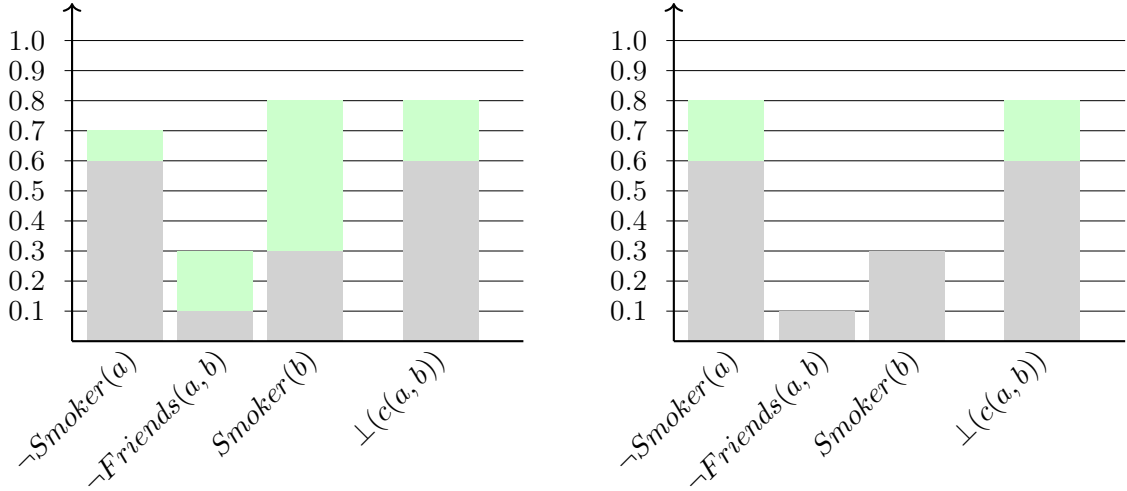

\boost s are used in the KE to update the initial predictions $\by$ done by the base NN. While there are infinite many \boost s, not all of them can be used for our purposes. Consider for example the function $\delta(\blit) = \mathbf{1} - \blit$: this of course is a \boost \ and it makes the t-conorm completely true for every possible NN predictions. 
Although the constraint reaches its maximum satisfaction, such a function returns a constant value for each literal (${\forall i \ t_i + \delta(\blit)_i = 1}$) which is useless for our purposes since the NN's predictions are not taken into account. We want to find a balance between the NN's predictions and the satisfaction of the clause: the \boost \ should keep the change on the initial predictions as minimal as possible. Therefore we look at \boost's that improve the  t-conorm value \emph{in a minimal way} so that it is not possible to obtain a higher improvement with smaller modifications on literals values. An example of this idea can be seen in Fig.~\ref{fig:truth_chart}.
We now define the concept of minimality for a \boost s.

\begin{definition}
	A function $\delta \in \Delta$ is minimal with respect to a norm $\|\cdot\|$ and a t-conorm $\bot$ iff:
	\begin{equation}
    	\begin{split}
    	&\forall \delta' \in \Delta \ \ \forall n \in \Naturals \ \ \forall \blit \in [0,1]^n \\
    	&\| \delta'(\blit)  \| < \| \delta(\blit) \| \to \bot(\blit + \delta'(\blit)) < \bot(\blit + \delta(\blit))
    	\end{split}
	\end{equation}
\end{definition}

\noindent Minimal TBF is defined with respect to a given norm  $\|\cdot\|$ and a t-conorm $\bot$. We are going to focus on G\"{o}del t-conorm which is defined as
$$
\bot({\blit}) = \max_{i=1}^n(\lit_i)
$$
and $l_p$-norm:
$$
\|\blit \|_p = \left( \sum_{k=1}^{n} | \lit_k |^p \right)^{1/p}
$$
\noindent
Notice that G\"{o}del t-conorm value depends only on the highest literal in the clause. Since we are interested in the minimal change in the predictions it seems reasonable to increase only the highest truth value.


For any function $f:{\mathbb R}^n\rightarrow {\mathbb R}$
we define $\delta^{f}: {\mathbb R}^n\rightarrow {\mathbb R}^n$ as
\begin{equation}
\delta^f(\blit)_i = 
\left\{
  \begin{array}{ll}
    \label{eq:delta-f}
f(\blit) \ \ \ & \mbox{if $i = \argmax_{j=1}^n \lit_j$} \\
0 & \mbox{otherwise}
\end{array}
\right.
\end{equation}

\begin{theorem}
If $f:[0,1]^n\rightarrow[0,1]$ satisfies \mbox{$0 \leq f(\blit) \leq 1 - \max_{j=1}^n\lit_j$}, then $\delta^f$ function is minimal \boost s for the G\"{o}del t-conorm and $l_p$-norm.
\label{th:minimal_TBF}
\end{theorem}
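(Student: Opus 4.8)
The plan is to discharge two obligations in turn: first that $\delta^f$ genuinely lies in $\Delta$, i.e.\ is a \boost, and then that it satisfies the minimality condition for the G\"{o}del t-conorm under the $l_p$-norm. For membership, I would argue componentwise. Writing $i^\star = \argmax_{j} \lit_j$ and $M = \max_{j}\lit_j$, the only coordinate that $\delta^f$ perturbs is $i^\star$: for every $i \neq i^\star$ we have $\lit_i + \delta^f(\blit)_i = \lit_i \in [0,1]$ trivially, and for $i = i^\star$ the hypothesis $0 \leq f(\blit) \leq 1 - M$ gives $\lit_{i^\star} + f(\blit) = M + f(\blit) \in [M,1] \subseteq [0,1]$. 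This is the routine part.

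The substance is minimality, and here I would first record two explicit evaluations. Because boosting the maximal coordinate by the nonnegative amount $f(\blit)$ keeps it maximal, we get $\bot(\blit + \delta^f(\blit)) = M + f(\blit)$; and since $\delta^f(\blit)$ has a single nonzero entry equal to $f(\blit)\geq 0$, its norm collapses to $\|\delta^f(\blit)\|_p = f(\blit)$. Thus the antecedent of the minimality implication, $\|\delta'(\blit)\|_p < \|\delta^f(\blit)\|_p$, reads simply $\|\delta'(\blit)\|_p < f(\blit)$, and the goal becomes $\bot(\blit + \delta'(\blit)) < M + f(\blit)$ for an arbitrary competitor $\delta' \in \Delta$.

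The core step is to pass from the aggregate $l_p$-norm bound to a uniform bound on each coordinate of $\delta'$. For $p \geq 1$ one has $\max_k |\delta'(\blit)_k| = \|\delta'(\blit)\|_\infty \leq \|\delta'(\blit)\|_p$, so the antecedent forces $|\delta'(\blit)_j| < f(\blit)$ for every $j$. Combining this with $\lit_j \leq M$ yields, coordinatewise, $\lit_j + \delta'(\blit)_j \leq \lit_j + |\delta'(\blit)_j| < M + f(\blit)$; since this is a strict bound holding uniformly over a finite index set, taking the maximum over $j$ preserves it, giving $\bot(\blit + \delta'(\blit)) < M + f(\blit) = \bot(\blit + \delta^f(\blit))$, which is exactly the minimality condition.

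The only place that needs genuine care is this last norm-domination step, which is precisely why the theorem is stated for the $l_p$ family rather than an arbitrary norm: it is the inequality $\|\cdot\|_\infty \leq \|\cdot\|_p$ that converts a bound on the total change into a bound on each literal's change, and hence lets us control the G\"{o}del $\max$. I would also flag two harmless edge cases to keep the argument airtight: when $f(\blit)=0$ the antecedent $\|\delta'(\blit)\|_p < 0$ is unsatisfiable, so the implication holds vacuously; and when several literals tie for the maximum, the particular index selected by $\argmax$ does not change the value $M + f(\blit)$, so the computation of $\bot(\blit + \delta^f(\blit))$ and the whole argument are unaffected.
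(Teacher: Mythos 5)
Your proof is correct and takes essentially the same route as the paper's: both hinge on extracting a per-coordinate bound from the norm hypothesis ($|\delta'(\blit)_j| \leq \|\delta'(\blit)\|_p < \|\delta^f(\blit)\|_p = f(\blit)$ — the paper applies this to the single coordinate achieving the post-change maximum, while you bound all coordinates at once via $\|\cdot\|_\infty \leq \|\cdot\|_p$ and take the max), then compare against $\bot(\blit + \delta^f(\blit)) = \max_j \lit_j + f(\blit)$. If anything your write-up is marginally more careful than the paper's, since you keep absolute values explicit (the paper writes $\delta(\blit)_j = (|\delta(\blit)_j|^p)^{1/p}$, tacitly assuming $\delta(\blit)_j \geq 0$, though the chain survives with $\leq$ regardless) and you explicitly dispose of the vacuous case $f(\blit)=0$ and of argmax ties, which the paper leaves implicit.
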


\begin{figure}
		\centering

\scalebox{0.5}{
\begin{tikzpicture}[align=left,font=\fontsize{20}{0}\selectfont]

\node[
draw=none,
font=\fontsize{25}{0}\selectfont
] at (5.5, 13) {$\lnot Smoker(x) \lor Cancer(x)$};

\node[
draw=none, 
rectangle,
fill=green!15,
minimum height=6.5cm,
minimum width=6.5cm] at (3.25,3.25){};



\node[
draw,
circle,
blue!30,
fill=blue!10,
dashed,
minimum height=2.97cm] at (3,5){};

\path[draw, ->] (0,0) -- (0,12);
\path[draw, ->] (0,0) -- (12,0);

\node[
draw=none,
font=\fontsize{22}{0}\selectfont
] at (-2.1, 11.4) {$\lnot Smoker(x)$};

\node[
draw=none,
font=\fontsize{22}{0}\selectfont
] at (12.2, -0.7) {$Cancer(x)$};

\path[draw, black!30, dashed] (0,5) -- (3,5);
\path[draw, black!30, dashed] (3,0) -- (3,5);
\path[draw, thick, ->] (0,0) -- (3,5);
\path[draw, ->, thick, blue!50] (3,5) -- (3,6.5);
\node[] at (-0.6, 10) {1};
\node[] at (10, -0.6) {1};
\path[draw] (-0.2,10) -- (0.2,10);
\path[draw] (10,-0.2) -- (10,0.2);

\path[draw, black!30, dashed] (0,6.5) -- (3,6.5);
\node[] at (-0.95, 6.5) {0.65};
\path[draw] (-0.2,6.5) -- (0.2,6.5);

\node[] at (-0.8, 5) {0.5};
\node[] at (3, -0.6) {0.3};
\path[draw] (-0.2,5) -- (0.2,5);
\path[draw] (3,-0.2) -- (3,0.2);

\node[] at (-0.6,-0.6) {O};
\node[] at (2,2.5) {$\blit$};
\node[] at (4,5.75) {$\delta^f(\blit)$};


\end{tikzpicture}}
		\caption{Geometric intuition for minimality of $\delta^f$: an example with $\blit=[0.3,0.5]$ (black line). The blue line represents the change applied by $\delta^f$ while the blue circle contains the set of points $\blit + \mathbf{k}$ with $\| \mathbf{k} \|_2 < \| \delta^f(\blit)\|_2$. The green area is the set of all point $\blit + \mathbf{k}$ with $\bot(\blit+ \mathbf{k}) < \bot(\blit+\delta^f(\blit))$. The blue area is completely inside the green square, i.e. $\delta^f(\blit)$ is the minimal change we could apply for incrementing the t-conorm of 0.15.}
		\label{fig:proof}
\end{figure}
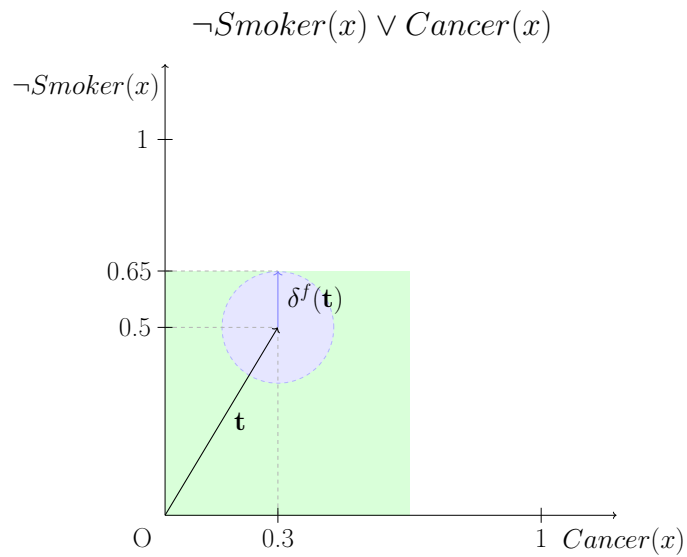
Fig.~\ref{fig:proof} shows a geometric interpretation for Theorem~\ref{th:minimal_TBF}. In the example, it is shown a specific case: the initial truth values for the two literals $\lnot Smoker(x)$ and $Cancer(x)$ are $0.5$ and $0.3$ respectively. The G\"{o}del t-conorm value is $0.5$, since it is the highest value. To be minimal, a \boost \ must increase the value of the t-conorm with the smallest possible change, meaning that it is not possible to reach the same improvement on the t-conorm with smaller changes. The blue circle contains the set of points that apply a smaller change to $\blit$ than $\delta^f$ (in this case, we consider the $l_2$-norm). The picture shows that for all those points the G\"{o}del t-conorm is smaller than $0.65$, the value obtained by using $\delta^f$.

Of course, this does not represent a proof for the minimality of $\delta^f$, since it is applied to a specific set of truth values and it considers only $l_2$-norm. We need to prove it formally.

\begin{proof}
It is easy to see that $\delta^f(\blit)$ is a \boost: the function always returns values inside the range $[0,1]$ and even the final predictions are in this range. We need to prove that it is minimal. Suppose that $\delta\in \Delta$ is such that 
	$$
	\| \delta(\blit) \|_p < \| \delta^f(\blit) \|_p
	$$
	If $j =
        \argmax_{k=1}^n(\lit_k + \delta(\blit)_k)$, 
	we can derive:
	$$
	\bot(\blit + \delta(\blit)) = \lit_j + \delta(\blit)_j
	$$
	and, if $i = \argmax_{k=1}^n\lit_k$, we have that 
	$$
	\bot(\blit + \delta^f(\blit)) = \lit_i + f(\blit)
	$$
\noindent
Since $\lit_i\geq \lit_j$, we just need to demonstrate that \mbox{$\delta(\blit)_j < f(\blit)$}. Notice that: 
	\begin{align*}
	\delta(\blit)_j &= {(|\delta(\blit)_j|^p)}^{1/p} \leq \left( \sum_{k=1}^{n} | \delta(\blit)_k |^p \right)^{1/p}
	= \| \delta(\blit) \|_p < \| \delta^f(\blit) \|_p
	\end{align*}		
Since $\delta^f(\blit)$ changes only the value of the $i^{th}$ component of $\blit$, \mbox{$\|\delta^f(\blit)\|_p = f(\blit)$}.
\end{proof}

\section{Boosting preactivations}
\label{preact}
In previous sections, we proved that, with an appropriate choice of function $f$, $\delta^f$ is a minimal \boost. However, we still need to choose a function $f$ suitable for our purposes. 
Notice that we have to respect the constraint that the final predictions remain in $[0,1]$. Indeed, Theorem~\ref{th:minimal_TBF} states that \mbox{$0 \leq f(\blit) \leq 1 - \max_{j=1}^n\lit_j$}. This limits the possible functions $f$ that can be used. For instance, $f$ cannot be a linear function. In this section, we will extend the results obtained so far, showing that such a constraint does not have to be respected when applying $\delta^f$ directly on the preactivations of the NN's last layer.



More in details, the outputs $\by$ of the base NN are calculated by applying an activation function over the preactivations $\bz$ generated in the last layer. We assume the activation function 
to be the logistic function, i.e.:
$$
y_i = \sigma(z_i) = \frac{1}{1 + e^{-z_i}}
$$



\noindent where $y_i$ is the activation of the $i^{th}$ predicate and $z_i$ the corresponding preactivation. As with the activations, we distinguish between preactivations of the grounded atoms and the ones of the literals of a grounded clause by using $\bz$ and $\bv$ respectively. More precisely:
$$\bv = \sigma^{-1}(\blit)$$

In section~\ref{TBFS} we showed that if we increase only the value of the highest literal (highest activation), such a change is minimal for G\"{o}del t-conorm. We can apply the same strategy to preactivations and still have a minimal change that increases the t-conorm. In other words, we can increase $v_i$ instead of $t_i$, when $i = \argmax_{j=1}^n v_j$ and the previously proven properties still hold.

The initial predictions are
$
\blit = \sigma(\bv)
$
and the final ones are
$
\blit' = \sigma(\bv + \delta^f(\bv))
$.



\begin{proposition}
For any function $f:\Reals^n\rightarrow\Reals^+$, the function 
\begin{equation}
  \delta^g(\blit) = \sigma(\bv + \delta^f(\bv)) - \sigma(\bv)  
\end{equation}
is a minimal \boost \ for G\"{o}del t-conorm and $l_p$-norm.
\end{proposition}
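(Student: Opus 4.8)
The plan is to reduce this claim to Theorem~\ref{th:minimal_TBF}, which already certifies as minimal \boost s exactly those of the form $\delta^h$ for a function $h$ satisfying $0 \le h(\blit) \le 1 - \max_j \lit_j$. The whole point of the proof will be to exhibit $\delta^g$ as one such single-coordinate boost, now acting on the activations $\blit$ rather than on the preactivations $\bv$, so that no new minimality argument is needed.

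First I would check that $\delta^g$ is a \boost at all. Since $\blit = \sigma(\bv)$, the $i$-th updated value telescopes: $\lit_i + \delta^g(\blit)_i = \sigma(v_i) + \big(\sigma(v_i + \delta^f(\bv)_i) - \sigma(v_i)\big) = \sigma(v_i + \delta^f(\bv)_i)$, and because $\sigma$ takes values in $(0,1)$ this lies in $[0,1]$ for every coordinate. Hence the defining inequality $0 \le \lit_i + \delta^g(\blit)_i \le 1$ holds, so $\delta^g \in \Delta$.

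The key observation, and the step I expect to carry the argument, is that $\delta^g$ modifies exactly one coordinate of $\blit$, namely the same coordinate that $\delta^f$ modifies in $\bv$. Because $\sigma$ is strictly increasing, $\argmax_j v_j = \argmax_j \sigma(v_j) = \argmax_j \lit_j$, so the index $i$ singled out by $\delta^f(\bv)$ in the sense of \eqref{eq:delta-f} coincides with the index of the largest activation. Consequently $\delta^g$ is itself of the form $\delta^h$ over $\blit$, with $h(\blit) = \sigma(v_i + f(\bv)) - \sigma(v_i)$ on that coordinate and $0$ elsewhere. This alignment is the only place where properties of $\sigma$ are genuinely used; once monotonicity is invoked the remainder is bookkeeping.

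It then remains to verify that this $h$ meets the hypothesis of Theorem~\ref{th:minimal_TBF}. Since $f(\bv) \ge 0$ and $\sigma$ is increasing, $\sigma(v_i + f(\bv)) \ge \sigma(v_i) = \lit_i$, giving $h(\blit) \ge 0$; and since $\sigma < 1$, $h(\blit) = \sigma(v_i + f(\bv)) - \lit_i \le 1 - \lit_i = 1 - \max_j \lit_j$, using that $i$ is the argmax coordinate. With $0 \le h(\blit) \le 1 - \max_j \lit_j$ established, Theorem~\ref{th:minimal_TBF} applies verbatim and yields that $\delta^g = \delta^h$ is a minimal \boost for the G\"{o}del t-conorm and the $l_p$-norm, completing the argument.
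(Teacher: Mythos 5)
Your proposal is correct and follows essentially the same route as the paper's proof: both use the strict monotonicity of $\sigma$ to show that $\delta^g$ is a single-coordinate boost $\delta^h$ acting on the argmax coordinate of $\blit$, and then invoke Theorem~\ref{th:minimal_TBF}. If anything, your writeup is slightly more complete, since you explicitly verify the hypothesis $0 \le h(\blit) \le 1 - \max_j \lit_j$ of the theorem, a check the paper's proof leaves implicit.
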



	  
\begin{proof}
  Notice that final predictions $\blit' = \sigma(\bv + \delta^f(\bv))$ are in $[0,1]$ since this range is the image of logistic function. Furthermore, $\sigma$ is monotonic increasing, which means that the highest preactivation corresponds to the highest activation:
  $$\argmax_{j=1}^n \sigma(v_j) = \argmax_{j=1}^n v_j$$
  Another implication of the monotonicity of $\sigma$ is that increasing a preactivation produces also an increase of the corresponding activation:
  $$f(\bv) \geq 0 \to \sigma(v_i + f(\bv)) \geq \sigma(v_i)$$
  \noindent
  Putting the previous two together we obtain that increasing the highest preactivation does indeed imply an increase of the highest activation, meaning that $\delta^g$ is a minimal \boost.

  More formally, let's define function $g(\blit) = \sigma(v_i + f(\bv)) - \sigma(v_i)$.
  We obtain:
	\begin{equation}\begin{split}
		\delta^g(\blit)_i &= \sigma(\bv + \delta^f(\bv))_i - \sigma(\bv)_i \\
		&= 
		\left\{
		  \begin{array}{ll}
			g(\blit) \ \ \ & \mbox{if $i = \argmax_{j=1}^n t_j$} \\
			0 & \mbox{otherwise}
		\end{array}
		\right.
	\end{split}
	\end{equation}
   Theorem~\ref{th:minimal_TBF} guarantees that $\delta^g$ is a minimal \boost \ for G\"{o}del t-conorm and $l_p$-norm
\end{proof}
The function $\delta^g$ is not directly used by KENN; it is implicitly induced by the application of $\delta^f$ on $\bv$. Therefore, by showing that it is a minimal TBF, we prove that applying $\delta^f$ on $\bv$ is indeed equivalent to apply a minimal TBF on the NN predictions. 

Until now we have seen that we can modify the preactivations $\bv$ instead of changing directly the final truth values ($\blit$), but we haven't justified why it is better to do this. The first reason is that applying changes on $\bv$ has the advantage of guaranteeing the final predictions to be in $\left[0,1\right]$ which means that we don't have any constraint on the choice of function $f$. Moreover, we could interpret the results of the Neural Network as levels of confidence in its predictions. According to this view, an initial value close to one (or zero) of the activation means high confidence in the prediction and we would like $f$ to apply a small change in such cases. On the other hand, if the initial prediction is close to $0.5$ (preactivation close to $0$) then we have the maximum uncertainty. Fig.~\ref{fig:logistic} shows how the same change on two different preactivation values results in different changes on the activations: the closer the initial value is to zero (maximum uncertainty), the higher the change on final truth values.

\DONE{Summarizing, to increase $c$ satisfaction we increase the preactivation of its highest literal. The extent of the change depends on the function $f$. \kenn\ uses a distinct $f_c$ function for every clause $c$.} This is motivated by the fact that we want $\delta^{f_c}$ to be proportional to \emph{clause weight} $w_c$ (learnable parameter) which expresses the strength of the clause. The simplest function that conforms to this property is the constant function $w_c$ (with $w_c \in [0,\infty]$). The function applied to $\bv$ to increase $c$ satisfaction is therefore $\delta^{w_c}$ defined as in \ref{eq:delta-f}:
\begin{equation}
	\delta^{w_c}(\bv)_i = 
	\left\{
	  \begin{array}{ll}
	w_c \ \ \ & \mbox{if $i = \argmax_{j=1}^n v_j$} \\
	0 & \mbox{otherwise}
	\end{array}
	\right.
\end{equation}

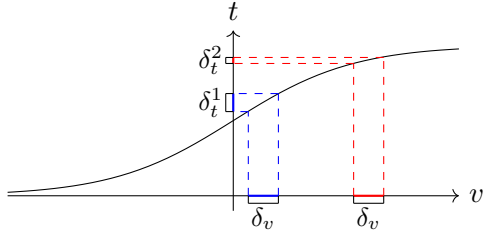
\begin{figure}[t]
	\centering
    \begin{tikzpicture}[scale=2]
        \draw[->] (-1.5,0) -- (1.5,0) node[right] {$v$};
        \draw[->] (0,-0.1) -- (0,1.1) node[above] {$t$};
        \draw[scale=1.0,domain=-1.5:1.5,smooth,variable=\x,black] plot ({\x},{1/(1+e^(-2.5*\x))});
        
        \draw[blue, thick] (0.1,0) -- (0.3,0);
        \draw (0.1, 0) -- (0.1, -0.05);
        \draw (0.3, 0) -- (0.3, -0.05);
        \draw (0.1, -0.05) -- (0.3, -0.05);
        \node[font=\small\sffamily] at (0.2, -0.15) {$\delta_v$};
        
        \draw[blue, dashed] (0.1, 0) -- (0.1, 0.56);
        \draw[blue, dashed] (0.3, 0) -- (0.3, 0.68);
        \draw[blue, dashed] (0.1, 0.56) -- (0, 0.56);
        \draw[blue, dashed] (0.3, 0.68) -- (0, 0.68);
        \draw[blue, thick] (0, 0.56) -- (0, 0.68);
        \draw (0, 0.56) -- (-0.05, 0.56);
        \draw (0, 0.68) -- (-0.05, 0.68);
        \draw (-0.05, 0.56) -- (-0.05, 0.68);
        \node[font=\small\sffamily] at (-0.15, 0.62) {$\delta_t^1$};

        \draw[red, thick] (0.8,0) -- (1.0,0);
        \draw (0.8, 0) -- (0.8, -0.05);
        \draw (1, 0) -- (1, -0.05);
        \draw (0.8, -0.05) -- (1, -0.05);
        \node[font=\small\sffamily] at (0.9, -0.15) {$\delta_v$};
        \draw[red, dashed] (0.8, 0) -- (0.8, 0.88);
        \draw[red, dashed] (1, 0) -- (1, 0.92);
        \draw[red, dashed] (0.8, 0.88) -- (0, 0.88);
        \draw[red, dashed] (1, 0.92) -- (0, 0.92);
        \draw[red, thick] (0, 0.88) -- (0, 0.92);
        
        \draw (0, 0.88) -- (-0.05, 0.88);
        \draw (0, 0.92) -- (-0.05, 0.92);
        \draw (-0.05, 0.88) -- (-0.05, 0.92);
        \node[font=\small\sffamily] at (-0.15, 0.90) {$\delta_t^2$};
  \end{tikzpicture}
        \vspace*{-0.5cm}
	\caption{The same change $\delta_v$ applied on two different values of preactivations results in different changes on activations: the more close the preactivation to zero (maximum uncertainty) the highest the modification on final predictions}
	\label{fig:logistic}
      \end{figure}

Notice that, although $\delta^{w_c}$ respects our minimality property, there are two problems when using it inside a neural network: first, it is not differentiable; second, it is too strict when multiple literals have close values. In those cases, it increases just one of the values even if the difference is minimal.
To obviate these problems, in our implementation we substitute $\delta^{w_c}$ with the $\softmax$ function multiplied by $w_c$, that can be seen as a soft differentiable approximation of $\delta^{w_c}$:
\begin{equation}
\delta^{w_c}_s(\bv)_i = w_c \cdot \softmax(\bv)_i = w_c \cdot \frac{e^{v_i}}{\sum_{j=1}^{n}e^{v_j}}
\label{eq:delta_wc_s}
\end{equation}

\section{Increasing the satisfaction of the entire knowledge}
\label{sec:entire_knowledge}
Until now we considered only the changes to be applied to increase a single clause, while in the general case $\clauses$ will be composed of multiple clauses. We need a way to aggregate the multiple changes proposed by each clause to obtain the final changes to be applied on all the predictions based on the entire knowledge.

For every atom $A$, let $z_A$ be the preactivation of $y_A$, which is $z_A=\sigma^{-1}(y_A)$.
As done for the activations $\by$, we can extend the notation to literal, letting $z_{\neg A}$ be the
preactivation corresponding to $y_{\neg A}$, namely $z_{\neg A}=\sigma^{-1}(y_{\neg A})$. From the following property of logistic function 
$$1 - \sigma(x) = \sigma(-x)$$ and from the fact that $y_{\neg A}= 1-y_A$ we can consistently define 
$$
z_{\lnot A} = - z_A
$$
Finally, we define $\bz_c=\left<z_{l_1},\dots,z_{l_n}\right>$ for every clause $c=\bigvee_{i=1}^nl_i$.

Let $\clauses$ be a set of clauses and $\{w_c\}_{c\in\clauses}$ their corresponding weights.
  For every clause $c\in\clauses$ we define $\bdelta^c$, a vector that contains one value $\delta^c_A$ for every atom $A$ of the language. Each element $\delta^c_A$ of $\bdelta^c$ is defined as follows: 
  \begin{equation}
    \label{eq:deltac}
  \delta^c_A=\begin{cases}
    \delta^{w_c}_s(\bz_c)_A & \mbox{if $A\in c$} \\
    -\delta^{w_c}_s(\bz_c)_{\lnot A} & \mbox{if $\neg A\in c$} \\
    0 & \mbox{Otherwise}
  \end{cases}
\end{equation}
Intuitively, $\bdelta^c$ is the vector that contains the contributions of the clause $c$ to all the preactivations $\bz$. Notice that
  the clauses affects only the truth value of the atoms that appears in it. In particular, the clause $c$ contributes positively
  to the atom $A$ if $A$ appears positively in $c$, and it contributes negatively if $\neg A$ appears in $c$.
  To aggregate the contributions of each clause, we simply sum all the contributions. Therefore the final prediciton is defined as
  the sigmoid applied to the modified preactifations. Formally: 
\begin{equation}
	\by' = \sigma(\bz + \sum_{c \in \G(\clauses, \C)} \bdelta^c)
	\label{eq:final_intepretation}
\end{equation}

The choice of the sum as the aggregator of the various contributions makes learning and inference fast. In this way, the scalability is increased but also the probability of inconsistency between the final predictions and the logical rules. This aspect will be further investigated in Section~\ref{sec:rel_work_2}, where the extent of the problem will be analyzed and a comparison with RNM from this point of view will be provided.

\section{\kenn\ architecture}
\label{sec:architecture}

As explained in Section~\ref{sec:kenn}, the core component of \kenn\
is in the \boost . In \kenn\ the \boost\ for a certain clause $c$ is
implemented by an architecture called \emph{Clause Enhancer} (CE), which is a
module instantiated that computes $\delta_c$ according to Equation~\eqref{eq:deltac}. 
Figure~\ref{fig:CE} shows the details of the architecture of the CE. 


\begin{figure}[H]
  \centering
\definecolor{predicates}{rgb}{0.9,0.2,0.2}
\definecolor{clauses}{rgb}{0.3,0.3,0.7}
\definecolor{inputs}{rgb}{0.1,0.6,0.2}
\definecolor{outputs}{rgb}{0.9,0.2,0.2}

\tikzstyle{block} = [rectangle, draw, fill=black!5, text centered, rounded corners, minimum height=1cm, minimum width=1.8cm, align=center, font=\fontsize{8}{0}\selectfont]
\tikzstyle{number} = [circle, align=center, font=\fontsize{9}{0}\selectfont]

\scalebox{1.0}{
\begin{tikzpicture}[scale=0.4, node distance=1mm, align=left]

\node[
rectangle split,
draw,
rectangle split horizontal,
rectangle split parts=3,
minimum height=0.6cm,
minimum width=2.2cm,
align=center,
font=\fontsize{10}{0}\selectfont](z2) at (0, 0){
	\nodepart[]{one} $ z_{\scaleto{A}{4pt}}$
	\nodepart[]{two} $ z_{\scaleto{B}{4pt}}$
	\nodepart[]{three} $ z_{\scaleto{C}{4pt}}$};


\node[
draw=none,
minimum height=0.6cm,
minimum width=2.5cm,
thick,
rounded corners,
black!70,
above=of z2,
yshift=0.7cm,
fill=black!10]  (phi) {\bfseries{select}};

\node[
above=of phi,
rectangle split,
draw,
rectangle split horizontal,
rectangle split parts=2,
minimum height=0.6cm,
yshift=0.45cm,
align=center,
font=\fontsize{10}{0}\selectfont](z3) {
	\nodepart[]{one} $\  z_{\scaleto{A}{4pt}} \ $
	\nodepart[]{two} $ z_{\scaleto{\lnot B}{4pt}}$};

\path[draw, ->] (z2) -- (phi);
\path[draw, ->] (phi) -- (z3);

\node[
draw=none,
minimum height=0.6cm,
minimum width=2.5cm,
thick,
rounded corners,
above=of z3,
black!70,
yshift=0.5cm,
fill=black!10]  (softmax) {\bfseries{softmax}};

\node[left=of softmax, xshift=-0.6cm] (w) {$w_c$};

\node[draw=none, fit=(softmax) (w)] (swwrapper){};

\node[
	draw=none,
	circle,
    above=of swwrapper,
	yshift=0.45cm,
	xshift=-0.65cm,
	black!70,
	fill=black!10,
	font=\fontsize{13}{0}\selectfont
] (prod) {$*$};

\path[draw, ->] (w) -- (prod);
\path[draw, ->] (softmax) -- (prod);

\path[draw, ->] (z3) -- (softmax);

\node[
rectangle split,
draw,
rectangle split horizontal,
rectangle split parts=2,
minimum height=0.6cm,
minimum width=2.5cm,
align=center,
above=of prod,
yshift=0.8cm,
font=\fontsize{10}{0}\selectfont](dl) {
	\nodepart[]{one} $\ {\delta_{\scaleto{A}{4pt}}^{c}} \ $
	\nodepart[]{two} $ {\delta_{\scaleto{\lnot B}{4pt}}^{c}}$};

\path[draw, ->] (prod) -- (dl);


\node[
draw=none,
minimum height=0.6cm,
minimum width=2.5cm,
thick,
rounded corners,
black!70,
above=of dl,
yshift=0.45cm,
fill=black!10]  (phip) {Eq.7};

\path[draw, ->] (dl) -- (phip);

\node[
above=of phip,
rectangle split,
draw,
rectangle split horizontal,
rectangle split parts=3,
minimum height=0.6cm,
minimum width=1.8cm,
yshift=0.6cm,
align=center,
font=\fontsize{10}{0}\selectfont](dz3){
	\nodepart[]{one} $ {\delta_{\scaleto{A}{4pt}}^{c}}$
	\nodepart[]{two} $ {\delta_{\scaleto{B}{4pt}}^{c}}$
	\nodepart[]{three} { \ \emph{0} }};

\path[draw, ->] (phip) -- (dz3);

\node[left=of z2, align=center] (zlabel) {$\bz$};
\node[left=of dz3, align=center] (dclabel) {$\bdelta^c$};
\node[align=center] at (z3 -| zlabel) {$\bz_c$};


\begin{scope}[on background layer]
\node[
draw=none,
ultra thick,
fit=(phip) (softmax) (z3),
rounded corners,
minimum height=8cm,
minimum width=6cm,
inputs,
yshift=-0.7cm,
outputs,
fill=red!15]  (CEBOX3) {};
\end{scope}

\node[] (CE) at (3.7,20.4) {\bfseries{CE}};
\node[below=of CE, yshift=0.2cm] {$c$: $A \lor \lnot B$};

\node[
	draw,
	dashed,
	fit=(w) (prod) (softmax),
	rounded corners,
	minimum height=2.5cm,
] (dsw) {};

\node[
	draw=none,
	right=of dsw,
	xshift=2.5cm
] (dswlabel) {$\delta_s^{w}$ \\ Eq.~\ref{eq:delta_wc_s}};

\path[draw, ->, dashed] (dsw) -- (dswlabel);

\end{tikzpicture}
}
	\caption{Clause Enhancer for $A \lor \lnot B$, where $A$ and $B$ are grounded atoms}
	\label{fig:CE}
\end{figure}
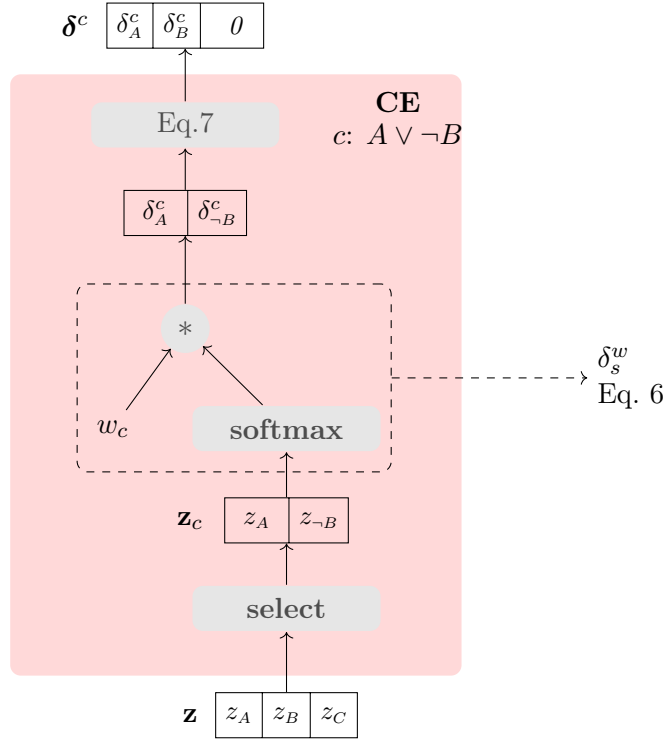

The CE receives as input a vector containing the preactivations generated by the base NN and it selects the right literals, returning $\bz_c$, the preactivations of the literals of clause $c$.
Then, Equation \ref{eq:delta_wc_s} of $\delta_s^w$ is applied. 
The term $w_c$ is a positive weight associated to clause $c$ that represents the strength of $c$: the higher the $w_c$, the bigger the contribution of the clause in the final predictions. As opposed to methods based on the regularization of the Loss, $w_c$ is not a hyper parameter. On the contrary, it is a parameter of the model that is learned during training. This means that \kenn\ can discover the importance of each clause based on the data. As a special case of this behavior, it is possible for \kenn\ to learn to ignore clauses by setting $w_c$ to zero, making clause $c$ irrelevant for the final predictions.

The final step of the CE is a post elaboration that projects the proposed changes on the literals into changes to be applied on the atoms according to equation \ref{eq:deltac}.


The outputs of the CEs are combined inside the \emph{Knowledge Enhancer} (KE), which is the final layer of \kenn\ model. Figure~\ref{fig:KE} shows the architecture of KE, which calculates the final interpretation as in Equation~\ref{eq:final_intepretation}.

\begin{figure}[H]
	\centering
\definecolor{predicates}{rgb}{0.9,0.2,0.2}
\definecolor{clauses}{rgb}{0.3,0.3,0.7}
\definecolor{inputs}{rgb}{0.1,0.6,0.2}
\definecolor{outputs}{rgb}{0.9,0.2,0.2}

\tikzstyle{block} = [rectangle, draw, fill=black!5, text centered, rounded corners, minimum height=1cm, minimum width=1.8cm, align=center, font=\fontsize{8}{0}\selectfont]
\tikzstyle{number} = [circle, align=center, font=\fontsize{9}{0}\selectfont]
\scalebox{0.7}{
\begin{tikzpicture}[scale=0.4, node distance=1mm, align=left]

\node (z) at (0, 0) {$\bz$};

\node[
	draw=none,
	ultra thick,
	rounded corners,
	minimum height=1.5cm,
	minimum width=2.6cm,
	inputs,
	outputs, 
	fill=red!15]  (CEBOX1) at (-10,7.7) {};
\node[font=\fontsize{13}{0}\selectfont] (C1) at (-10,7) {$c_1$};
\node[above=of C1,font=\fontsize{13}{0}\selectfont] (CE1) {\bfseries{CE}};

\node[
	draw=none,
	ultra thick,
	rounded corners,
	minimum height=1.5cm,
	minimum width=2.6cm,
	outputs,
	fill=red!15]  (CEBOX2) at (0,7.7) {};
\node[font=\fontsize{13}{0}\selectfont] (C2) at (0, 7) {$c_2$};
\node[above=of C2,font=\fontsize{13}{0}\selectfont] (CE2) {\bfseries{CE}};

\path[draw, ->] (z) -- (0,4) -- (-10, 4) -- (CEBOX1);
\path[draw, ->] (z) -- (CEBOX2);

\node (dz1) at (-10, 12){$\bdelta_{c_1}$};

\path[draw, ->] (CEBOX1) -- (dz1);

\node (dz2) at (0, 12){$\bdelta_{c_2}$};

\path[draw, ->] (CEBOX2) -- (dz2);
\node[
draw=none,
circle,
black!70,
fill=black!10,
minimum height=0.9cm,
font=\fontsize{13}{0}\selectfont] (sum) at (0, 15.5) {$+$};

\node[
draw=none,
above=of sum,
circle,
black!70,
fill=black!10,
yshift=0.8cm,
minimum height=0.9cm,
font=\fontsize{13}{0}\selectfont] (sigma) at (0, 15.5) {$\sigma$};

\path[draw, ->] (dz1) --++ (0,3.5) -- (sum);
\path[draw, ->] (dz2) --  (sum);
\path[draw, ->] (z) -- (0, 3) node[] (two){} -- (6.5,3) -- node[pos=0.89, anchor=west] (one){} (6.5, 15.5) -- (sum);
\path[draw, ->] (sum) -- (sigma);

\node (y) at (0, 22.8){$\by'$};

\path[draw, ->] (sigma) -- (y);

\node[font=\fontsize{13}{0}\selectfont] (GE) at (9, 18){\bfseries{KE}};

\begin{scope}[on background layer]
\node[
draw=none,
fit=(CEBOX1) (one) (sigma) (two) (GE),
ultra thick,
rounded corners,
minimum height=4cm,
minimum width=10cm,
inputs,
fill=green!15]  (GEBOX1) {};
\end{scope}

\end{tikzpicture}}
	\caption{Knowledge Enhancer architecture. It implements Equation~\ref{eq:final_intepretation} by summing the deltas values of each clause enhancer (one per grounded clause) with the initial preactivations $\bz$ and applying the logistic function }
	\label{fig:KE}
\end{figure}
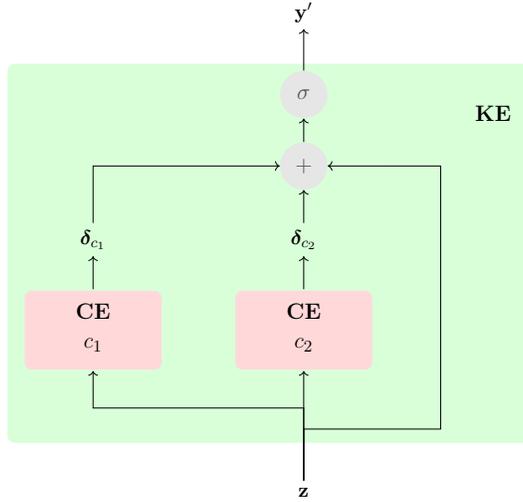

\subsection{Working with batches of data}
In the architecture defined so far, since all the grounded atoms are inside the vector $\bz$, there are multiple instantiation of the same CE for each clause $c \in \clauses$, one for each grounding of $c$. For instance, in Figure~\ref{fig:KE_matrix}(a), the CE for clause $c: \lnot S(x) \lor C(x)$ is instantiated two times, one for object $a$, another for object $b$. 

In many scenarios we want to use batches of data, which would require to modify the structure of the model graph for each batch, in particular after each training step, since the groundings are constantly changing and for each grounding a different CE is instantiated. Notice that, even if the CEs are different, the internal operations are the same and the only change is the provided input. For these reasons, we want to instantiate a single CE for each clause $c$ which works in a single batch containing all possible groundings of $c$.

If the groundings involve a single object, a simple solution is to define $\bz$ as a matrix instead as a vector, where columns represent predicates and rows individuals. More in details, $\bz$ is now defined as a matrix such that the element $z_{ij}$ contains the preactivation of $P_j(o_i)$, with $P_j$ the $j^{th}$ predicate and $o_i$ the $i^{th}$ object. 
Notice that this kind of representation is common when working with neural networks since the columns (predicates) correspond to the labels and the rows (groundings) to the samples. 

Figure~\ref{fig:KE_matrix}(b) shows the overall idea. The CE is defined as before, but now it takes as input a matrix and it acts on each row separately. This can be done only if the same atom does not appear in multiple rows, since the changes are applied independently to each row and are not aggregated together. This property always holds with clauses that
involve a single variable.


When a clause contain multiple variables, the same ground atom may occour in multiple groundings of the clause. 
For instance, consider the clause 
$$c: \lnot Smoke(x) \lor \lnot Friends(x,y) \lor Smoke(y)$$
The two groundings $c[x/Alice,y/Bob]$ and $c[x/Bob,y/Carl]$ share a common grounded atom: $Smoke(Bob)$. For this reason, using a unique CE for $c$ is problematic since the values returned by the same CE are not aggregated, and we would end up with multiple changes proposed for the same atom.

In the next sections, we will present the architecture of \kenn\ that can be used to instantiate a single CE even in relational domains, i.e. when dealing with predicates with arity greater than one.

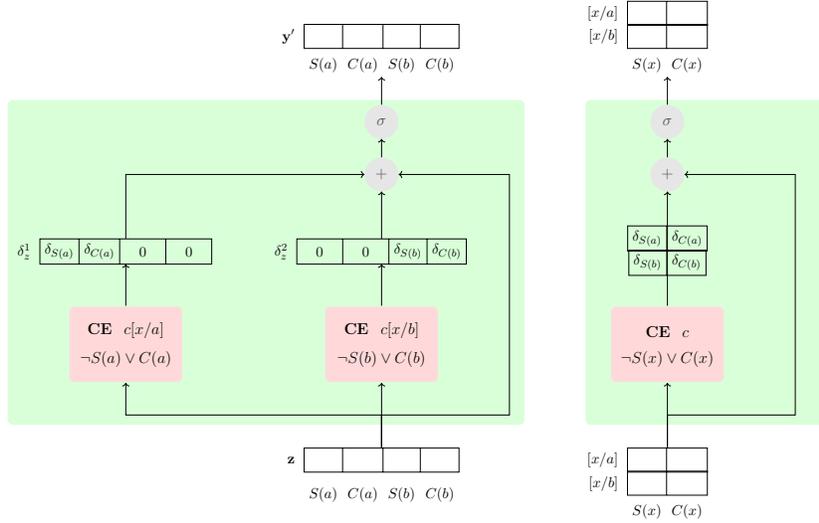
\begin{figure}[H]
	\centering
\definecolor{predicates}{rgb}{0.9,0.2,0.2}
\definecolor{clauses}{rgb}{0.3,0.3,0.7}
\definecolor{inputs}{rgb}{0.1,0.6,0.2}
\definecolor{outputs}{rgb}{0.9,0.2,0.2}

\tikzstyle{block} = [rectangle, draw, fill=black!5, text centered, rounded corners, minimum height=1cm, minimum width=1.8cm, align=center, font=\fontsize{8}{0}\selectfont]
\tikzstyle{number} = [circle, align=center, font=\fontsize{9}{0}\selectfont]
\scalebox{0.5}{
\begin{tikzpicture}[scale=0.4, node distance=1mm, align=left]

	\node[
		rectangle split,
		draw,
		rectangle split horizontal,
		rectangle split parts=4,
		minimum height=0.6cm,
		text opacity=0,
		align=center](z) at (0, 0){
		\nodepart[]{one} $ S(a) $
		\nodepart[]{two} $ C(a)$
		\nodepart[]{three} $ S(b)$
		\nodepart[]{four} $ C(b)$};

	\node[
		rectangle split,
		draw=none,
		below=of z,
		rectangle split horizontal,
		rectangle split parts=4,
		minimum height=0.6cm,
		yshift=-0.5cm,
		align=center] at (0, 0){
		\nodepart[]{one} $ S(a) $
		\nodepart[]{two} $ C(a)$
		\nodepart[]{three} $ S(b)$
		\nodepart[]{four} $ C(b)$};

	\node[left=of z] {$\bz$};

	\node[
		draw=none,
		ultra thick,
		rounded corners,
		minimum height=2cm,
		minimum width=3cm,
		inputs,
		outputs, 
		fill=red!15]  (CEBOX1) at (-17,7.7) {};
	\node[font=\fontsize{13}{0}\selectfont] (C1) at (-17,6.7) {$\lnot S(a) \lor C(a)$};
	\node[above=of C1,font=\fontsize{13}{0}\selectfont] (CE1) {\bfseries{CE} \ $c[x/a]$};

	\node[
		above=of CEBOX1,
		rectangle split,
		draw,
		rectangle split horizontal,
		rectangle split parts=4,
		minimum height=0.6cm,
		yshift=1cm,
		align=center](dz1) {
		\nodepart[]{one} $ \delta_{S(a)} $
		\nodepart[]{two} $ \delta_{C(a)}$
		\nodepart[]{three} $ \ \ \ 0 \ \ \ $
		\nodepart[]{four} $ \ \ \ 0 \ \ \ $};

	\node[left=of dz1] (dz1l) {$\delta_z^1$};

	\node[
		draw=none,
		ultra thick,
		rounded corners,
		minimum height=2cm,
		minimum width=3cm,
		outputs,
		fill=red!15]  (CEBOX2) at (0,7.7) {};

	\node[
		above=of CEBOX2,
		rectangle split,
		draw,
		rectangle split horizontal,
		rectangle split parts=4,
		minimum height=0.6cm,
		yshift=1cm,
		align=center](dz2) {
		\nodepart[]{one} $ \ \ \ 0 \ \ \ $
		\nodepart[]{two} $ \ \ \ 0 \ \ \ $
		\nodepart[]{three} $ \delta_{S(b)}$
		\nodepart[]{four} $ \delta_{C(b)}$};

	\node[left=of dz2] {$\delta_z^2$};

	\node[font=\fontsize{13}{0}\selectfont] (C2) at (0, 6.7) {$\lnot S(b) \lor C(b)$};
	\node[above=of C2,font=\fontsize{13}{0}\selectfont] (CE2) {\bfseries{CE} \ $c[x/b]$};

	\path[draw, ->] (z) -- (0,3) -- (-17, 3) -- (CEBOX1);
	\path[draw, ->] (z) -- (CEBOX2);

	\node[
	draw=none,
	circle,
	black!70,
	fill=black!10,
	minimum height=0.9cm,
	font=\fontsize{13}{0}\selectfont] (sum) at (0, 19) {$+$};

	\node[
	draw=none,
	circle,
	black!70,
	fill=black!10,
	yshift=0.8cm,
	minimum height=0.9cm,
	font=\fontsize{13}{0}\selectfont] (sigma) at (0, 20.5) {$\sigma$};

	\path[draw, ->] (CEBOX1) -- (dz1);
	\path[draw, ->] (CEBOX2) --  (dz2);
	\path[draw, ->] (dz1) |- (sum);
	\path[draw, ->] (dz2) --  (sum);
	\path[draw, ->] (z) -- (0, 3) node[] (two){} -- (8.5,3) -- node[pos=0.89, anchor=west] (one){} (8.5, 19) -- (sum);
	\path[draw, ->] (sum) -- (sigma);


	\node[
		rectangle split,
		draw=none,
		rectangle split horizontal,
		rectangle split parts=4,
		minimum height=0.6cm,
		align=center](y) at (0, 26.3){
		\nodepart[]{one} $ S(a) $
		\nodepart[]{two} $ C(a)$
		\nodepart[]{three} $ S(b)$
		\nodepart[]{four} $ C(b)$};

	\node[
	rectangle split,
	draw,
	above=of y,
	rectangle split horizontal,
	rectangle split parts=4,
	minimum height=0.6cm,
	text opacity=0,
	align=center] (yl) {
	\nodepart[]{one} $ S(a) $
	\nodepart[]{two} $ C(a)$
	\nodepart[]{three} $ S(b)$
	\nodepart[]{four} $ C(b)$};

	\node[left=of yl] {$\by'$};

	\path[draw, ->] (sigma) -- (y);

	\begin{scope}[on background layer]
	\node[
	draw=none,
	fit=(dz1) (one) (sigma) (two) (dz1l),
	ultra thick,
	rounded corners,
	minimum height=6.5cm,
	minimum width=10cm,
	inputs,
	fill=green!15]  (GEBOX1) {};
	\end{scope}


	\node[
		rectangle split,
		draw,
		rectangle split horizontal,
		rectangle split parts=2,
		minimum height=0.6cm,
		text opacity=0,
		align=center](z2) at (19, 0){
		\nodepart[]{one} $ S(x) $
		\nodepart[]{two} $ C(x)$};

	\node[left=of z2, draw=none] {$[x/a]$};

	\node[
		below=of z2,
		rectangle split,
		draw,
		rectangle split horizontal,
		rectangle split parts=2,
		minimum height=0.6cm,
		text opacity=0,
		yshift=0.14cm,
		align=center](z3){
		\nodepart[]{one} $ S(x) $
		\nodepart[]{two} $ C(x)$};

	\node[left=of z3, draw=none] {$[x/b]$};

	\node[
		below=of z3,
		rectangle split,
		draw=none,
		rectangle split horizontal,
		rectangle split parts=2,
		minimum height=0.6cm,
		align=center](z3){
		\nodepart[]{one} $ S(x) $
		\nodepart[]{two} $ C(x)$};

	\node[
	draw=none,
	ultra thick,
	rounded corners,
	minimum height=2cm,
	minimum width=3cm,
	inputs,
	outputs, 
	fill=red!15]  (CEBOX3) at (19,7.7) {};

	\node[
		rectangle split,
		above=of CEBOX3,
		draw,
		rectangle split horizontal,
		rectangle split parts=2,
		minimum height=0.6cm,
		yshift=0.7cm,
		align=center](dz3){
		\nodepart[]{one} $ \delta_{S(b)} $
		\nodepart[]{two} $ \delta_{C(b)}$};
	
	\node[
		above=of dz3,
		rectangle split,
		draw,
		rectangle split horizontal,
		rectangle split parts=2,
		minimum height=0.6cm,
		yshift=-0.14cm,
		align=center](z3){
		\nodepart[]{one} $ \delta_{S(a)} $
		\nodepart[]{two} $ \delta_{C(a)}$};

	\node[font=\fontsize{13}{0}\selectfont] (C3) at (19,6.7) {$\lnot S(x) \lor C(x)$};
	\node[above=of C3,font=\fontsize{13}{0}\selectfont] (CE1) {\bfseries{CE} \ $c$};

	\path[draw, ->] (z2) -- (CEBOX3);

	\node[
	draw=none,
	circle,
	black!70,
	fill=black!10,
	minimum height=0.9cm,
	font=\fontsize{13}{0}\selectfont] (sum2) at (19, 19) {$+$};

	\node[
	draw=none,
	circle,
	black!70,
	fill=black!10,
	yshift=0.8cm,
	minimum height=0.9cm,
	font=\fontsize{13}{0}\selectfont] (sigma2) at (19, 20.5) {$\sigma$};

	\path[draw, ->] (CEBOX3) --  (sum2);
	\path[draw, ->] (z2) -- (19, 3) node[] (three){} -- (27.5,3) -- node[pos=0.89, anchor=west] (four){} (27.5, 19) -- (sum2);
	\path[draw, ->] (sum2) -- (sigma2);

	\node[
		rectangle split,
		draw=none,
		rectangle split horizontal,
		rectangle split parts=2,
		minimum height=0.6cm,
		align=center](y2) at (19, 26.3){
		\nodepart[]{one} $ S(x) $
		\nodepart[]{two} $ C(x)$};

	\node[
		above=of y2,
		rectangle split,
		draw,
		rectangle split horizontal,
		rectangle split parts=2,
		minimum height=0.6cm,
		text opacity=0,
		align=center](y3){
		\nodepart[]{one} $ S(x) $
		\nodepart[]{two} $ C(x)$};

	\node[
		above=of y3,
		rectangle split,
		draw,
		rectangle split horizontal,
		rectangle split parts=2,
		minimum height=0.6cm,
		text opacity=0,
		yshift=-0.14cm,
		align=center](y4){
		\nodepart[]{one} $ S(x) $
		\nodepart[]{two} $ C(x)$};

	\node[left=of y4, draw=none] {$[x/a]$};
	\node[left=of y3, draw=none] {$[x/b]$};
	\path[draw, ->] (sigma2) -- (y2);

	\begin{scope}[on background layer]
		\node[
		draw=none,
		fit=(three) (sigma2) (four) (CEBOX3),
		ultra thick,
		rounded corners,
		minimum height=6.5cm,
		minimum width=6.5cm,
		inputs,
		fill=green!15]  (GEBOX1) {};
	\end{scope}


\end{tikzpicture}}
	\caption{KE applied on a Prior Knowledge consisting of a single clause $c$: $\lnot S(x) \lor C(x)$ applied on a domain of two objects ($a$ and $b$): (a) using a vector with all the grounded atoms, each grounding of $c$ is managed by a different CE.  Notice that the changes proposed by the first CE are zero for $C(b)$ and $S(b)$ while the second CE returns the value zero for the other two atoms: the two grounded clauses are independent; (b) using a matrix with a row for each grounding: all the grounding of $c$ are managed by the same CE, which operates in parallel on each row of the matrix. This is more convenient when dealing with mini-batches and results in a more efficient implementation.}
	\label{fig:KE_matrix}
\end{figure}

\subsection{KE for relational data}

Maintaining a single vector with all the grounded atoms as in Section~\ref{sec:architecture} could lead to an inefficient implementation since it forces to create multiple CEs for each clause. This means that it requires to reset the computational graph every time new data is provided. On the other hand, in the approach of Fig.~\ref{fig:KE_matrix}(b), for each clause only a CE is instantiated and it can work in parallel on all the groundings of the clause. This has two advantages: it simplifies the usage of batches during training, and CEs' internal calculations are implemented as matrix operations, which are particularly efficient when working with a GPU. 

However, such an approach can be applied with clauses that involve a single variable: we call this type of clauses \emph{unary}. In contrast, a clause that contains two variables is referred as a \emph{binary} clause. For simplicity, we don't take into account clauses with higher arity. Notice however that the proposed approach can be used with predicates and clauses with any number of variables.




Let $\clauses_U$ be the set of unary clauses and $\clauses_B$ the set of binary clauses. The Prior Knowledge is now defined as $\clauses = \clauses_U \cup \clauses_B$.

The idea now is to apply the KE to these two sets separately: Equation~\ref{eq:final_intepretation} can be decomposed using the new defined partition of the knowledge:
\begin{equation}
	\begin{split}
	\by'&= \sigma \left(\bz + \sum_{c \in \G(\clauses_U)} \bdelta_c + \sum_{c \in \G(\clauses_B)} \bdelta_c \right)	
	\end{split}
	\label{eq:final_intepretation_binary}
\end{equation}

Notice that the approach defined in the previous section with a single CE for each clause can be directly applied to the unary knowledge $\clauses_U$. We need to define a strategy to deal with binary clauses. 



\subsubsection{Representing relational data in tables}
\label{sec:relational_representation}

To represent relational data, \kenn\ adopts two matrices, denoted by $\U$ and $\B$, which could be seen as tables of a relational database. Fig.~\ref{fig:representation} shows this representation using the classical Smoker-Friends-Cancer example: there is a domain composed of three objects (persons) $O = \{o_0, o_1, o_2\}$, two unary predicates ($S$ and $C$, which stand for $Smoking$ and $Cancer$) and one binary predicate $F$ ($Friends$)). Fig.~\ref{fig:representation}(a) shows the graph representation with nodes and edges labelled with the unary and binary predicates respectively. In this particular example the graph has no self-loops, meaning that $Friends(x,x)$ is not taken into account. Please notice that this limitation is specific for this example since everyone is assumed to be friends of himself a priori. However, in general, \kenn\ can work with self-loops.

Fig.~\ref{fig:representation}(b) shows the data structure used to encode the graph of Fig.~\ref{fig:representation}(a): matrix $\U$ contains as many rows as objects in the domain and as many columns as unary predicates plus a key column $i$, which could be interpreted as a primary key for the table $\U$. Intuitively, 
the $j^{th}$ row of $\U$ contains the preactivations of the unary predicates applied on the $j^{th}$ object,
i.e., $\U_j=(z_{P_1(o_j)},\dots,z_{P_n(o_j)})$. 

Matrix $\B$ contains one row for every pair of objects to which we want to apply
a binary predicate. 
Notice that we might not want to include all the pairs of objects in $\B$, because the application of binary predicates of some pairs might be meaningless. For instance, as already mentioned, the $Friends$ predicate applied to $(a,a)$ is not interesting.
$\B$ has one column for each binary predicate and other two columns $s^x$ and $s^y$ that indicate the first and the second component of the
pair. In the relational database analogy they can be seen as foreign keys to table $\U$. As with $\U$, each row represents a different grounding of the predicates, with the difference that in this case the predicates are binary and the variables to be substituted are two. More precisely, $\B_{i,j}$ corresponds to the value of predicate $B_j$ for the substitution $(x / o_{s^x_i}, y / o_{s^y_i})$:
$$\B_{i,j} = B_j(o_{s^x_i},o_{s^y_i})$$

For instance, in Fig.~\ref{fig:representation}(b): $s^x_0 = 0$ and $s^y_0 = 1$, which means that row $0$ of matrix $\B$ is referring to the pair of objects $(o_0,o_1)$.

\begin{figure}
	\centering
	\input{tikz/rel_representation}
    \caption{
		Relational data representation: in this example, three objects ($o_0$, $o_1$ and $o_2$), two unary predicates ($S$ and $C$) and only one binary predicate ($F$). (a): the graph representation; (b): the representation used by \kenn: matrixes $\U$ and $\B$ can be interpreted as tables of a relational database: $\U$ contains all the groundings of unary predicates and an index column $i$ which correspond to the `primary key' of the table; matrix $\B$ contains the groundings of binary predicates together with columns $s^x$ and $s^y$ which corresponds to ``foreign keys''  to table $\U$.
		Values refer to preactivations of atoms' truth values.}
    \label{fig:representation}
\end{figure}

\def\alice{a}
\def\bob{b}
\def\chris{c}

\kenn\ considers unary and binary clauses separately, by adopting two separate instantiations of the KE architecture, namely
KE$_u$ and KE$_b$. The former computes the contributionsof the clauses in $\clauses_u$ and the latter computes the contributions of the clauses in $\clauses_b$.
The structure of KE$_u$ and KE$_b$ 
is the same (shown in the right of Figure~\ref{fig:KE_matrix}). The only difference between the two is that KE$_u$
takes in input the preactivations on unary predicates in $\U$,
while KE$_b$ takes in input the preactivations of both unary and binary predicates. Indeed, binary clauses can contain unary predicates and
we need a method to encode unary predicates as binary.

\subsubsection{Binary extensions of unary predicates}
\label{sec:join}
Given a unary predicate $P$, we define its binary extensions as two binary predicates $P^x$ and $P^y$
and impose the constraints: 

  \begin{align}
    \begin{array}{c}
P^x(x,y) \leftrightarrow P(x) \\
      P^y(x,y) \leftrightarrow P(y)
    \end{array}
    \label{eq:binarized-unary-predicates}
  \end{align}
Intuitively, $P^x$ and $P^y$ are binary predicates that ignore the second and the first input, respectively. For instance, let $Smoker(x)$ be the unary predicate that is true if person $x$ smokes. The binary atom $Smoker^x(x,y)$ corresponds to the sentence: ``The first element of the pair $(x,y)$ is a smoker''.

For each binary clause, we can substitute the original unary predicates with their binary extensions. For instance, the clause
$$c: \lnot Smoker(x) \lor \lnot Friends(x,y) \lor Smoker(y)$$
is converted to
$$c': \lnot Smoker^x(x,y) \lor \lnot Friends(x,y) \lor Smoker^y(x,y)$$

It is easy to prove that the truth value of any grounding of $c'$ is the same as the corresponding grounding of $c$:
$$\Big( Smoker^x(\alice,\bob) \leftrightarrow Smoker(\alice) \Big) \land \Big(Smoker^y(\alice,\bob) \leftrightarrow Smoker(\bob)\Big) \models c[x/\alice,y/\bob] \leftrightarrow c'[x/\alice,y/\bob]$$
After this transformation, binary clauses can be seen as unary clauses on the domain of pairs of objects on which we can apply KE$_b$. \




\begin{figure}[t]
	\centering
	\begin{tikzpicture}[scale=0.45, node distance=1mm, align=left]
    
    \node[draw=none, font=\fontsize{12}{0}\selectfont] at (-4,-5) {$\U$};
    
    \matrix [matrix of nodes,column sep=-2.5*\pgflinewidth, row sep=-3*\pgflinewidth, hlabel/.style={draw=none},align=center, label/.style={draw=none, font=\fontsize{7}{0}\selectfont}, nodes={rectangle,draw,minimum width=1.8em, font=\fontsize{9}{0}\selectfont}] (U) at (-4, -8.5)
    {
    |[hlabel]| $i$ & |[hlabel]|$S$   & |[hlabel]|$C$ \\ 
    0 & 0   & {\color{red}-3} \\ 
    1 & 3   & \ 1 \\ 
    2 & 2   & -1 \\ 
    };

    \node[draw=none, font=\fontsize{12}{0}\selectfont] at (4,-5) {$\B$};
    
    \matrix [matrix of nodes,column sep=-3*\pgflinewidth, row sep=-1.8*\pgflinewidth, hlabel/.style={draw=none}, label/.style={draw=none, font=\fontsize{7}{0}\selectfont}, nodes={rectangle,draw,minimum width=1.8em},yshift=-2] (B) at (4, -10.5)
    {
    |[hlabel]| $s^x$ & |[hlabel]| $s^y$ & |[hlabel]|$F$ \\
    0 & 1 & -2 \\
    0 & 2 & \ 1 \\
    1 & 0 & \ 3 \\
    1 & 2 & -1 \\
    2 & 0 & \ 0 \\
    2 & 1 & \ 5 \\
    };
    
    \node[
        below=of B,
        draw=none,
        ultra thick,
        rounded corners,
        minimum height=30,
        minimum width=40, 
        fill=blue!15,
        yshift=-20] (join) {JOIN};

    \path[->, draw] (U) |- (join);
    \path[->, draw] (B) -- (join);
 
    \matrix [matrix of nodes,right=of join, xshift=20,align=center,column sep=-\pgflinewidth, row sep=-2*\pgflinewidth, hlabel/.style={draw=none}, label/.style={draw=none, font=\fontsize{7}{0}\selectfont, text width=2.5em}, nodes={rectangle,draw,minimum width=2em,text width=1em}] (M)
    {
        |[hlabel]|$s^x$ & |[hlabel]|$s^y$ & |[hlabel]|$S^x$   & |[hlabel]|$C^x$ & |[hlabel]|$S^y$   & |[hlabel]|$C^y$  &|[hlabel]|$F$ \\
                0 & 1 & 0  & {\color{red}-3} &  3  & \ 1  & -2 \\
                0 & 2 & 0   & {\color{red}-3} &   2   & -1  & \ 1 \\
                1 & 0 & 3   & \ 1 &  0   & {\color{red}-3} & \ 3 \\
                1 & 2 & 3   & \ 1 & 2   & -1 \ & -1 \\
                2 & 0 & 2   & -1 \ & 0   & {\color{red}-3}  & \ 0 \\
                2 & 1 & 2   & -1 &  3   & \ 1  & \ 5 \\
    };

    \node[draw=none, fit=(M-1-2) (M-7-6)] (wrapper) {};
    \node[draw=none, above=of wrapper, font=\fontsize{12}{0}\selectfont] {$\M$};

    \matrix [matrix of nodes,align=center,below=of M.south east,anchor=north east,yshift=-40,column sep=-\pgflinewidth, row sep=-3*\pgflinewidth, hlabel/.style={draw=none, font=\fontsize{9}{0}\selectfont}, label/.style={draw=none, font=\fontsize{7}{0}\selectfont, text width=2.5em}, nodes={rectangle,draw,minimum width=2em,text width=1em},yshift=-30] (dM)
    {
        |[hlabel]|$s^x$ & |[hlabel]|$s^y$ & |[hlabel]|$\delta S^x$   & |[hlabel]|$\delta C^x$ & |[hlabel]|$\delta S^y$   & |[hlabel]|$\delta C^y$  &|[hlabel]|$\delta F$ \\
                0 & 1 & 1  & {\color{red}-1} &  1 &  0 &  2 \\
                0 & 2 & 0  & {\color{red}1} &  1 &  2 & -1 \\
                1 & 0 & 3  &  0 & -1 &  {\color{red}0} &  0 \\
                1 & 2 & 3  &  3 &  2 & -1 &  0 \\
                2 & 0 & 1  &  1 &  2 & {\color{red}-2} &  1 \\
                2 & 1 & 0  & -2 &  0 &  1 &  0 \\
    };

    \node[draw=none, fit=(dM-1-2) (dM-7-6)] (wrapper2) {};
    \node[draw=none, below=of wrapper2, font=\fontsize{12}{0}\selectfont] {$\delta \M$};

    \node[
        below=of wrapper.south,
        draw=none,
        rounded corners,
        minimum height=30,
        minimum width=40, 
        fill=green!15,
        yshift=-20] (KE) {KE};
    
    \node[
        draw=none,
        left=of KE,
        xshift=-20
    ] (k) {$\clauses_B$};

    \path[draw,dashed,->] (k) -- (KE);
    \path[draw, ->] (join) -- (M);
    \path[draw, ->] (wrapper) -- (KE);
    \path[draw, ->] (KE) -- (wrapper2);
\end{tikzpicture}
    \caption{The JOIN query: all the binary predicates (including the binary extensions) are collected in a unique matrix. The red numbers are all referring to the same grounded atom $C(o_0)$.}
    \label{fig:join}
\end{figure}
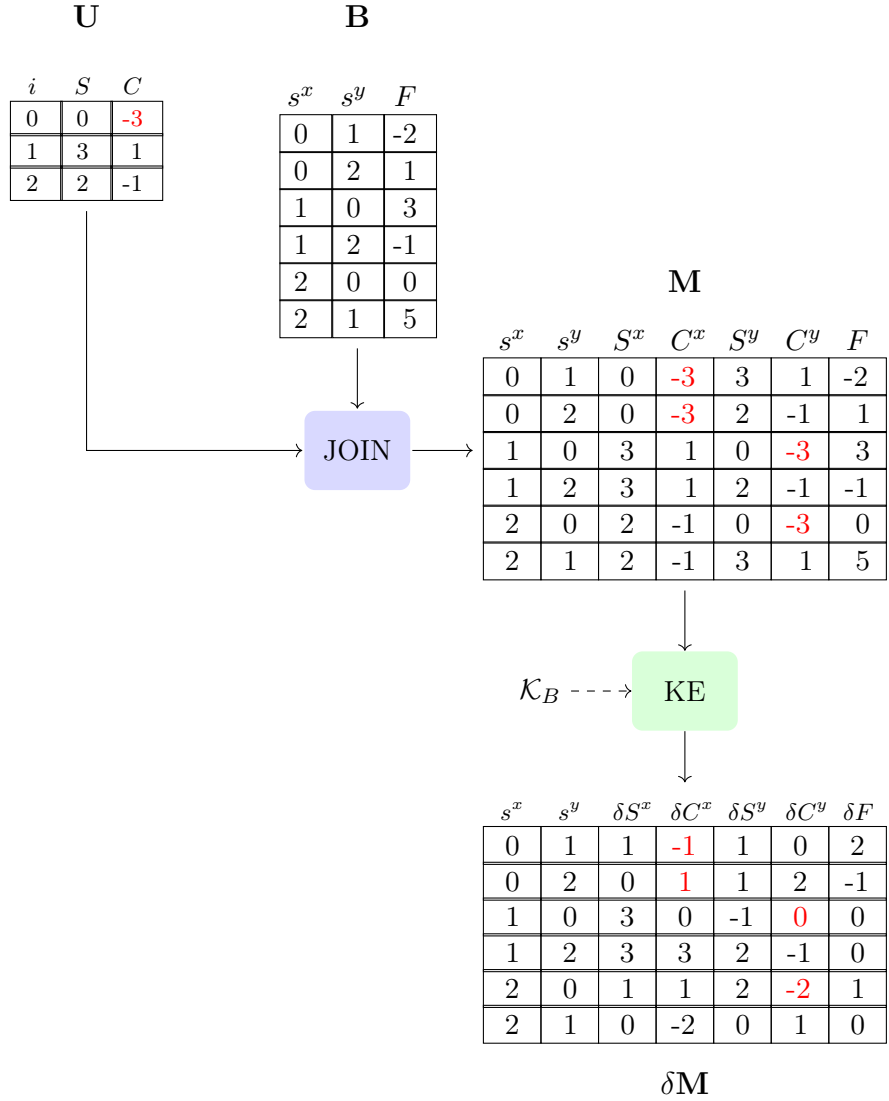


\subsection{Relational Data - \kenn\ architecture}
\label{sec:groupby}

To cope with ``binarized'' unary atoms, table $\B$ is extended with additional columns for the predicates $P^x_i$ and $P^y_i$ for every unary predicate $P_i$. In order to satisfy the constraints in \eqref{eq:binarized-unary-predicates} the values of these new attributes are computed by joining $\B$ and $\U$ on the indexes $i$, $s^x$ and $s^y$. We denote with $\M$ the result of this join:
\begin{lstlisting}[language=SQL, escapeinside={*}{*}]
	*$\M$* = SELECT *$s^x$*, *$s^y$*, *$U_i^x$*, *$U_i^y$*, *$B_j$* 
	     FROM U AS Ux, U AS Uy, B
	     WHERE Ux.*$i$* = B.*$s^x$* AND Uy.*$i$* = B.*$s^y$*
\end{lstlisting}
where $U_i^x$ is a shortcut for selecting all the unary predicates in Ux
(Ux.$U_i$ AS $U_i^x$).

{
The matrix $\M$ defined by the JOIN query is used by KE$_b$ with $\clauses_B$. KE$_b$ returns $\delta \M$, a table of the same shape (and same values for $s^x$ and $s^y$) of $\M$. Such table contains the changes on the initial predictions induced by binary clauses. Figure~\ref{fig:join} shows an example of the application of join and KE$_b$.}

Finally, the different delta values associated with each grounded atom are aggregated together. This can be achieved by the following three queries:

\begin{lstlisting}[language=SQL, escapeinside={*}{*}]
	*$\delta B$* = SELECT *$\delta B_i$*
	     FROM *$\delta M$*
	
	*$\delta U_x$* = SELECT SUM(*$\delta U_i^x$*)
	      FROM *$\delta M$*
	      GROUP BY *$s^x$*

	*$\delta U_y$* = SELECT SUM(*$\delta U_i^y$*)
	      FROM *$\delta M$*
	      GROUP BY *$s^y$*
\end{lstlisting}

\noindent Figure~\ref{fig:groupby} shows the second query, which is used to calculate the $\delta U_x$. 

\begin{figure}[H]
	\centering
	\begin{tikzpicture}[scale=0.45, node distance=1mm, align=left]
    \tikzset{every node}=[font=\fontsize{10}{0}\selectfont];

    \matrix [matrix of nodes,align=center,below=of M.south east,anchor=north east,yshift=-40,column sep=-\pgflinewidth, row sep=-3*\pgflinewidth, hlabel/.style={draw=none, font=\fontsize{9}{0}\selectfont}, label/.style={draw=none, font=\fontsize{7}{0}\selectfont, text width=2.5em}, nodes={rectangle,draw,minimum width=2em,text width=1em},yshift=-30] (dM)
    {
        |[hlabel]|$s^x$ & |[hlabel]|$s^y$ & |[hlabel]|$\delta S^x$   & |[hlabel]|$\delta C^x$ & |[hlabel]|$\delta S^y$   & |[hlabel]|$\delta C^y$  &|[hlabel]|$\delta F$ \\
                0 & 1 & 1  & -1 &  1 &  0 &  2 \\
                0 & 2 & 0  & 1 &  1 &  2 & -1 \\
                1 & 0 & 3  &  0 & -1 &  0 &  0 \\
                1 & 2 & 3  &  3 &  2 & -1 &  0 \\
                2 & 0 & 1  &  1 &  2 & -2 &  1 \\
                2 & 1 & 0  & -2 &  0 &  1 &  0 \\
    };

    \node[
        right=of dM,
        draw=none,
        ultra thick,
        rounded corners,
        minimum height=30,
        minimum width=40, 
        fill=blue!15,
        align=center,
        xshift=10] (gb) {GROUP BY};

    \matrix [matrix of nodes,right=of gb,align=center,column sep=-\pgflinewidth, row sep=-3*\pgflinewidth, hlabel/.style={draw=none, font=\fontsize{9}{0}\selectfont,opacity=.0}, label/.style={draw=none, font=\fontsize{7}{0}\selectfont,opacity=.0}, nodes={rectangle,draw,minimum width=2em,text width=1em,fill=white,opacity=0.6, text opacity=1.},xshift=10] (r)
    {
        |[hlabel]|$s^x$ &|[hlabel]|$\delta S$ &|[hlabel]|$\delta C$ \\
        0 & 1  &  0\\
        1 & 6  &  3 \\
        2 & 1  & -1 \\
    };

    \path[->, draw] (dM) -- (gb);
    \path[->, draw] (gb) -- (r);
\end{tikzpicture}
	\caption{An example of a GROUP BY query: for each unary predicate ($S$ and $C$), the values of their first binary extension ($S^x$ and $C^x$) are summed up based on the index 
	}
    \label{fig:groupby}
\end{figure}
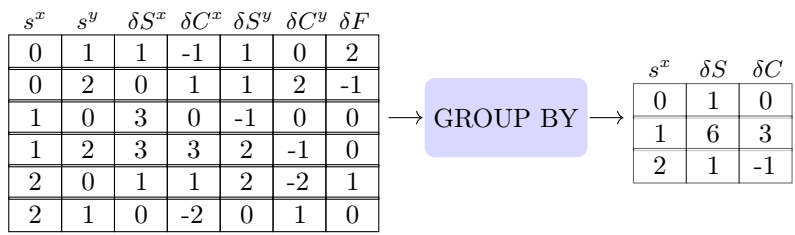

\noindent The final preactivations $\U'$ and $\B'$ are
$$\U' = \U + \delta \U_u + \delta \U_x + \delta \U_y$$
and
$$\B' = \B + \delta \B$$


\noindent Figure~\ref{fig:unary_binary} shows a high-level overview of the entire model.


\begin{figure}
	\centering
	\definecolor{KE}{rgb}{0.1,0.6,0.2}
\definecolor{mappings}{rgb}{0.9,0.2,0.2}

\tikzstyle{block} = [rectangle, draw, fill=black!5, text centered, rounded corners, minimum height=1cm, minimum width=1.8cm, align=center, font=\fontsize{8}{0}\selectfont]

\definecolor{KE}{rgb}{0.1,0.6,0.2}
\definecolor{mappings}{rgb}{0.9,0.2,0.2}

\tikzstyle{block} = [rectangle, draw, fill=black!5, text centered, rounded corners, minimum height=1cm, minimum width=1.8cm, align=center, font=\fontsize{8}{0}\selectfont]

\begin{tikzpicture}[scale=0.4, node distance=1mm, align=left]

    \node[draw=none] (U) at (0,0) {$\U$};
    \node[draw=none, below=of U, yshift=-132] (B) {$\B$};

    \node[
        right=of U,
        draw=none,
        ultra thick,
        rounded corners,
        minimum height=30,
        minimum width=40, 
        fill=green!15,
        xshift=60] (KEu) {KE};

    \node[
        right=of KEu,
        draw=none,
        xshift=35] (dUu) {$\delta \U_u$};
    
    \node[
        draw,
        circle,
        right=of dUu,
        xshift=35,
        font=\fontsize{8}{0}\selectfont] (sum) {$+$};

    \node[
        right=of sum,
        draw=none,
        xshift=35] (dUb) {$\delta \U_b$};

    \node[
        draw,
        below=of dUb,
        circle,
        yshift=-10,
        font=\fontsize{8}{0}\selectfont] (sum2) {$+$};
    
    \node[
        draw=none,
        below=of sum2,
        minimum height=23,
        yshift=-5,
        xshift=-40
    ] (dUx) {$\delta \U_x$};

    \node[
        draw=none,
        below=of sum2,
        minimum height=23,
        yshift=-5,
        xshift=40,
    ] (dUy) {$\delta \U_y$};

    \node[
        below=of dUx,
        draw=none,
        ultra thick,
        rounded corners,
        minimum height=30,
        minimum width=40, 
        fill=blue!15,
        align=center,
        yshift=-15] (rmx) {GROUP BY};
    
    \node[
        below=of dUy,
        draw=none,
        ultra thick,
        rounded corners,
        minimum height=30,
        minimum width=40, 
        fill=blue!15,
        align=center,
        yshift=-15] (rmy) {GROUP BY};

    \node[
        below=of dUb,
        draw=none,
        ultra thick,
        rounded corners,
        minimum height=30,
        minimum width=40, 
        fill=blue!15,
        align=center,
        yshift=-154] (rm) {SELECT};
    
    \node[draw=none, fit=(rmx) (rmy) (rm)] (queries_wrapper) {};

    \node[
        right=of rm,
        draw=none,
        xshift=70] (dB) {$\delta \B$};

    \node[
        above=of dB,
        yshift=190,
        draw=none] (dU) {$\delta \U$};

    \node[
        left=of queries_wrapper,
        draw=none,
        xshift=5] (dM) {$\delta \M$};

    \node[
        left=of dM,
        draw=none,
        ultra thick,
        rounded corners,
        minimum height=30,
        minimum width=40, 
        fill=green!15,
        xshift=-15] (KEb) {KE};
    
    \node[
        left=of KEb,
        draw=none,
        xshift=-15] (M) {$\M$};
    
    \node[
        left=of M,
        draw=none,
        ultra thick,
        rounded corners,
        minimum height=30,
        minimum width=40, 
        fill=blue!15,
        xshift=-15] (m) {JOIN};
    
    \node[draw=none, fit=(m) (rm)] (wrapper) {};


    \begin{scope}[on background layer]
        \node[
            fit=(m) (KEu) (rmy) (rm),
            draw=none,
            rounded corners,
            minimum height=240,
            minimum width=360,
            yshift=8, 
            fill=black!3] (kenn) {};
    \end{scope}

    \path[draw, ->] (U) -- (KEu);
    \path[draw, ->] (KEu) -- (dUu);
    \path[draw, ->] (dUu) -- (sum);
    \path[draw, ->] (sum) |- (dU);
    \path[draw, ->] (dUb) -- (sum);
    \path[draw, ->] (rmx) -- (dUx);
    \path[draw, ->] (rmy) -- (dUy);
    \path[draw, ->] (dUx) |- (sum2);
    \path[draw, ->] (dUy) |- (sum2);
    \path[draw, ->] (sum2) -- (dUb);
    \path[draw, ->] (rm) -- (dB);
    \path[draw, ->] (dM) -| (rmx);
    \path[draw, ->] (dM) -| (rmy);
    \path[draw, ->] (dM) -| (rm);
    \path[draw, ->] (KEb) -- (dM);
    \path[draw, ->] (M) -- (KEb);
    \path[draw, ->] (m) -- (M);
    \path[draw, ->] (U) -- (U|-m.170) |- (m.170);
    \path[draw, ->] (B) -- (m.west |- B);

    \node[below=of KEu, draw=none, yshift=-15] (Ku) {$\clauses_u$};
    \node[above=of KEb, draw=none, yshift=15] (Kb) {$\clauses_b$};
    
    \draw[dashed, ->] (Ku) to (KEu);
    \draw[dashed, ->] (Kb) to (KEb);
    
\end{tikzpicture}
	\caption{
	To obtain $\delta \U_u$, unary clauses $\clauses_U$ are used by KE directly on $\U$. The JOIN query is used to find matrix $\M$. Then, the KE is applied to $\M$ using binary clauses $\clauses_B$. Finally, to obtain $\delta \U_b$ and $\delta \B$ other three queries are used.}
    \label{fig:unary_binary}
\end{figure}
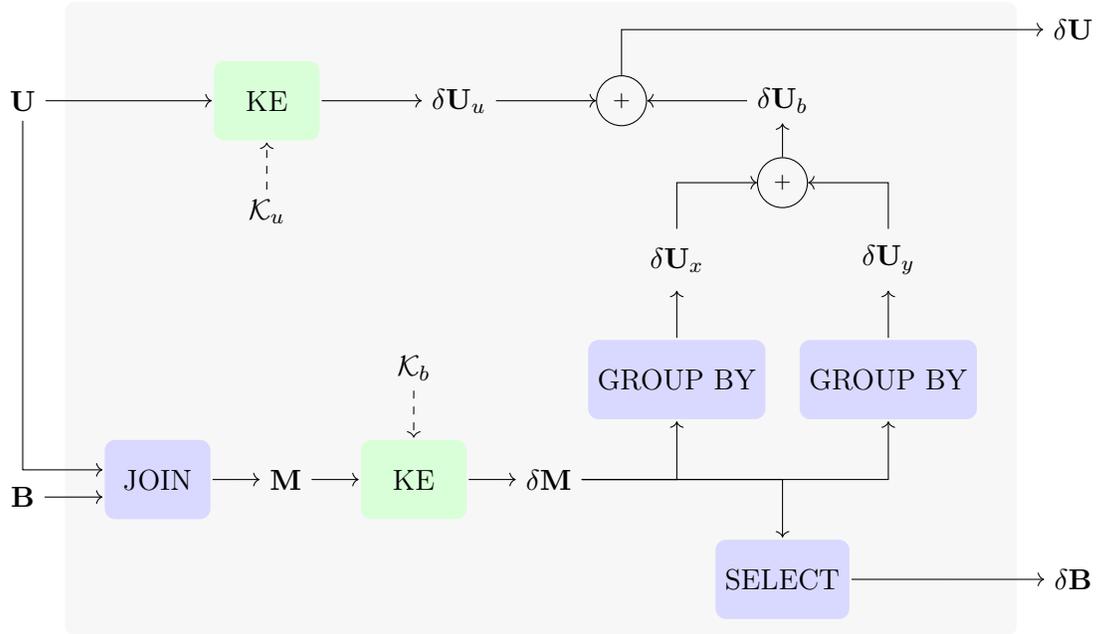

\subsection{Knowledge for inputs/outputs relationships}
\label{sec:KENN_inputs}
Until now, we defined \kenn\ as a method that injects logical knowledge which represents relationships between different predictions of a Neural Network. However, it is even possible to provide knowledge which expresses relationships between inputs and outputs values. In other words, if some of the inputs are truth values of some atomic formulas, we can add clauses that contain them, providing in this way information on the relations between inputs and outputs. 

Figure~\ref{fig:KENN_inputs} introduces this idea: the solution is straightforward, we just need to concatenate the inputs of the base NN with its outputs before applying the KE. The KE will provide an updated version of both inputs ($\bx'$) and outputs ($\by'$). Since we are interested only in the outputs, the last step discards the changed inputs $\bx'$.

\begin{figure}[H]
	\centering
	\definecolor{predicates}{rgb}{0.9,0.2,0.2}
\definecolor{clauses}{rgb}{0.3,0.3,0.7}
\definecolor{inputs}{rgb}{0.1,0.6,0.2}
\definecolor{outputs}{rgb}{0.9,0.2,0.2}

\tikzstyle{block} = [rectangle, draw, fill=green!8, text centered, rounded corners, minimum height=2cm, minimum width=3cm, align=center, font=\fontsize{16}{0}\selectfont]
\tikzstyle{block2} = [rectangle, draw, fill=green!8, text centered, rounded corners, minimum height=2.4cm, minimum width=4cm, align=center, font=\fontsize{8}{0}\selectfont, node distance=1cm]
\scalebox{0.65}{
		\begin{tikzpicture}[scale=0.5, node distance=5cm, align=left, font=\fontsize{16}{0}\selectfont]
            \node[
                draw=none,
            ] (x) at (0, 4.5) {$\bx$};

            \node[
                draw=none,
                rounded corners,
                minimum height=1.5cm,
                minimum width=2.5cm,
                black!35,
                fill=black!5
            ] (NN) at (7,0){};
            
            \node[
                text centered, font=\fontsize{16}{0}\selectfont, align=center
            ] at (7,0) {NN};

            \node[
                draw=none,
            ] (zy) at (13.5,0) {$\bz_y$};

            \node[
                draw=none
            ] (epsilon) at (4,13) {$\epsilon$};

            \node[
                circle,
                draw,
                minimum height=1.3cm
            ] (pm) at (4,9) {$\pm$};

            \node[
                circle,
                draw,
                minimum height=1.3cm
            ] (logit) at (9,9) {$\sigma^{-1}$};

            \node[
                draw=none,
            ] (zx) at (13.5,9) {$\bz_x$};
            
            \node[
                draw,rounded corners, 
                minimum height=1.5cm,
                minimum width=2.6cm
            ] (concat) at (19,4.5){};
            
            \node[
                text centered, font=\fontsize{16}{0}\selectfont, 
                align=center,
                minimum height=1.5cm,
                minimum width=2.6cm
            ] at (19,4.5) {concat};

            \node[
                draw=none,
            ] (zz) at (25.5,4.5) {$[\bz_x,\bz_y]$};

            \node[
                draw=none,
                ultra thick,
                rounded corners,
                minimum height=1.5cm,
                minimum width=2.5cm,
                inputs,
                fill=green!15] (KE) at (32,4.5){};
                \node[text centered, font=\fontsize{16}{0}\selectfont, align=center] at (32,4.5) {KE};
            
            \node[
                draw=none,
            ] (kb) at (27,-1) {$\clauses$};

            \node[
                draw=none,
            ] (xy) at (38.5,4.5) {$[\bx',\by']$};

            \node[
                draw=none,
            ] (y) at (43,4.5) {$\by'$};

            \path[draw, ->] (x) |- (NN);
            \path[draw, ->] (NN) -- (zy);
            \path[draw, ->] (x) -| (pm);
            \path[draw, ->] (epsilon) -- (pm);
            \path[draw, ->] (pm) -- (logit);
            \path[draw, ->] (logit) -- (zx);
            \path[draw, ->] (zy) |- (concat);
            \path[draw, ->] (zx) |- (concat);
            \path[draw, ->] (concat) -- (zz);
            \path[draw, ->] (zz) -- (KE);
            \path[draw, ->, dashed] (kb)  .. controls (31, 0) and (32, 2) ..  (KE.south);
            \path[draw, ->] (KE) -- (xy);
            \path[draw, ->] (xy) -- (y);
        \end{tikzpicture}}

%
%
%
%
%
	\caption{KENN used with knowledge on both inputs and predictions: the ``preactivations'' $\bz_x$ of inputs $\bx$ are calculated with the logit function and concatenated to the preactivations of the outputs $\by$. The result can be used by KE to force the satisfaction of the knowledge.}
	\label{fig:KENN_inputs}
\end{figure}
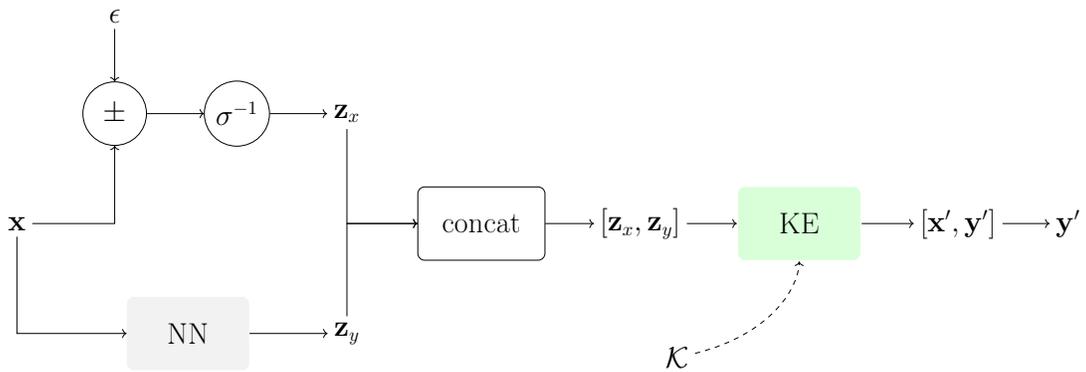

Notice that, since the KE works with preactivations, the logit function (the inverse of logistic function) must be applied to the inputs before the concatenation. Moreover, the logit is defined in the range $(0,1)$, which means that if the truth values in $\bx$ are zeros and ones, logit can not be applied directly. For this reason, a small number $\epsilon$ is added or subtracted to $\bx$ before applying the logit. Alternatively, $\bz_x$ can be set to a high value when $\bx$ is positive and a very high negative number when it is negative (in our experiments we used 500 and -500).

\section{Related Work}
\label{sec:rel_work_2}

In this section we will present 
two 
Neural-Symbolic systems and provide a comparison with \kenn. More specifically, the 
two methods are SBR and RNM.
These approaches are the most relevant in the context of this paper since they can combine general neural networks models with FOL knowledge and they can be applied in similar contexts of \kenn. For this reason, it is particularly relevant to provide a comparison of \kenn\ with them. More specifically, the comparison between \kenn \ 
and SBR is relevant since it can highlight the pros and cons of adding the knowledge through the Loss function as opposed to injecting it into the model structure. On the other hand, RNM follows a similar philosophy to \kenn, but with a different choice on the way the knowledge is implemented inside the model. For these reasons, the 
tow 
methods will be further compared with \kenn\ later, but in terms of empirical results.

\subsection{Injecting knowledge using the Loss function}
To combine logical knowledge with machine learning approaches there are two common strategies: incorporate the knowledge by including it in the Loss function or introducing it inside the model. In the first category, one of the most prominent works is Semantic Based Regularization (SBR).

SBR is based on the concept of constraint. The idea is that, given a set of logical rules, those rules induce constraints on the acceptable outputs of the model. SBR integrate those rules into the learning framework by regularizing the Loss function with an additional term that penalizes solutions that do not satisfy the constraints. To do so, it relies on a continuous relaxation of the logical rules inducted by fuzzy operators.



\subsubsection{Semantic Based Regularization}
As already mentioned, SBR introduces the knowledge during learning through the usage of a regularization term which increases when the constraints are not satisfied. The satisfaction of a constraint is calculated using a fuzzy generalization of the logic operators which is continuous and differentiable. The regularization term has the following form:
$$
R(f) = \sum_{h=1}^{H}{\lambda_h (1 - \phi_h(f))}
$$
where $H$ is the number of constraints, $\lambda_h$ is the weight associated to the $h^{th}$ constraint, $f$ is the vector of functions that represent the predicates (these are learned) and $\phi_h(f)$ is the level of satisfaction of the $h^{th}$ constraint.

Notice that, since the weights $\lambda_h$ are part of the Loss function, there is no way to let the back-propagation algorithm learn them and for this reason, they are assumed to be known a priori. 
This is one of the major drawbacks of these kinds of methods as opposed to strategies that, like \kenn, directly encode the knowledge into the model. Indeed, not always the final user of the method knows in advance the importance of a specific logical rule, and in some cases, some rule could be not correct. Moreover, by allowing rules' weights to be learnable instead of being hyper-parameters, it is in theory possible to incorporate random rules and rely on the learning algorithm to select the correct one by reducing the corresponding weights. In this way, it would be possible to discover new symbolic knowledge from data.

Figure~\ref{fig:hs_sbr} shows a representation of the Hypothesis Space (HS), i.e., the set of all the possible functions representable by the model, and uses colors to represent the value of the Loss function and the regularization term on the different hypothesis. In the shown example, the HS is represented as a subset of $\Reals^2$ and there is only one logical rule. Of course, this is not intended as a realistic HS of a neural network and the goal here is just to provide intuition on the effect of the regularizer on the training process. Red color represents high values for the loss and regularization term, while green color corresponds to low loss. The goal of the training process is to find a minimum of such function. To train a Neural Network, the standard approach is to use back-propagation, an efficient implementation of gradient descent. The algorithm starts from a random solution (the black dot in the figure). At each training step, the  forward pass calculates the predictions of the current hypothesis and uses these predictions to calculate the value of the loss function. Then, in the backward pass, the gradient of the loss function with respect to the parameters of the model is calculated and used to update the parameters. After multiple steps, a local minimum is typically reached (black cross in the figure). The top left image shows a possible evolution of this procedure when applied to the original Loss function. The top right image depicts the regularization term. Finally, the bottom image shows the values of the combination of Loss and regularizer. Here, the gradient descent, starting from the same initial hypothesis of the top left image, reaches a different hypothesis which satisfies more the given constraint on the Training Set.

Finally, notice that while the constraints are enforced at training time, there are no guarantees that they will be satisfied at inference time as well since the Loss function is calculated based on the predictions on the Training Set. To obviate this problem, at training time unlabelled data can be provided as well. In this way, the learning process has more examples to learn to enforce the knowledge even at test time. Another possibility is to enforce the constraints not only during training, but also at inference time. However, notice that in this case the time complexity of inference is increased, since the back-propagation needs to be used even in this case.

\begin{figure}
    \makebox[\textwidth][c]{
        \includegraphics[scale=0.38]{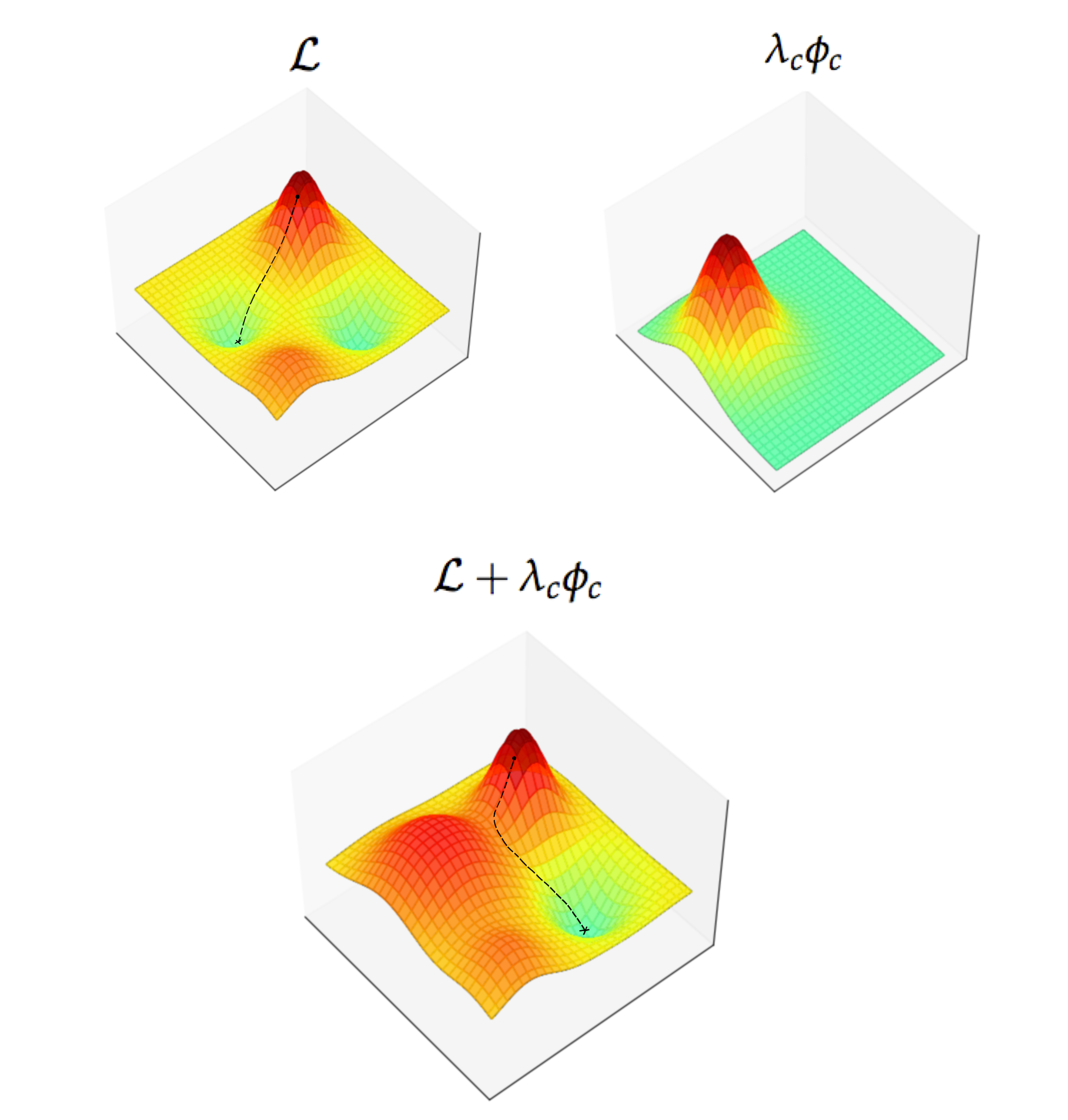}
    }
	\caption{The interaction of loss and regularization on the Hypothesis Space for SBR. On the top left, to each point of the HS it is associated a loss value. Similarly, on the top right the regularization term associates a penalty to each solution that does not satisfy a constraint $c$. The total loss, calculated by summing the two, is on the bottom. The image shows also a possible execution of the gradient descend for both the original Loss function and the regularized version. The training algorithm starts from an initial random solution (black dot) and moves inside the hypothesis space reaching a local minimum that is the final trained model. After the regularization (bottom) it is easier to reach solutions that satisfy $c$ as compared with the original Loss function (top left).}
    \label{fig:hs_sbr}
\end{figure}

\subsection{\kenn\ vs methods based on Loss}
\label{sec:kenn_vs_sbr}

\kenn\ englobes the knowledge into the network in a very different way than SBR
. As already discussed, SBR 
acts on the Hypothesis Space (HS) by changing the associated loss function according to the constraint.

The approach of 
SBR is equivalent to ``remove'' solutions from the HS that do not satisfy the constraints by penalizing them during training (they are not really removed, but difficult or impossible to reach because of the penalty given by the Loss function), while in \kenn\ the approach is the opposite: new solutions are added to the Hypothesis Space (HS) by a new additional layer, called \emph{Knowledge Enhancer}. 

For this reason, 
SBR needs to use a model that is already capable of representing functions that satisfy the constraints and a bias towards their satisfaction is introduced by penalizing the other solutions. If the HS does not contain solutions that satisfy the constraints, 
SBR can not impose their satisfaction since they are limited by the set of hypotheses in the HS.

On the other hand, \kenn\ starts from a NN with a lower capacity which is not capable to satisfy the constraints on its own and the knowledge is introduced by adding new solutions to the HS by modifying the existing ones. For this reason, \kenn\ typically does not work well with NNs that are already capable of satisfying the clauses, since it does not introduce any bias towards their satisfaction.

Summarizing, to work properly, 
methods based on Loss need a model with high capacity able to express the required Knowledge, while for \kenn\ it is the opposite. As an example, consider a Logistic Regression (LR), i.e. a neural network with no hidden layers. It is well known that with this kind of network it is not possible to represent the XOR operator $\oplus$~\citep{XOR1,XOR2}:
\begin{equation}
	\oplus(x_1,x_2) = 
	\left\{
	  \begin{array}{rl}
		0  & \qquad \mbox{if \ $x_1 = 0 \land x_2 = 0$} \\
		1  & \qquad \mbox{if \ $x_1 = 0 \land x_2 = 1$} \\
		1  & \qquad \mbox{if \ $x_1 = 1 \land x_2 = 0$} \\
		0  & \qquad \mbox{if \ $x_1 = 1 \land x_2 = 1$}
	\end{array}
	\label{eq:XOR}
	\right.
\end{equation}

Suppose we want to express with the knowledge that the target function is indeed the XOR function. 
SBR does
not change the model structure and 
it imposes
the knowledge by acting on the weights of the model. Therefore, it is not possible for it to force the satisfaction of the rule with a LR model.

On the other hand, such a goal can be achieved by \kenn\ using the architecture and knowledge shown in Figure~\ref{fig:XOR}.

The strategy is the same of Section~\ref{sec:KENN_inputs}: the input vector $\bx \in \{0,1\}^2$ is passed to a logistic regression network to obtain initial preactivations $\bz_y$. In parallel, the inverse of logistic function is applied to $\bx$ to find its ``preactivations'' $\bz_x$. By concatenating the $\bz_x$ and $\bz_y$, it is possible to use the KE to inject the clauses which represent the XOR operator's behavior.

\begin{figure}[H]
	\centering
	\definecolor{predicates}{rgb}{0.9,0.2,0.2}
\definecolor{clauses}{rgb}{0.3,0.3,0.7}
\definecolor{inputs}{rgb}{0.1,0.6,0.2}
\definecolor{outputs}{rgb}{0.9,0.2,0.2}

\tikzstyle{block} = [rectangle, draw, fill=green!8, text centered, rounded corners, minimum height=2cm, minimum width=3cm, align=center, font=\fontsize{16}{0}\selectfont]
\tikzstyle{block2} = [rectangle, draw, fill=green!8, text centered, rounded corners, minimum height=2.4cm, minimum width=4cm, align=center, font=\fontsize{8}{0}\selectfont, node distance=1cm]
\scalebox{0.60}{
		\begin{tikzpicture}[scale=0.5, node distance=5cm, align=left, font=\fontsize{16}{0}\selectfont]
            \node[
                draw=none,
            ] (x) at (4.5,0) {$\bx$};

            \node[
                draw=none,
                rounded corners,
                minimum height=1.5cm,
                minimum width=2.5cm,
                black!35,
                fill=black!5
            ] (NN) at (0,7){};
            
            \node[
                text centered, font=\fontsize{16}{0}\selectfont, align=center
            ] at (0, 7) {LR};

            \node[
                draw=none,
            ] (zy) at (0,13.5) {$\bz_y$};

            \node[
                draw=none
            ] (epsilon) at (13,4) {$\epsilon$};

            \node[
                circle,
                draw,
                minimum height=1.3cm
            ] (pm) at (9,4) {$\pm$};

            \node[
                circle,
                draw,
                minimum height=1.3cm
            ] (logit) at (9,9) {$\sigma^{-1}$};

            \node[
                draw=none,
            ] (zx) at (9,13.5) {$\bz_x$};
            
            \node[
                draw,
                minimum height=1.5cm,
                minimum width=2.8cm
            ] (concat) at (4.5,17){};
            
            \node[
                text centered, font=\fontsize{16}{0}\selectfont, align=center
            ] at (4.5, 17) {concat};

            \node[
                draw=none,
            ] (zz) at (4.5,21.5) {$[\bz_x,\bz_y]$};

            \node[
                draw=none,
                ultra thick,
                rounded corners,
                minimum height=1.5cm,
                minimum width=2.5cm,
                inputs,
                fill=green!15] (KE) at (4.5,26){};
                \node[text centered, font=\fontsize{16}{0}\selectfont, align=center] at (4.5, 26) {KE};
            
            \node[
                draw=none,
            ] (kbl) at (-8,30) {$\clauses$};

            \node[
                draw=none,
            ] (kb1) at (-8,28) {$\lnot x_1 \lor \lnot x_2 \lor \lnot y$};

            \node[
                draw=none,
            ] (kb2) at (-8,26) {$\lnot x_1 \lor \ \ \ x_2 \lor \ \ y$};

            \node[
                draw=none,
            ] (kb3) at (-8,24) {$\ \ x_1 \lor \lnot x_2 \lor \ \ y$};

            \node[
                draw=none,
            ] (kb4) at (-8,22) {$\ \ x_1 \lor \ \ x_2 \ \lor \lnot y$};
                
            \node[
                draw,
                fit=(kbl) (kb1) (kb2) (kb3) (kb4),
                dashed,
                rounded corners
            ] (wrapper) {};

            \node[
                draw=none,
            ] (xy) at (4.5,30.5) {$[\bx',\by']$};

            \node[
                draw=none,
            ] (y) at (4.5,34) {$\by'$};

            \path[draw, ->] (x) -| (NN);
            \path[draw, ->] (NN) -- (zy);
            \path[draw, ->] (x) -| (pm);
            \path[draw, ->] (epsilon) -- (pm);
            \path[draw, ->] (pm) -- (logit);
            \path[draw, ->] (logit) -- (zx);
            \path[draw, ->] (zy) |- (concat);
            \path[draw, ->] (zx) |- (concat);
            \path[draw, ->] (concat) -- (zz);
            \path[draw, ->] (zz) -- (KE);
            \path[draw, ->, dashed] (wrapper) -- (KE);
            \path[draw, ->] (KE) -- (xy);
            \path[draw, ->] (xy) -- (y);
        \end{tikzpicture}}

%
%
%
%
%
    \caption{XOR clauses applied on a LR model.}
    \label{fig:XOR}
\end{figure}
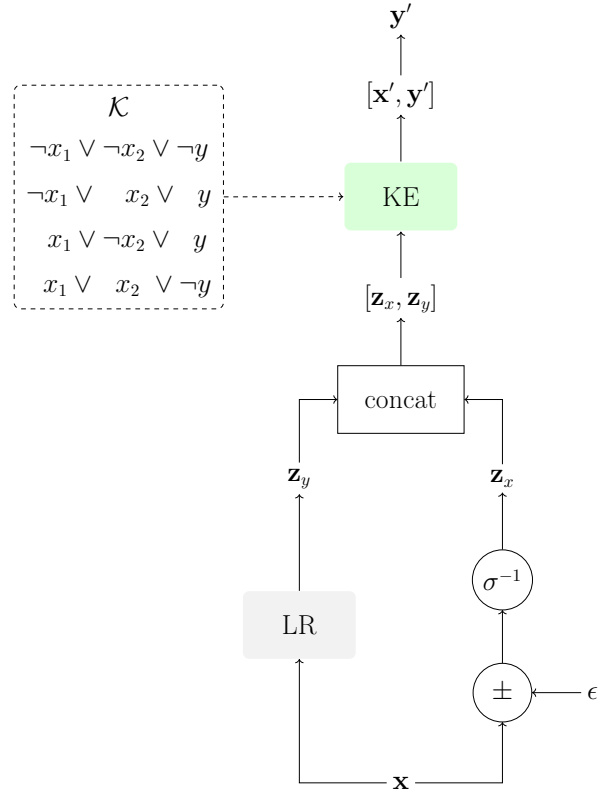

To see how this works, suppose the inputs values are $x_1 = 1$ and $x_2 = 1$. In this case, the preactivations calculated by the logit function $\sigma^{-1}$ will be very high (supposing small values of $\epsilon$). This means that preactivations for $\lnot x_1$ and $\lnot x_2$ will be very small. For this reason, the CE that enforces the first clause will increase $\lnot y$, since only the highest literal is increased. Similarly, the other three clauses will not affect the predictions of $y$. Indeed, if the clause weights have high values, the model of Figure~\ref{fig:XOR} represents the XOR function. This can be done because \kenn\ modifies the model, meaning that the limitations of LR do not apply anymore. 

Notice that the weights of the clauses can be set to zero and, as a consequence, the functions representable by LR can still be reached by the training process. Indeed, \kenn\ modifies the HS only by adding new hypotheses, not removing them. Figure~\ref{fig:hs_kenn} shows the effects of the KE layer on the HS.

\begin{figure}
    \makebox[\textwidth][c]{
        \includegraphics[scale=0.5]{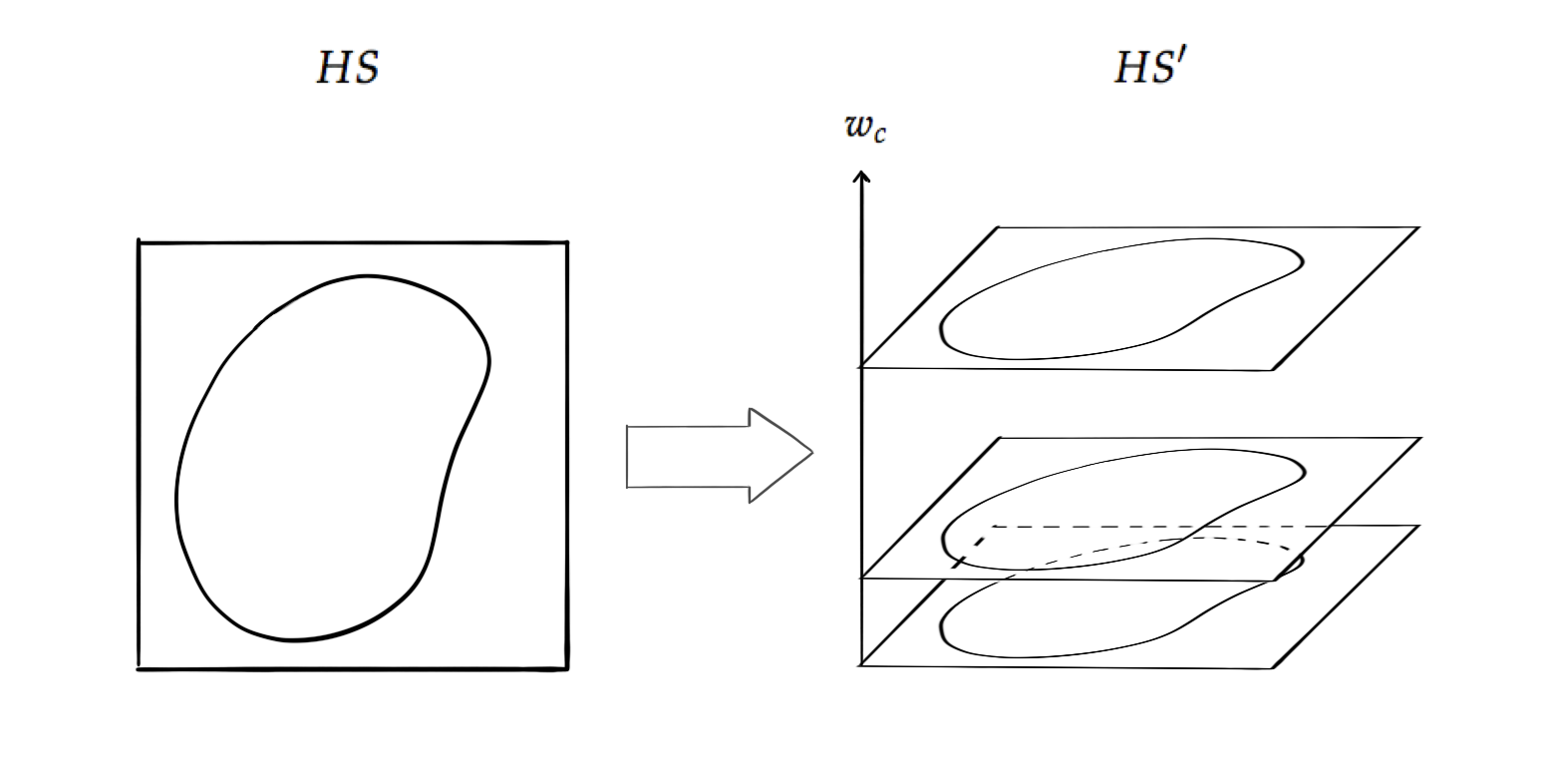}
    }
    \caption{The effect of \kenn\ on the Hypothesis Space. The left image shows the original Hypothesis Space, the right one the Hypothesis Space represented by \kenn\ model. In this picture, the original HS is represented as a portion of a plane. The effect of adding a clause $c$ is to increase the number of parameters of the model of one unit (the clause weight $w_c$). On the HS, this can be seen as adding a new dimension, where points with high values in that dimension have a higher ability to satisfy the constraints.}
    \label{fig:hs_kenn}
\end{figure}

Given a clause, the model is extended with one parameter (the clause weight), which is shown in the Figure as a new dimension in the HS. Higher values of this parameter imply higher satisfaction of the corresponding clause. The approach relies on the idea that if the clauses are satisfied in the Training Set, then the new introduced hypotheses (with clause weights greater than zero) are more capable of fitting the data since there are no solutions in the original HS that can satisfy the clauses as well as the new ones. Indeed, any additional hypothesis introduced by \kenn\ is obtained by adding a new layer that changes the output of the underlying network to increase the constraint satisfaction.


\subsection{Relational Neural Machines}

RNM is a framework that integrates a neural network model with a FOL reasoner. This is done in two stages: in the first one, a Neural Network $\mathbf{f}$ is used to calculate initial predictions for the atomic formulas; in the second stage a graphical model is used to represent a probability distribution over the set of atomic formulas. 

The distribution is defined as follow:
$$
P(\by|\mathbf{f}, \lambda) = \frac{1}{Z} exp\Big( \sum_{x \in S} \phi_0(\mathbf{f}(x), \by) + \sum_c \lambda_c \phi_c(\by) \Big)
$$
where $Z$ is the partition function, $\by$ are the grounded atoms predictions, $\mathbf{f}$ is the function codified by the Neural Network, $\phi_0$ is a potential that enforces the consistency between the predictions of the NN and the final predictions, $\lambda_c$ is the weight of constraint $c$ and $\phi_c$ is a potential that enforces the satisfaction of the constraint (higher if the constraint is satisfied).

To obtain the final predictions a Maximum a Posteriori (MAP) estimation is performed, finding the most probable assignment to the grounded atoms given the output of the Neural Network and the set of constraints:
$$
\by^* = \argmax_{y} P(\by|\mathbf{f}, \lambda)
$$

\subsection{Comparison with \kenn}

At a high-level RNM approach is similar to \kenn, since in both cases a Neural Network makes initial predictions and a post elaboration step is applied on such predictions to provide the final classification. However, RNM requires to solve an optimization problem at inference time and after each training step. This has the advantage of considering all the logical rules together at the same time at the expense of an increased computational effort. Contrary, in \kenn\ each rule is considered separately from the others, and the second stage is directly integrated inside the model as a differentiable function that can be trained end to end with the base Neural Network.

However, one could argue that with this strategy there could be some contradictory changes when combining multiple clauses with the same predicates. For instance, if the knowledge is composed by the two clauses $c_1: A \lor B$ and $c_2: \lnot B \lor C$, then the summation strategy introduced in Section~\ref{sec:entire_knowledge} would not force the satisfaction of $A \lor C$, which is a logical consequence of the two clauses. Indeed, the effect of the \boost \ could be to increase the value of $B$ when applied to $c_1$ and decrease it when applied on $c_2$. We call this type of situations as \emph{collisions}. 

For simplicity, let us take the assumption that the improvements are provided by $\delta^{w_c}$ instead of $\delta_s^{w_c}$, i.e., they are calculated by using the $argmax$ operator instead of the $softmax$. In the case of a collision, the final change on $A$ and $C$ would be $0$, while the change on $B$ would be the difference of the two clause weights: $w_{c_1} - w_{c_2}$. Therefore, when a collision happens only the stronger clause is taken into consideration, and its effect is lowered by the presence of the other clause.

A question arises: are collisions likely to happen? For now, let us assume that the base NN is a random classifier which extracts the initial predictions from a uniform distribution. The probability of $B$ to be increased by the first clause is $1/2$, which is also the probability of $\lnot B$ to be increased based on $c_2$. However, the probability of a collision is lower than $1/4$, since the two events are not independent. We remind the reader that $B$ is chosen to be increased based on $c_1$ if its truth value $\INN(B)$ is higher than $\INN(A)$, while $\lnot B$ is increased if $1 - \INN(B)$ is greater than $\INN(C)$. Therefore, if $x$ is the value of $\INN(B)$, then the probability of a collision could be expressed as:
$$
\int_{0}^{1} x (1 - x) dx = (\tfrac{1}{2} x^2 - \tfrac{1}{3} x^3) \Big|_0^1 = \tfrac{1}{6}
$$

More in general, let $c_1$ and $c_2$ be two clauses that share a common predicate with opposite sign. Lets $n$ and $m$ be the number of literals of $c_1$ and $c_2$ respectively. The probability of a collision is expressed by the beta function:
$$
B(n,m) = \int_{0}^{1} x^{1-n} (1 - x)^{1-m} dx
$$

In table~\ref{beta_function} we can see that the probability of a collision with different numbers of literals for the two clauses. Notice that the probability becomes quite small with an increasing number of literals. Moreover, these values are calculated from the assumption that the neural network returns random predictions.
If we assume that the base NN performs at least as good as a random classifier, then the values of Table~\ref{beta_function} represent an upper bound for the probability of having a collision.


\begin{table}
	\caption{Beta function values}
	\centering
	\begin{tabular}{l|l|l}
		n & m & B(n,m) \\
		\hline
		2 & 2 & 0.167 \\
		2 & 3 & 0.083 \\
		3 & 3 & 0.033 \\
		3 & 4 & 0.017 \\
		4 & 4 & 0.007 \\
	\end{tabular}
	\label{beta_function}
\end{table}

In general, we could expect better results from RNM in respect to \kenn, but faster training and inference from \kenn. However, as we will see in Section~\ref{sec:collective}, \kenn\ seems to work better in practice. One possible explanation is that in RNM the model is not trained end to end, making more difficult the learning process as compared to \kenn.


\section{Implementation}
\kenn\ has been implemented as a library for Python 3. It is based on TensorFlow 2 and Keras and it is available as an open-source project on Github\footnote{\href{https://github.com/DanieleAlessandro/KENN2}{https://github.com/DanieleAlessandro/KENN2}}.
\section{Evaluation of the model}\label{sec:collective}

In this section, we focus on experiments with relational data. More precisely, \kenn\ was tested on the context of Collective Classification: given a graph,
we are interested in finding a classification for its nodes using both features of the nodes (the objects) and the information coming from the edges of the graph (relations between objects)~\citep{collective_classification}.

In Collective Classification, there are two different learning paradigms: inductive and transductive learning. In inductive learning there are two separates graphs, one for training and the other for testing. On the contrary, in transductive learning there is only one graph that contains nodes both for training and testing. In other words, in inductive learning there are no edges between nodes for training and testing, while in transductive learning there are. Figure~\ref{fig:collective} shows the difference between the two paradigms.

\begin{figure}
	\centering
\newcommand{\mynode}[4]{
    \node[draw, circle, fill=#4, minimum width=1cm] (O#1) at (#2, #3) {};  
}

\newcommand{\myedge}[3]{
    \draw[color=black, ->] (O#1) to [bend left=#3] (O#2);
}

\begin{tikzpicture}[scale=0.45, node distance=1mm, align=left]
    \tikzset{every node}=[font=\fontsize{16}{0}\selectfont];

    \mynode{0}{0}{0}{red!20}
    \mynode{1}{-2}{3}{blue!20}
    \mynode{2}{3}{3}{blue!20}
    \mynode{3}{-1}{-4}{red!20}
    \mynode{4}{4}{-1}{green!20}
    
    \mynode{5}{16}{3}{none}
    \mynode{6}{12}{2}{none}
    \mynode{7}{14}{-1}{none}
    \mynode{8}{20}{0}{none}
    \mynode{9}{17}{-3}{none}

    \myedge{2}{1}{-5}
    \myedge{0}{3}{0}
    \myedge{3}{4}{0}
    \myedge{3}{1}{15}

    \myedge{5}{6}{-3}
    \myedge{8}{7}{0}
    \myedge{7}{5}{0}
    \myedge{8}{5}{-5}
    \myedge{9}{7}{0}

    \node[draw=none] at (9,-8) {Inductive Learning};

    \mynode{t0}{0}{-15}{red!20}
    \mynode{t1}{-2}{-12}{blue!20}
    \mynode{t2}{3}{-12}{blue!20}
    \mynode{t3}{-1}{-19}{red!20}
    \mynode{t4}{4}{-16}{green!20}
    
    \mynode{t5}{16}{-12}{none}
    \mynode{t6}{12}{-13}{none}
    \mynode{t7}{14}{-16}{none}
    \mynode{t8}{20}{-15}{none}
    \mynode{t9}{17}{-18}{none}

    \myedge{t2}{t1}{0}
    \myedge{t0}{t3}{0}
    \myedge{t3}{t4}{0}
    \myedge{t3}{t1}{15}

    \myedge{t5}{t6}{0}
    \myedge{t8}{t7}{0}
    \myedge{t7}{t5}{0}
    \myedge{t8}{t5}{-5}
    \myedge{t9}{t7}{0}

    \myedge{t6}{t2}{-5}
    \myedge{t7}{t2}{-5}
    \myedge{t4}{t7}{5}
    \myedge{t3}{t9}{-5}

    \node[draw=none] at (9,-23) {Transductive Learning};

\end{tikzpicture}
    \caption{The two learning paradigms in Collective Classification. The colors represent the classes of the nodes. White nodes are nodes of the Test Set. In inductive learning, Training and Test sets are two distinct graphs: the network has to learn only from labeled data; in transductive learning, there is a unique graph with both training and test nodes: the network can make use of information coming from the Test Set at training time by considering the additional relations.}
    \label{fig:collective}
\end{figure}


The tests were performed with the goal of obtaining a comparison with other important methods of Neural-Symbolic integration. For this reason, we followed the evaluation methodology of~\citep{RNM}, where the experiments have been carried out on Citeseer dataset~\citep{citeseer} using SBR and RNM. As in~\citep{RNM}, the experiments have been performed on both inductive and transductive paradigms\footnote{Source code of the experiments available at \href{https://github.com/rmazzier/KENN-Citeseer-Experiments}{https://github.com/rmazzier/KENN-Citeseer-Experiments}}. 




\subsection{Citeseer Dataset}
The Citeseer dataset~\citep{citeseer} used in the evaluation is a citation network: the graph's nodes represent documents and the edges represent citations. The nodes' features are bag-of-words vectors, where an entry is zero if the corresponding word of the dictionary is absent in the document, and one if it is present. The classes to be predicted represent possible topics for a document.

The dataset contains 4732 nodes that must be classified in 6 different classes: AG, AI, DB, IR, ML and HCI.
The classification is obtained from the 3703 features of the nodes, with the addition of the information coming from the citations.

\subsection{The Prior Knowledge}

The Prior Knowledge codifies the idea that papers cite works that are related to them. This implies that the topic of a paper is often the same as the paper it cites. The clause
$$
\forall x \ \forall y \ \lnot T(x) \lor \lnot Cite(x,y) \lor T(y)
$$
is instantiated multiple times by substituting the topic $T$ with all the six classes.





\subsection{Experimental setup}
We used as base NN a dense network with three hidden layers, each with 50 hidden nodes and RELU activation function. This setting is the same as~\citep{RNM}, meaning that the results of \kenn\ are directly comparable with the results of SBR and RNM provided by such work since also the used Knowledge is the same.

Notice that the base NN takes as input only the features of the nodes and it does not take into account relations. For this reason, by adding the knowledge, we could expect improvements as compared to the base NN. Notice also that for the base NN model the two paradigms (inductive and transductive) are equivalent: the only
difference between the two is the presence or absence of relations between nodes in the training and test sets, and such relations are not taken into account by the NN. For this reason, we used the same initialization of the NN weights in both inductive and transductive learning (using the same seed). Indeed, the predictions of the NN are the same in the two cases, while models that make use of the knowledge perform much better on the transductive case since they have more information at their disposal.
Indeed, in KENN the relations are considered only on the level of the KE. This is true also for RNM and SBR. In the case of SBR, the difference is that it does not change the basic neural network structure, meaning that even at inference time the learned model does not take into account the citations. To obviate these problems, SBR optimizes the satisfaction of the constraints defined by the knowledge even at test time. This is a strong point in favor of \kenn\ since among the three methods is the only one that does not require to solve an optimization problem during inference. Indeed, one of the main advantages of \kenn\ is scalability.

Notice also that the citations are known a priori (the edges of the graph are given as inputs). Therefore, when applying KE, it is possible to focus only on pairs of documents for which the $Cite(x,y)$ predicate is known to be true. Indeed, all the grounded clauses in the Prior Knowledge are automatically satisfied if $Cite(x,y)$ is false for the specific grounding. This means that clauses are always satisfied when the pair of documents do not cite one another and that KE would not apply any changes for such pairs of objects. For this reason, $s_i^x$, $s_i^y$ and $\B$ were generated using only pairs of objects for which $Cite(x,y)$ is true, reducing both train and test time. 

Finally, since each paper can not be classified with multiple classes, the activation used was the $\softmax$ function. Notice that the results obtained in Section~\ref{preact} were proved with the assumption that the activation is a logistic function. Therefore, there is not theoretic evidence that this approach can work. However, the results obtained give us empirical evidence that \kenn\ can improve the base NN predictions even with $\softmax$ activation.

The training set dimension is changed multiple times to evaluate the efficacy of the three methods on the varying of training data. More precisely, tests have been conducted by selecting 10\%, 25\%, 50\%, 75\% and 90\% of the dataset for training. 

For each of these values the training and evaluation was performed multiple times, each of which with a different split of the dataset in training and test data. More in details, at each run the training set is created by selecting random nodes of the graph, with the contraints that the dataset must be balanced (i.e., each of the 6 classes have the same amount of nodes in the training data).

In~\citep{RNM}, the number of runs were ten for each percentage of training data and the final results were obtained as the mean of accuracy in those ten runs. This was justified by the fact that the choice of nodes in the Training Set has a strong effect on the learned model, and as a consequence on the test accuracy. However, during our experiments, we noticed that the standard deviations of both NN and KENN's results were rather high. As a consequence, when running multiple times the 10 iterations, the results were much different every time. For this reason, differently from~\citep{RNM}, for each training dimension we performed 500 runs instead of 10.


To evaluate the relevance of our results we calculated the p-values for each percentage of training data, where the considered null Hypothesis correponds to the supposition that the distribution of accuracies of the NN is the same as KENN. In other words, we calculated the probability of obtaining our results under the assumption that the additional knowledge injected by KENN is not really helpful. Moreover, for each experiment, we calculated a 95\% confidence interval on the improvements obtained by KENN.


%


\subsection{Results on Inductive Learning}
\label{sec:results_collective_inductive}
Table~\ref{inductive_results} shows the results of \kenn\ on the inductive version of the training. 

\begin{table}[H]
    \caption{Results in terms of accuracy on the inductive variant of the task ordered by the amount of data used for training. The first three columns contains the results reported in~\citep{RNM}.}
    \centering
    \begin{tabular}{c|lll|lll}
        \% training & NN & SBR & RNM & NN & \kenn\\
        \hline
        \rule{0pt}{3ex}\hspace{-0.12cm}
        10 & 0.645 & 0.650 & {\bfseries 0.685} & 0.544 & 0.601 \\
        &&  (+0.005) & (+0.040) && {\bfseries (+0.048)} \\
        \rule{0pt}{3ex}\hspace{-0.12cm}
        25 & 0.674 & 0.682 & {\bfseries 0.709} & 0.629 & 0.671 \\
        &&  (+0.008) & (+0.035) && {\bfseries (+0.041)}\\
        \rule{0pt}{3ex}\hspace{-0.12cm}
        50 & 0.707 & 0.712 & {\bfseries 0.726} & 0.680 & 0.714 \\
        &&  (+0.005) & (+0.019) && {\bfseries (+0.034)} \\
        \rule{0pt}{3ex}\hspace{-0.12cm}
        75 & 0.717 & 0.719 & 0.726 & 0.733 & {\bfseries 0.754} \\
        &&  (+0.002) & (+0.009) && {\bfseries (+0.021)} \\
        \rule{0pt}{3ex}\hspace{-0.12cm}
        90 & 0.723 & 0.726 & 0.732 & 0.759 & {\bfseries 0.768} \\
        &&  (+0.003) & (+0.009) && {\bfseries (+0.010)} \\
        \hline
        \hline
    \end{tabular}
    \label{inductive_results}
\end{table}

Notice that, although the final results are calculated as the mean of multiple runs, there are still some fluctuations and it is not possible to perfectly replicate the results obtained by~\citep{RNM} for the NN. For this reason, in the results table the NN has two columns, one with the results reported in~\citep{RNM}, another with the results obtained in our experiments. Since the results of the NN are slightly different in these two cases, the considered metric is the improvement over the base NN.

\begin{figure*}
    \makebox[\textwidth][c]{
        \includegraphics[scale=0.65]{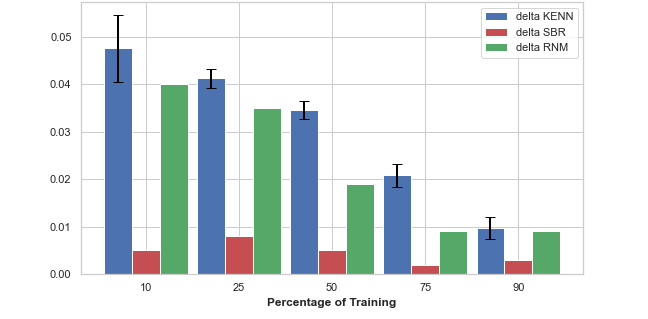}
    }
    \caption{Comparison between KENN, SBR and RNM in terms of improvements over the base NN on inductive learning. Black lines represent 95\% confidence intervals}
    \label{fig:inductive_comparison}
\end{figure*}

Figure~\ref{fig:inductive_comparison} shows the different improvements with respect to the NN model of the three methods at the varying of the percentage of training data. Moreover, the plot shows the confidence intervals for KENN improvements.

Notice that, since we don't have all the data of the RNM and SBR experiments, it is not possible to provide the same analysis also for these methods. Indeed, to calculate p-values and confidence intervals the standard deviation is required. For this reason, while we can be fairly confident about the ability of KENN to incorporate the knowledge into the base NN, the comparison with the other two methods is not as statistically relevant. Additionally, since the number of runs performed in those experiments are only ten for each percentage of training data (as opposed to 500 of the KENN experiments), we could expect much bigger confidence intervals for SBR and RNM improvements. However, we believe that this comparison can still provide us some useful information, in particular when comparing SBR with KENN and RNM. Indeed, the difference between SBR and the other two methods seems too large to depend only on the random choice of the nodes in the Training Set. Notice also that these results are in line with previous results obtained on VRD dataset, where another regularization approach (LTN) was compared with KENN~\citep{kenn}. Indeed, the results obtained in both VRD and Citeseer suggest better performances of model based approaches as compared to the one based on regularization. This could be explained by the ability of model based methods to learn clause weights and, as a consequence, to adapt to partially unsatisfied contraints.


Moreover, the behavior of KENN and RNM is consistent with the simple intuition that, when the training data is scarce, the usage of knowledge should bring higher benefits. That is because the knowledge becomes redundant when the amount of data is large enough and the NN can implicitly learn the given rules directly from the Training Set. Indeed, the two methods produce a large improvements in the smaller datasets and such improvements decrease when adding more samples in the Training Set, converging to similar results when the percentage of training is 90\%.

\begin{figure*}
    \makebox[\textwidth][c]{
        \includegraphics[scale=0.55]{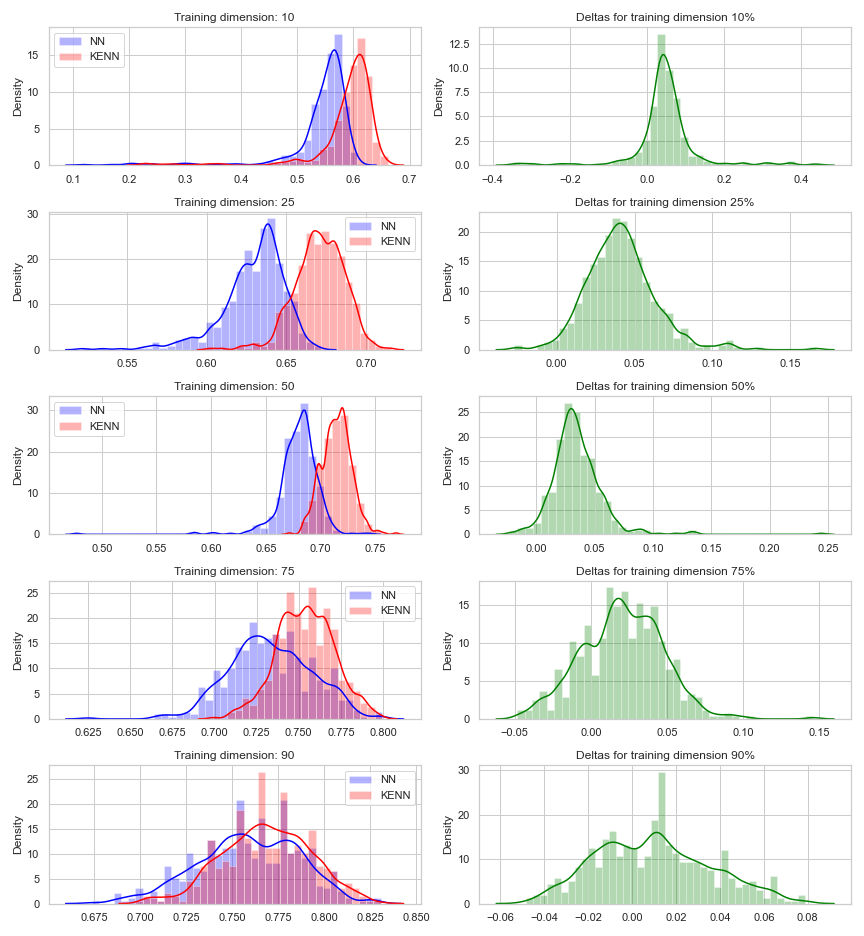}
    }
    \caption{Left: distributions of accuracies of the NN and KENN on 500 runs of Inductive Learning; Right: distributions of the improvements in accuracy produced by the KE.}
    \label{fig:inductive_histograms}
\end{figure*}

Finally, Figure~\ref{fig:inductive_histograms} shows the distribution of the accuracies achieved by the base NN and KENN (on the left side) as well as the distribution of the imporvements obtained by the injection of the logical rules (on the right). Based on these plots, we assumed these distributions to be normal, and we calculated the p-values for each training dimension. Since the number of runs, as well as the obtained improvements, are quite high, the resulting p-values are always close to zero. More in details, the p-values are all in the range from 6.6e-183 (for 25\% of the training data) to 2.9e-09 (in the case of 90\%). For this reason we can safely reject the null Hypothesis. In other words, we are very confident that the improvements given by KENN do not depend on the random choices of the splits of the dataset.

\subsection{Results on Transductive Learning}
\label{sec:results_collective_transductive}
Table~\ref{transductive_results} shows the results of the various models on the transductive learning task. Note that, as mentioned previously, the results of the NN provided by~\citep{RNM} are similar to the case of inductive learning. In our case the results are identical. This is because the seed was chosen to be the same in both inductive and transductive learning. This allow us to have an even better comparison of the behaviours of KENN in inductive and transductive learning. Indeed, while the neural network predictions are exactly the same in the two cases, the improvements obtained by adding the knowledge (in all three methods) are much higher when using the citations between training and test nodes.

\begin{table}
    \caption{Results in terms of accuracy on the transductive variant of the task ordered by the amount of data used for training. The first three columns contains the results reported in~\citep{RNM}.}
    \centering
    \begin{tabular}{c|lll|lll}
        \% training & NN & SBR & RNM & NN & \kenn \\
        \hline
        \rule{0pt}{3ex}\hspace{-0.12cm}
        10 & 0.640 & 0.703 & {\bfseries 0.708} & 0.544 & 0.652\\
        &&  (+0.063) & (+0.068) && {\bfseries (+0.108)}\\
        \rule{0pt}{3ex}\hspace{-0.12cm}
        25 & 0.667 & 0.729 & {\bfseries 0.735} & 0.629 & 0.702 \\
        &&  (+0.062) & (+0.068) && {\bfseries (+0.073)} \\
        \rule{0pt}{3ex}\hspace{-0.12cm}
        50 & 0.695 & 0.747 & {\bfseries 0.753} & 0.680 & 0.744 \\
        &&  (+0.052) & +0.058) && {\bfseries (+0.065)} \\
        \rule{0pt}{3ex}\hspace{-0.12cm}
        75 & 0.708 & 0.764 & 0.766 & 0.733 & {\bfseries 0.788}\\
        &&  (+0.056) & {\bfseries (+0.058)} && (+0.055)\\
        \rule{0pt}{3ex}\hspace{-0.12cm}
        90 & 0.726 & 0.780 & 0.780 & 0.759 & {\bfseries 0.808}\\
        &&  {\bfseries (+0.054)} & {\bfseries (+0.054)} && (+0.049)\\
        \hline
        \hline
    \end{tabular}
    \label{transductive_results}
\end{table}

By looking at Table~\ref{transductive_results} we can notice that, differently from the inductive case, the three methods have similar effects on the accuracies, with the only exception of the 10\% split of the data, where KENN produces a much higher improvement over the base NN. Notice however that such improvement could depend on the much smaller accuracies produces by the NN of our experiments as opposed to the one obtained by~\citep{RNM}.

\begin{figure*}
    \makebox[\textwidth][c]{
        \includegraphics[scale=0.55]{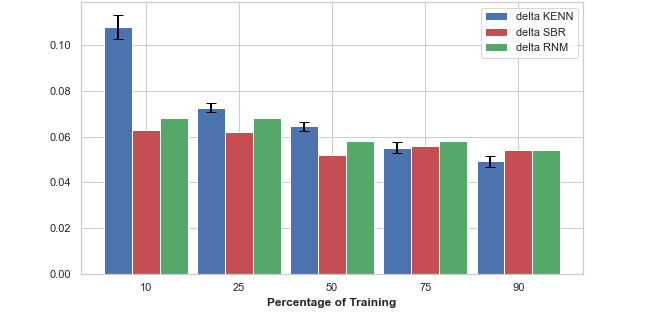}
    }
    \caption{Comparison between KENN, SBR and RNM in terms of improvements over the base NN on transductive learning. Black lines represent 95\% confidence intervals}
    \label{fig:transductive_comparison}
\end{figure*}

As before, Figure~\ref{fig:transductive_comparison} shows a comparison of the improvements in accuracies induced by the three methods. By looking at the confidence intervals, and considering that SBR and RNM's resuts are based on 10 iterations, we can not come to meaningfull conclusions on the comparison of the three methods. However, as before, we can safely conclude that KENN is sensibly improving the performances of the base NN (again, the p-values obtained are close to zero, and the confidence intervals are even smaller than the ones in the inductive case). This can be also noticed from the histograms in Figure~\ref{fig:transductive_histograms}, where the difference between KENN and the NN can easily be seen with naked eyes.

\begin{figure*}
    \makebox[\textwidth][c]{
        \includegraphics[scale=0.55]{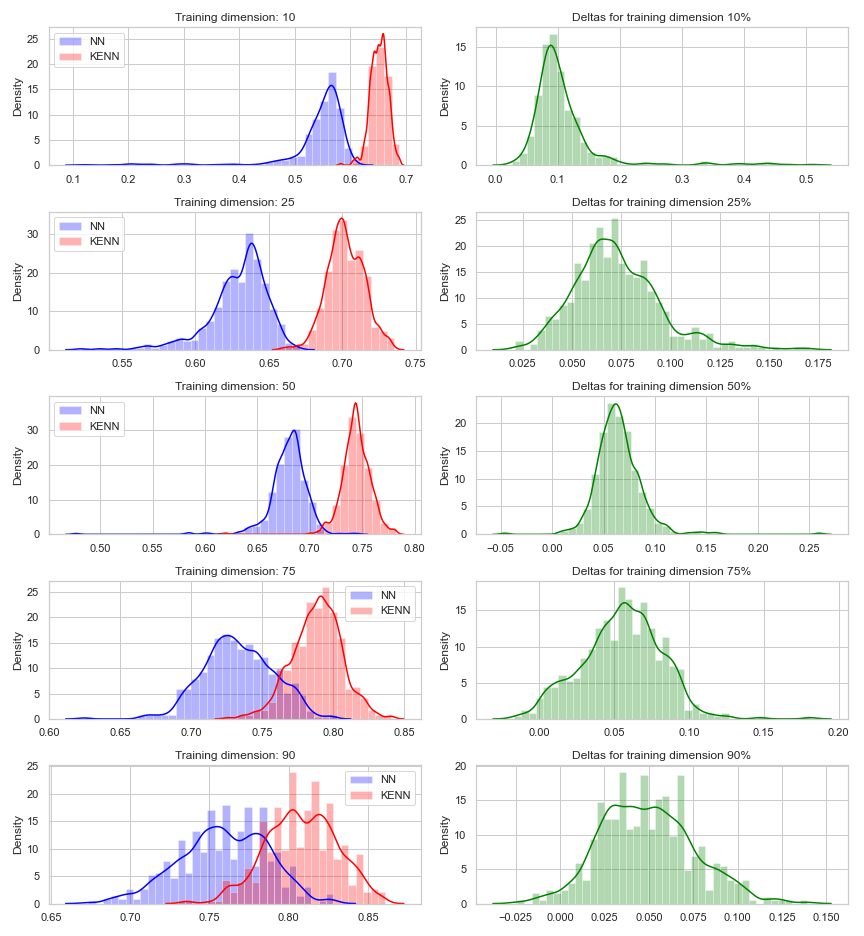}
    }
    \caption{Left: distributions of accuracies of the NN and KENN on 500 runs of Transductive Learning; Right: distributions of the improvements in accuracy produced by the KE.}
    \label{fig:transductive_histograms}
\end{figure*}

As mentioned previously, the three methods seem to have similar behaviours. In particular, we can not see anymore the large difference between KENN/RNM and SBR. To explain this behaviour, we can reason about the properties of transductive learning, where all the methods have at their own disposal informations about relations between training and test nodes. Like for inductive learning, the loss is calculated from the predictions on the training set nodes. However, if clause weights are high enough, the loss depends indirectly also on the values of the test nodes.
To explain this property, let's look at Figure~\ref{fig:bias_kenn}. As before, white nodes represent nodes of the test set. Suppose color red represents the class AI and blue ML. The value of the loss function does not depend directly on the value of node 9, since it is not in the training set. However, if the clause weight $w_{AI}$ of clause $c_{AI}: \forall x \ \forall y \ AI(x) \land Cite(x,y) \to AI(y)$ is high, then setting node 9 to class AI would force node 3 to be classified as AI as well. On the contrary, if we classify node 9 with the class ML, then node 3 would also be classified as ML (supposing high value also for $w_{ML}$).

\begin{figure}[H]
	\centering
\newcommand{\mynode}[4]{
    \node[draw, circle, fill=#4, minimum width=1cm] (O#1) at (#2, #3) {#1};
}

\newcommand{\myedge}[3]{
    \draw[color=black, ->] (O#1) to [bend left=#3] (O#2);
}

\begin{tikzpicture}[scale=0.45, node distance=1mm, align=left]
    \tikzset{every node}=[font=\fontsize{16}{0}\selectfont];

    \mynode{0}{0}{-15}{red!20}
    \mynode{1}{-2}{-12}{blue!20}
    \mynode{2}{3}{-12}{blue!20}
    \mynode{3}{-1}{-19}{red!20}
    
    \mynode{5}{16}{-12}{none}
    \mynode{6}{12}{-13}{none}
    \mynode{7}{14}{-16}{none}
    \mynode{8}{20}{-15}{none}
    \mynode{9}{17}{-18}{none}

    \myedge{2}{1}{0}
    \myedge{0}{3}{0}

    \myedge{5}{6}{0}
    \myedge{8}{7}{0}
    \myedge{7}{5}{0}
    \myedge{8}{5}{-5}

    \myedge{6}{2}{-5}
    \myedge{7}{2}{-5}
    \myedge{9}{3}{-5}

\end{tikzpicture}
    \caption{An extreme example of a graph in a transductive learning scenario: the white nodes are available during training, but not their classes. In this simple scenario, it is easy for the learning algorithm to make \kenn\ fit the data by forcing the satisfaction of the constraints and by classifying the test set nodes accordingly.}
    \label{fig:bias_kenn}
\end{figure}

Summarizing, in the simple scenario of Figure \ref{fig:bias_kenn}, it is convenient for the model to learn a high value for $w_{AI}$ and $w_{ML}$. By predicting nodes 9 and 8 as AI and ML respectively, the model will force also the training nodes to be correctly classified. As a consequence, the loss would be reduced and this kind of solution will be selected by the training process. In this case, the KE produces a sort of regularization similar to the one of LTN and SBR, since it rewards solutions that make the constraints satisfied. 

\section{Clauses weights learning}
\label{sec:scatter}

We analyzed the efficacy of \kenn\ on learning the clause weights by analyzing their values after training and comparing them with the level of satisfaction of the constraints on the training data. 
More in details, given a specific topic $k$\footnote{To simplify the notation, here we represent the topics as integers} and the corresponding clause ($\lnot k(x) \lor \lnot Cite(x,y) \lor k(y)$), we define the compliance of the Training Set with the constraint as the fraction of time in which papers of topic $k$ are cited by papers that are also of that topic.

More in details, lets define $G = (V, E, T)$ as the graph containing the ground truth labels of the training set. Here $V$ is the set of nodes (papers), $E$ the sets of edges (citations), and $T: V \to [1,k]$ a function that maps each paper to the correct topic. Given a topic $k$, we define the clause compliance of $k$ as:
$$
C(G, k) = \frac{\sum_{v \in T_k} \sum_{u \in \mathcal{N}(v)} \mathbf{1}(u \in T_k)}
{\sum_{v \in T_k} |\mathcal{N}(v)|}
$$
where $T_k = \{ v | v \in V \land T(v) = k \}$ is the set of nodes of topic $k$, $\mathcal{N}(v)$ is the function that returns the neighbours of node $v \in V$ and $\mathbf{1}(u \in T_k)$ is 1 if $u \in T_k$, 0 otherwise.

Figure~\ref{fig:scatter_plot} shows a series of scatter plots (one for each percentage of training data) representing the correlation between learned clause weights and training data compliance with the clauses. Each point corresponds to a topic in a specific run and its coordinates are given by the compliance value for the topic ($y$ axis) and the learned clause weight ($x$ axis). In total there are 3000 points, each of which corresponds to a specific topic and run in the inductive learning scenario.

The plots suggest a correlation between the two values, in particular with high percentages of training data. This means that, in this specific dataset, \kenn\ seems to be capable of learning the clause weights from the data at its disposal. 

    Moreover, we can notice an increase in variance when the constraints are only partially satisfied. For instance, the constraint applied on topic AI is in general satisfied half of the times and the values of the corresponding clause weight is more sparse as compared with other topics. This behaviour could be explained by the random initialization of the base NN parameters since the impact of a clause depends on the initial predictions of the NN. Indeed, as seen in Section~\ref{preact}, the knowledge is added at the level of the preactivations and it could produce bigger changes when the preactivations are close to zero (see Figure~\ref{fig:logistic} for more details). 

    This means that the CE produces different degree of changes to each node, and the random initialization of the base NN affects the choice of which nodes receive higher modifications. If the ones that are highly modified by the CE correspond to the nodes for which the constraints is satisfied, then the impact of the CE on the final prediction is positive. On the contrary, if the CE produce small changes when the constraint is satisifed, and big changes when it is not, then the CE produces a degradation of the performances and the corresponding clause weight is reduced. In other words, when the constraint is only partially satisfied, the effect of the associated CE is more randomic and, as a consequence, the clause weight distribution is more sparse.




\begin{figure*}
    \makebox[\textwidth][c]{
        \includegraphics[scale=0.45]{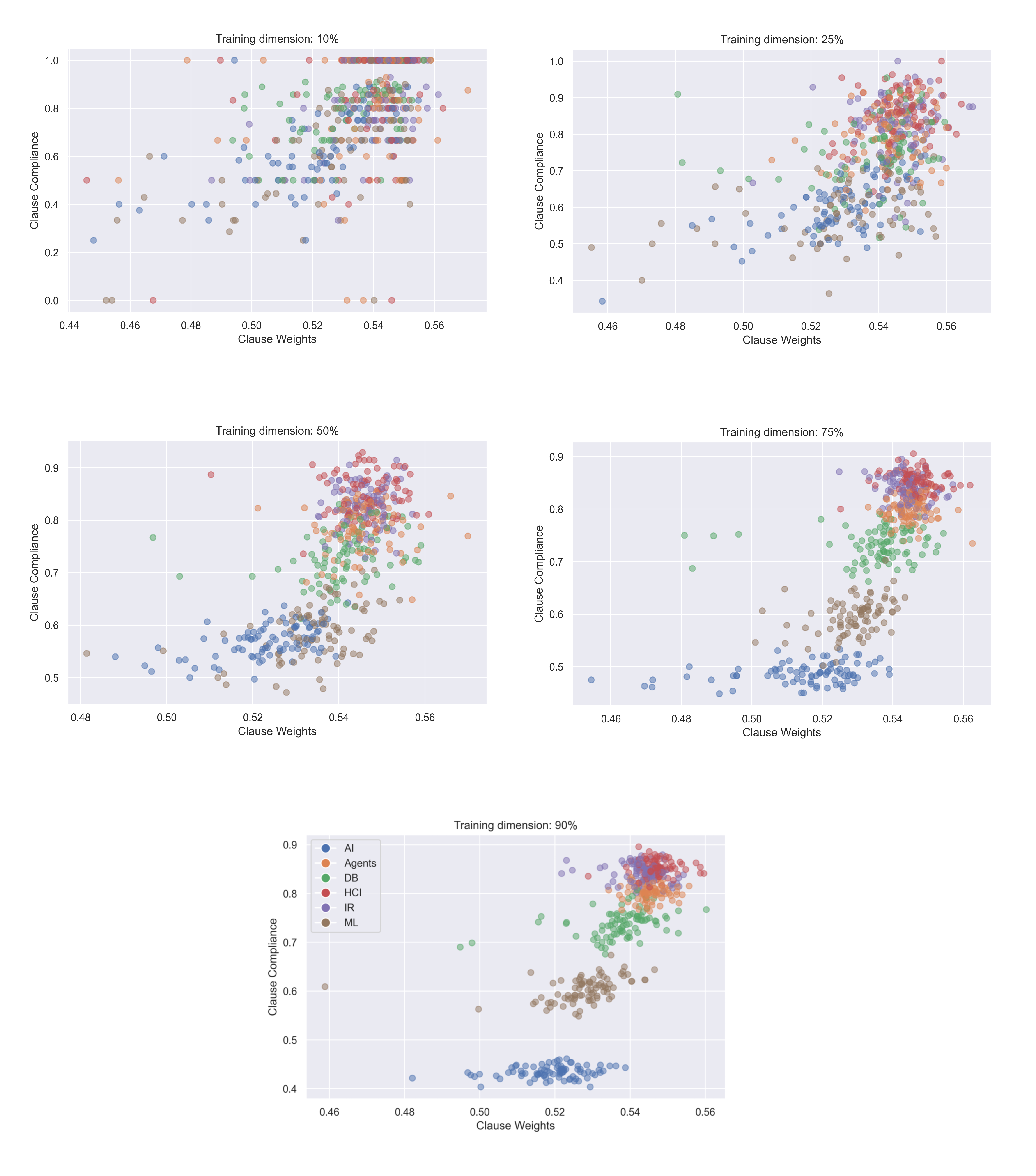}
    }
    \caption{Correlation between learned clause weights and clauses satisfaction on Training Set for the five percentages of training data}
    \label{fig:scatter_plot}
\end{figure*}

\section{Conclusions and future work}\label{sec:conclusions}

We proposed an extension of \kenn, a method for injecting Prior Knowledge expressed as logical clauses into a neural network model, merging in this way the learning capabilities of Neural Networks with the expressivity of First Order Logic. \kenn\ can be used in combination with many different types of neural networks and differs from other methods on the way logic is injected, that is by adding a new differentiable layer (the Knowledge Enhancer) at the top of a NN which provides predictions for the atomic formulas of the logic clauses.

Moreover, it has been introduced the concept of t-conorm boost function (TBF), which is a function that modifies the truth values of the literals of a clause to increase its satisfaction in terms of a fuzzy logic t-conorm. Furthermore, we introduced the concept of minimality of a TBF and provided a formal proof of the minimality of a specific function $\delta^f$ in respect to the G\"{o}del t-conorm. A soft approximation of $\delta^f$ is used inside the Clause Enhancer, a submodule of the KE.

In addition, the KE contains clause weights, which are learnable parameters that represent the strength of the clauses. At the best of our knowledge, \kenn\ and RNM are the only approaches that can merge Neural Networks models and logical Prior Knowledge while learning the clause weights. The two methods work similarly, in the sense that both the methods use a neural network to predict the initial classification which is then updated based on the knowledge. The difference between the two can be summarized in the different choices made for this second step: \kenn\ integrate it directly into the base NN as a differentiable function but it considers each rule separately and integrates the results via a linear combination; RNM instead take into account the entire knowledge at the same time, but it requires to solve an optimization problem (MAP estimation) at inference time and at each training step.

The ability to learn clause weights makes \kenn\ suitable for tasks where the provided knowledge is not totally reliable or when some of the clauses are not hard constraints and it is not known a priori the strength of the constraints encoded in the clauses. This has proven to be a major advantage in previous experiments with VRD Dataset, where the results obtained by \kenn\ outperformed LTN using the same type of knowledge. 
Moreover, the experiments on Citeseer provide other evidence in favour of approaches that integrate the knowledge into the model, since both \kenn\ and RNM obtained higher results than SBR. Finally, the ability of \kenn\ of learning the clause weights have been analyzed in the context of the Citeseer dataset, where a correlation between learned weights and clause satisfaction has been found. However, a more in depth analysis on the learned clause weights is neccessary to better understand such correlation, in particular when working with different datasets and with less satisfied clauses.
Finally, notice that one of the advantages of learning the clause weights is that in principle it should be possible to produce new knowledge by generating random clauses and let \kenn\ validate them by learning the weights. This approach could be the focus of further experiments.

While \kenn\ proved to be able to effectively add the knowledge into the base NN with good results in multiple datasets, the flexibility in terms of usable knowledge is lower as compared to the other methods, in particular because of the absence of the existential quantifier. This could be the goal of future developments. Notice that, if such a change is done under the Close World Assumption, this should be a quite trivial extension to develop. Another missing feature is the possibility to represent functions. In this case, however, it is not a straightforward task. Further investigations are needed in this direction.

Another important contribution is in the solution found to manage relational data: the KE has been developed to work on a matrix where each column represents a unary predicate and the rows the objects of the domain. This means that the KE does not work with relational data. On the contrary, it works under the i.i.d. assumption. Instead of developing a version of the KE for relational data, the graph structure is managed by a pre elaboration step which provides the KE with a matrix in the required format, and a post elaboration step that aggregates the outputs generated by the KE to obtain the final results which are compliant with the relational structure of the domain. The main contribution is on the way these pre and post elaborations steps work: they are represented as queries of a relational database. Indeed, the data structure used to store a graph is very similar to the one of a relational database. However, in this case, such database is implemented in TensorFlow and the queries are implemented as differentiable operators. This strategy was used only in the context of \kenn, but it could be the focus of further investigation. Indeed the idea of a differentiable relational database could provide a very convenient way to create Neural Network architectures for relational domains.

\section{Acknowledgements}
We would like to thank Riccardo Mazzieri for his help on conducting the experiments on Citeseer. Many thanks also to Artur d'Avila Garcez, Alessandro Sperduti, Marco Gori, and Tommaso Carraro for useful suggestions and feedbacks.

\section{Funding}
This work has been partially supported by the EU H2020-ICT-2018-2 RIA project AI4EU  (grant agreement 825619).

\bibliography{references} 

\begin{thebibliography}{47}
\providecommand{\natexlab}[1]{#1}
\providecommand{\url}[1]{\texttt{#1}}
\expandafter\ifx\csname urlstyle\endcsname\relax
  \providecommand{\doi}[1]{doi: #1}\else
  \providecommand{\doi}{doi: \begingroup \urlstyle{rm}\Url}\fi

\bibitem[Bach et~al.(2017)Bach, Broecheler, Huang, and Getoor]{PSL}
Stephen~H. Bach, Matthias Broecheler, Bert Huang, and Lise Getoor.
\newblock Hinge-loss {M}arkov random fields and probabilistic soft logic.
\newblock \emph{Journal of Machine Learning Research}, 18\penalty0
  (109):\penalty0 1--67, 2017.

\bibitem[Bahdanau et~al.(2014)Bahdanau, Cho, and Bengio]{machine_translation}
Dzmitry Bahdanau, Kyunghyun Cho, and Yoshua Bengio.
\newblock Neural machine translation by jointly learning to align and
  translate.
\newblock \emph{arXiv preprint arXiv:1409.0473}, 2014.

\bibitem[Besold et~al.(2017)Besold, d'Avila Garcez, Bader, Bowman, Domingos,
  Hitzler, K{\"{u}}hnberger, Lamb, Lowd, Lima, de~Penning, Pinkas, Poon, and
  Zaverucha]{neural_symbolic}
Tarek~R. Besold, Artur~S. d'Avila Garcez, Sebastian Bader, Howard Bowman,
  Pedro~M. Domingos, Pascal Hitzler, Kai{-}Uwe K{\"{u}}hnberger, Lu{\'{\i}}s~C.
  Lamb, Daniel Lowd, Priscila Machado~Vieira Lima, Leo de~Penning, Gadi Pinkas,
  Hoifung Poon, and Gerson Zaverucha.
\newblock Neural-symbolic learning and reasoning: {A} survey and
  interpretation.
\newblock \emph{CoRR}, abs/1711.03902, 2017.
\newblock URL \url{http://arxiv.org/abs/1711.03902}.

\bibitem[Bianchi and Hitzler(2019)]{LTN_reasoning}
Federico Bianchi and Pascal Hitzler.
\newblock On the capabilities of logic tensor networks for deductive reasoning.
\newblock In \emph{AAAI Spring Symposium: Combining Machine Learning with
  Knowledge Engineering}, 2019.

\bibitem[Campero et~al.(2018)Campero, Pareja, Klinger, Tenenbaum, and
  Riedel]{rule_induction}
Andres Campero, Aldo Pareja, Tim Klinger, Josh Tenenbaum, and Sebastian Riedel.
\newblock Logical rule induction and theory learning using neural theorem
  proving.
\newblock \emph{arXiv preprint arXiv:1809.02193}, 2018.

\bibitem[Cohen(2016)]{tensorlog}
William~W Cohen.
\newblock Tensorlog: A differentiable deductive database.
\newblock \emph{arXiv preprint arXiv:1605.06523}, 2016.

\bibitem[Daniele and Serafini(2019)]{kenn}
Alessandro Daniele and Luciano Serafini.
\newblock Knowledge enhanced neural networks.
\newblock In Abhaya~C. Nayak and Alok Sharma, editors, \emph{PRICAI 2019:
  Trends in Artificial Intelligence}, pages 542--554, Cham, 2019. Springer
  International Publishing.
\newblock ISBN 978-3-030-29908-8.

\bibitem[de~Jong and Sha(2019)]{NTP_do_not}
Michiel de~Jong and Fei Sha.
\newblock Neural theorem provers do not learn rules without exploration.
\newblock \emph{arXiv preprint arXiv:1906.06805}, 2019.

\bibitem[De~Raedt et~al.(2007)De~Raedt, Kimmig, and Toivonen]{problog}
Luc De~Raedt, Angelika Kimmig, and Hannu Toivonen.
\newblock Problog: A probabilistic prolog and its application in link
  discovery.
\newblock In \emph{IJCAI}, volume~7, pages 2462--2467. Hyderabad, 2007.

\bibitem[Detassis et~al.(2020)Detassis, Lombardi, and Milano]{old_dog}
Fabrizio Detassis, Michele Lombardi, and Michela Milano.
\newblock Teaching the old dog new tricks: Supervised learning with
  constraints.
\newblock \emph{arXiv preprint arXiv:2002.10766}, 2020.

\bibitem[Diligenti et~al.(2017)Diligenti, Gori, and Sacc{\`{a}}]{SBR}
Michelangelo Diligenti, Marco Gori, and Claudio Sacc{\`{a}}.
\newblock Semantic-based regularization for learning and inference.
\newblock \emph{Artif. Intell.}, 244:\penalty0 143--165, 2017.

\bibitem[Donadello(2018)]{IvanThesis}
Ivan Donadello.
\newblock \emph{Semantic Image Interpretation - Integration of Numerical Data
  and Logical Knowledge for Cognitive Vision}.
\newblock PhD thesis, Trento Univ., Italy, 2018.

\bibitem[Donadello et~al.(2017)Donadello, Serafini, and d'Avila
  Garcez]{DonadelloSG17}
Ivan Donadello, Luciano Serafini, and Artur~S. d'Avila Garcez.
\newblock Logic tensor networks for semantic image interpretation.
\newblock In Carles Sierra, editor, \emph{Proceedings of the Twenty-Sixth
  International Joint Conference on Artificial Intelligence, {IJCAI} 2017,
  Melbourne, Australia, August 19-25, 2017}, pages 1596--1602. ijcai.org, 2017.
\newblock ISBN 978-0-9992411-0-3.
\newblock \doi{10.24963/ijcai.2017/221}.
\newblock URL \url{https://doi.org/10.24963/ijcai.2017/221}.

\bibitem[Dong et~al.(2019)Dong, Mao, Lin, Wang, Li, and Zhou]{NLM}
Honghua Dong, Jiayuan Mao, Tian Lin, Chong Wang, Lihong Li, and Denny Zhou.
\newblock Neural logic machines.
\newblock \emph{arXiv preprint arXiv:1904.11694}, 2019.

\bibitem[Evans and Grefenstette(2018)]{dilp}
Richard Evans and Edward Grefenstette.
\newblock Learning explanatory rules from noisy data.
\newblock \emph{Journal of Artificial Intelligence Research}, 61:\penalty0
  1--64, 2018.

\bibitem[Fischer et~al.(2019)Fischer, Balunovic, Drachsler-Cohen, Gehr, Zhang,
  and Vechev]{dl2}
Marc Fischer, Mislav Balunovic, Dana Drachsler-Cohen, Timon Gehr, Ce~Zhang, and
  Martin Vechev.
\newblock Dl2: Training and querying neural networks with logic.
\newblock In \emph{International Conference on Machine Learning}, pages
  1931--1941, 2019.

\bibitem[Fran{\c{c}}a et~al.(2014)Fran{\c{c}}a, Zaverucha, and Garcez]{CILP++}
Manoel~VM Fran{\c{c}}a, Gerson Zaverucha, and Artur S~d’Avila Garcez.
\newblock Fast relational learning using bottom clause propositionalization
  with artificial neural networks.
\newblock \emph{Machine learning}, 94\penalty0 (1):\penalty0 81--104, 2014.

\bibitem[Garcez et~al.(2019)Garcez, Gori, Lamb, Serafini, Spranger, and
  Tran]{garcez_nesy}
Artur~d'Avila Garcez, Marco Gori, Luis~C Lamb, Luciano Serafini, Michael
  Spranger, and Son~N Tran.
\newblock Neural-symbolic computing: An effective methodology for principled
  integration of machine learning and reasoning.
\newblock \emph{arXiv preprint arXiv:1905.06088}, 2019.

\bibitem[Garcez and Zaverucha(1999)]{CILP}
Artur S~Avila Garcez and Gerson Zaverucha.
\newblock The connectionist inductive learning and logic programming system.
\newblock \emph{Applied Intelligence}, 11\penalty0 (1):\penalty0 59--77, 1999.

\bibitem[Haykin(1999)]{XOR2}
S~Haykin.
\newblock Neural networks, a comprehensive foundation second edition by
  prentice-hall, 1999.

\bibitem[Hinton et~al.(2012)Hinton, Deng, Yu, Dahl, Mohamed, Jaitly, Senior,
  Vanhoucke, Nguyen, Kingsbury, et~al.]{speech_recognition}
Geoffrey Hinton, Li~Deng, Dong Yu, George Dahl, Abdel-rahman Mohamed, Navdeep
  Jaitly, Andrew Senior, Vincent Vanhoucke, Patrick Nguyen, Brian Kingsbury,
  et~al.
\newblock Deep neural networks for acoustic modeling in speech recognition.
\newblock \emph{IEEE Signal processing magazine}, 29, 2012.

\bibitem[Hu et~al.(2016)Hu, Ma, Liu, Hovy, and Xing]{harnessing}
Zhiting Hu, Xuezhe Ma, Zhengzhong Liu, Eduard~H. Hovy, and Eric~P. Xing.
\newblock Harnessing deep neural networks with logic rules.
\newblock In \emph{Proceedings of the 54th Annual Meeting of the Association
  for Computational Linguistics, {ACL} 2016, August 7-12, 2016, Berlin,
  Germany, Volume 1: Long Papers}. The Association for Computer Linguistics,
  2016.
\newblock ISBN 978-1-945626-00-5.
\newblock URL \url{http://aclweb.org/anthology/P/P16/P16-1228.pdf}.

\bibitem[Jiang and Luo(2019)]{dilp_reinforcment}
Zhengyao Jiang and Shan Luo.
\newblock Neural logic reinforcement learning.
\newblock \emph{arXiv preprint arXiv:1904.10729}, 2019.

\bibitem[Kaur et~al.(2019)Kaur, Kunapuli, Joshi, Kersting, and
  Natarajan]{NN_relational}
Navdeep Kaur, Gautam Kunapuli, Saket Joshi, Kristian Kersting, and Sriraam
  Natarajan.
\newblock Neural networks for relational data.
\newblock In \emph{International Conference on Inductive Logic Programming},
  pages 62--71. Springer, 2019.

\bibitem[Koller et~al.(2007)Koller, Friedman, D{\v{z}}eroski, Sutton, McCallum,
  Pfeffer, Abbeel, Wong, Heckerman, Meek, et~al.]{introduction_SRL}
Daphne Koller, Nir Friedman, Sa{\v{s}}o D{\v{z}}eroski, Charles Sutton, Andrew
  McCallum, Avi Pfeffer, Pieter Abbeel, Ming-Fai Wong, David Heckerman, Chris
  Meek, et~al.
\newblock \emph{Introduction to statistical relational learning}.
\newblock MIT press, 2007.

\bibitem[Krizhevsky et~al.(2012)Krizhevsky, Sutskever, and
  Hinton]{image_classification}
Alex Krizhevsky, Ilya Sutskever, and Geoffrey~E. Hinton.
\newblock Imagenet classification with deep convolutional neural networks.
\newblock In \emph{Proceedings of the 25th International Conference on Neural
  Information Processing Systems - Volume 1}, NIPS'12, pages 1097--1105, USA,
  2012. Curran Associates Inc.
\newblock URL \url{http://dl.acm.org/citation.cfm?id=2999134.2999257}.

\bibitem[Li and Srikumar(2019)]{augmenting}
Tao Li and Vivek Srikumar.
\newblock Augmenting neural networks with first-order logic.
\newblock In \emph{Proceedings of the 57th Annual Meeting of the Association
  for Computational Linguistics}, pages 292--302, Florence, Italy, July 2019.
  Association for Computational Linguistics.
\newblock \doi{10.18653/v1/P19-1028}.
\newblock URL \url{https://www.aclweb.org/anthology/P19-1028}.

\bibitem[Lu et~al.(2016)Lu, Krishna, Bernstein, and Li]{visual2}
Cewu Lu, Ranjay Krishna, Michael~S. Bernstein, and Fei{-}Fei Li.
\newblock Visual relationship detection with language priors.
\newblock In \emph{{ECCV} {(1)}}, volume 9905 of \emph{Lecture Notes in
  Computer Science}, pages 852--869. Springer, 2016.

\bibitem[Lu and Getoor(2003)]{citeseer}
Qing Lu and Lise Getoor.
\newblock Link-based classification.
\newblock In \emph{Proceedings of the Twentieth International Conference on
  International Conference on Machine Learning}, ICML’03, page 496–503.
  AAAI Press, 2003.
\newblock ISBN 1577351894.

\bibitem[Manhaeve et~al.(2018)Manhaeve, Dumancic, Kimmig, Demeester, and
  De~Raedt]{deepproblog}
Robin Manhaeve, Sebastijan Dumancic, Angelika Kimmig, Thomas Demeester, and Luc
  De~Raedt.
\newblock Deepproblog: Neural probabilistic logic programming.
\newblock In \emph{Advances in Neural Information Processing Systems}, pages
  3749--3759, 2018.

\bibitem[Marra et~al.(2020)Marra, Diligenti, Giannini, Gori, and Maggini]{RNM}
Giuseppe Marra, Michelangelo Diligenti, Francesco Giannini, Marco Gori, and
  Marco Maggini.
\newblock Relational neural machines.
\newblock \emph{arXiv preprint arXiv:2002.02193}, 2020.

\bibitem[Minervini and Riedel(2018)]{adversarially}
Pasquale Minervini and Sebastian Riedel.
\newblock Adversarially regularising neural nli models to integrate logical
  background knowledge.
\newblock \emph{arXiv preprint arXiv:1808.08609}, 2018.

\bibitem[Pearl(2014)]{MRF}
Judea Pearl.
\newblock \emph{Probabilistic reasoning in intelligent systems: networks of
  plausible inference}.
\newblock Elsevier, 2014.

\bibitem[Reimann and Schwung(2019)]{NLRL}
Jan~Niclas Reimann and Andreas Schwung.
\newblock Neural logic rule layers.
\newblock \emph{arXiv preprint arXiv:1907.00878}, 2019.

\bibitem[Richardson and Domingos(2006)]{MLN}
Matthew Richardson and Pedro Domingos.
\newblock Markov logic networks.
\newblock \emph{Mach. Learn.}, 62\penalty0 (1-2):\penalty0 107--136, February
  2006.
\newblock ISSN 0885-6125.

\bibitem[Rockt{\"a}schel and Riedel(2016)]{NTP1}
Tim Rockt{\"a}schel and Sebastian Riedel.
\newblock Learning knowledge base inference with neural theorem provers.
\newblock In \emph{Proceedings of the 5th Workshop on Automated Knowledge Base
  Construction}, pages 45--50, 2016.

\bibitem[Rockt{\"a}schel and Riedel(2017)]{NTP2}
Tim Rockt{\"a}schel and Sebastian Riedel.
\newblock End-to-end differentiable proving.
\newblock In \emph{Advances in Neural Information Processing Systems}, pages
  3788--3800, 2017.

\bibitem[Rockt{\"a}schel et~al.(2014)Rockt{\"a}schel, Bosnjak, Singh, and
  Riedel]{low_dimensional}
Tim Rockt{\"a}schel, Matko Bosnjak, Sameer Singh, and Sebastian Riedel.
\newblock Low-dimensional embeddings of logic.
\newblock In \emph{Proceedings of the ACL 2014 Workshop on Semantic Parsing},
  pages 45--49, 2014.

\bibitem[Sen et~al.(2008)Sen, Namata, Bilgic, Getoor, Gallagher, and
  Eliassi-Rad]{collective_classification}
Prithviraj Sen, Galileo~Mark Namata, Mustafa Bilgic, Lise Getoor, Brian
  Gallagher, and Tina Eliassi-Rad.
\newblock Collective classification in network data.
\newblock \emph{AI Magazine}, 29\penalty0 (3):\penalty0 93--106, 2008.

\bibitem[Serafini and d'Avila Garcez(2016)]{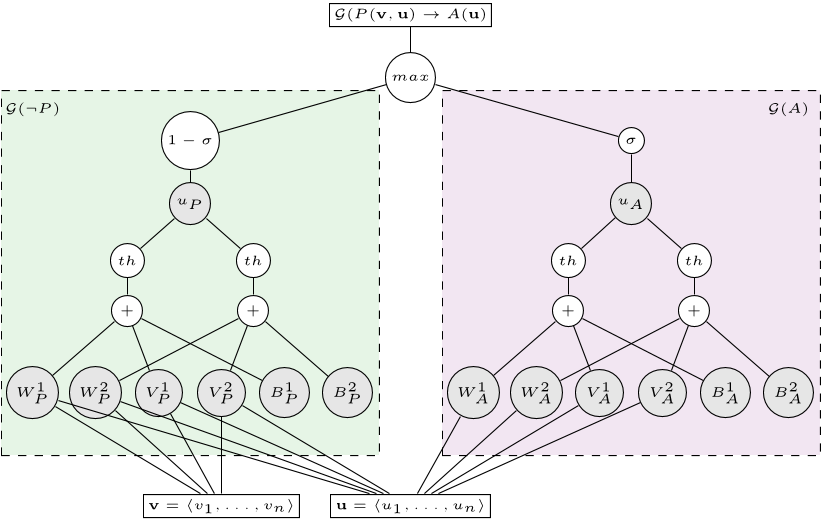}
Luciano Serafini and Artur~S. d'Avila Garcez.
\newblock Logic tensor networks: Deep learning and logical reasoning from data
  and knowledge.
\newblock \emph{CoRR}, abs/1606.04422, 2016.

\bibitem[Touretzky and Pomerleau(1989)]{XOR1}
David~S Touretzky and Dean~A Pomerleau.
\newblock What’s hidden in the hidden layers.
\newblock \emph{Byte}, 14\penalty0 (8):\penalty0 227--233, 1989.

\bibitem[Towell and Shavlik(1994)]{KBANN}
Geoffrey~G. Towell and Jude~W. Shavlik.
\newblock Knowledge-based artificial neural networks.
\newblock \emph{Artif. Intell.}, 70\penalty0 (1-2):\penalty0 119--165, October
  1994.
\newblock ISSN 0004-3702.
\newblock \doi{10.1016/0004-3702(94)90105-8}.
\newblock URL \url{http://dx.doi.org/10.1016/0004-3702(94)90105-8}.

\bibitem[Van~Krieken et~al.(2019)Van~Krieken, Acar, and
  Van~Harmelen]{semi_supervised}
Emile Van~Krieken, Erman Acar, and Frank Van~Harmelen.
\newblock Semi-supervised learning using differentiable reasoning.
\newblock \emph{arXiv preprint arXiv:1908.04700}, 2019.

\bibitem[Wang and Domingos(2008)]{HMLN}
Jue Wang and Pedro Domingos.
\newblock Hybrid markov logic networks.
\newblock In \emph{Proceedings of the 23rd National Conference on Artificial
  Intelligence}, volume~2 of \emph{AAAI'08}, pages 1106--1111. AAAI Press,
  2008.
\newblock ISBN 978-1-57735-368-3.

\bibitem[Wang et~al.(2019)Wang, Donti, Wilder, and Kolter]{satnet}
Po-Wei Wang, Priya~L Donti, Bryan Wilder, and Zico Kolter.
\newblock Satnet: Bridging deep learning and logical reasoning using a
  differentiable satisfiability solver.
\newblock \emph{arXiv preprint arXiv:1905.12149}, 2019.

\bibitem[Xu et~al.(2018)Xu, Zhang, Friedman, Liang, and Van~den
  Broeck]{semantic_loss}
Jingyi Xu, Zilu Zhang, Tal Friedman, Yitao Liang, and Guy Van~den Broeck.
\newblock A semantic loss function for deep learning with symbolic knowledge.
\newblock In Jennifer Dy and Andreas Krause, editors, \emph{Proceedings of the
  35th International Conference on Machine Learning}, volume~80 of
  \emph{Proceedings of Machine Learning Research}, pages 5502--5511,
  Stockholmsmässan, Stockholm Sweden, 10--15 Jul 2018. PMLR.
\newblock URL \url{http://proceedings.mlr.press/v80/xu18h.html}.

\bibitem[Yang et~al.(2017)Yang, Yang, and Cohen]{tensorlog_ILP}
Fan Yang, Zhilin Yang, and William~W Cohen.
\newblock Differentiable learning of logical rules for knowledge base
  reasoning.
\newblock In \emph{Advances in Neural Information Processing Systems}, pages
  2319--2328, 2017.

\end{thebibliography}

\end{document}